\pdfoutput=1
\documentclass[11pt]{article}

\usepackage{amsmath,amssymb,amsthm,mathtools,fullpage}
\usepackage{graphicx,tikz}
\graphicspath{{figures/}}
\usetikzlibrary{positioning,arrows.meta,calc,fit}
\usepackage[colorlinks=true,linkcolor=blue,citecolor=blue,urlcolor=blue]{hyperref}
\usepackage{empheq} % for the boxed multi-line equation of thm 1
\usepackage{booktabs,array}
\newcolumntype{L}[1]{>{\raggedright\arraybackslash}p{#1}}
\usepackage{authblk}

\title{What Does a Discrete Diffusion Model Learn?}

\author[1]{Rodrigo Casado Noguerales\thanks{Correspondence: \texttt{rcasado@ethz.ch}}}
\author[1,2,3]{Bernhard Schölkopf}
\author[1]{Thomas Hofmann}
\author[1]{Aran Raoufi}

\affil[1]{ETH Zurich}
\affil[2]{Max Planck Institute for Intelligent Systems, Tübingen}
\affil[3]{ELLIS Institute Tübingen}

\date{\today}

\theoremstyle{plain}
\newtheorem{theorem}{Theorem}
\newtheorem{lemma}{Lemma}
\newtheorem{proposition}{Proposition}
\newtheorem{corollary}{Corollary}
\newtheorem{assumption}{Assumption}
\theoremstyle{definition}
\newtheorem{definition}{Definition}
\newtheorem{remark}{Remark}

\newcommand{\R}{\mathbb{R}}
\newcommand{\E}{\mathbb{E}}
\renewcommand{\P}{\mathbb{P}}

\newcommand{\X}{\mathcal{X}}
\newcommand{\Y}{\mathcal{Y}}
\newcommand{\J}{\mathcal{J}}
\newcommand{\Z}{\mathcal{Z}}
\DeclareMathOperator{\supp}{supp}
\DeclareMathOperator{\ELBO}{ELBO}
\DeclareMathOperator{\NELBO}{NELBO}
\DeclareMathOperator{\KL}{KL}
\newcommand{\one}{\mathbf{1}}
\newcommand{\given}{\mid}
\newcommand{\KLdiv}[2]{\KL\!\left(#1\,\middle\|\,#2\right)}
\newcommand{\ISdiv}[2]{D_{\mathrm{IS}}\!\left(#1\,\middle\|\,#2\right)}
\newcommand{\Exp}[2]{\E_{#1}\!\left[#2\right]}
\newcommand{\sboxed}[1]{\boxed{\;#1\;}}
\newcommand{\Law}{\mathrm{Law}}
\newcommand{\pdata}{p_{\mathrm{data}}}
\newcommand{\RevQ}{\widehat{Q}}
\begin{document}

\maketitle
\begin{abstract}
  What does a discrete diffusion model learn: a denoiser, a score ratio, or a bridge plug-in predictor? At the level of jump rates, these are one object in different coordinates, and reading a neural network in the wrong coordinate changes the process being trained and sampled. Starting with a rigorous derivation of the continuous-time Markov chain (CTMC) ELBO for any noising process, boundary terms included, we prove the \emph{Oracle Distance} theorem: the negative ELBO is exactly equal to the data entropy plus the path KL from the oracle reverse process to the learned one, not merely a bound. Its unique optimizer is therefore the conditional expectation of the true reverse jump rate given the current noisy state, and its irreducible cost is the rate at which the forward process $Z_t$ destroys information about the clean data $Z_0$, $-\tfrac{d}{dt}I(Z_0; Z_t)$, so every noising process shares the same best achievable negative ELBO: the data entropy. For sequences with token-factorizing noise, the oracle projection yields three exact coordinates for the optimizer: denoiser, cavity (bridge plug-in), and score, with closed-form conversions among them. This framework identifies which law each loss in the literature actually optimizes, recovering MDM, UDM, SEDD, and GIDD as special cases; explains why denoiser and cavity coincide for masked diffusion but not for uniform diffusion; proves that a denoiser parameterization makes the uniform ELBO diverge at initialization while the bridge plug-in stays finite; and calibrates ELBO implementations exactly at initialization. Every identity is verified numerically, without approximation, on an exactly solvable model.
\end{abstract}
\newpage

\tableofcontents
\newpage

\section{Introduction}

Discrete diffusion models offer a natural route to generative modeling on language and other categorical data: corrupt a sequence directly on its alphabet, then learn to run the corruption backward~\cite{austin2021d3pm,hoogeboom2021argmax}. Continuous-time Markov chain (CTMC) formulations make this precise, replacing a chosen finite sampling grid by a genuine jump process, so that a generative model is determined by the reverse jump rates it induces~\cite{campbell2022continuous}.

This paper starts from a simple observation. In the discrete-diffusion literature the learned reverse process is described through several apparently different objects (denoisers, score ratios, bridge plug-ins, leave-one-out predictors, masked-token losses, finite-step reverse kernels~\cite{lou2024sedd,sahoo2024mdlm,vonruette2025gidd}), often treated as alternative names for the same model. They are not. A neural network's output is not a reverse process until one specifies how it is converted into jump rates, and reading the same output in the wrong coordinate changes the continuous-time Markov chain being optimized and sampled from.

This distinction is not purely academic. In uniform diffusion, the standard bridge-plug-in objective does not train an ordinary denoiser that predicts the clean token from the full noisy sequence; it trains a \emph{cavity} law, a predictor that uses the noisy context but does not condition on the local noisy token it is about to revise. Masked diffusion hides this because, at a masked position, the local observation carries no information about the clean token, so denoiser and cavity coincide. Uniform and hybrid diffusion have no such cancellation, and for them the conversion between coordinates is an analytic part of the model, not an implementation detail.

The central question is therefore: \emph{what does negative-ELBO minimization actually optimize?} Our answer is path-based. The continuous-time ELBO is not a loose surrogate for likelihood but an exact path-law divergence between two reverse jump processes, the oracle process determined by the forward noising law and the learned process induced by the model, an identity we call the \emph{Oracle Distance} (Theorem~\ref{thm:nelbo-pathkl}). Training therefore makes the model's entire reverse \emph{trajectory}, not only its endpoint distribution, match the oracle's.

At the infinitesimal level, training observes the clean data and so has access to the clean-conditioned reverse bridge; sampling does not, and the learned rate may depend only on the noisy state currently visited. The ELBO therefore performs a \emph{projection}: it averages the clean-conditioned bridge rate over the posterior information available at the noisy state, and the result is exactly the marginal reverse rate needed for generation, the jump-process analogue of the conditional-mean principle behind regression and flow matching.

This projection also explains the value of the objective: once the fixed data-entropy floor and endpoint terms are accounted for, the irreducible part of what remains is exactly the information the forward process destroys, accumulated over diffusion time. Every noising process pays the same total oracle price, and they differ only in when and where they spend it, and in how the chosen parameterization conditions the resulting learning problem.

For token-factorizing noising processes the projection yields an explicit \emph{dictionary}: the same reverse rate can be written in denoiser, cavity, or score coordinates, each valid but only after the correct conversion. This resolves several puzzles at once: why masked diffusion reduces to ordinary denoising cross-entropy, why the bridge plug-in targets a cavity law in uniform diffusion, why a denoiser parameterization can be ill-conditioned near the clean endpoint, and how score-based discrete diffusion fits into the same continuous-time object. The same lens then separates the two error sources of a trained model: the reverse-rate error that the ELBO measures, and the factorization error that only sampling reveals.

\paragraph{Contributions.}
This paper gives a self-contained, general theory of continuous-time discrete diffusion; \S\ref{sec:general-framework} surveys the core results and insights and we recommend starting there, deferring the full derivations to \S\S\ref{sec:discrete-diffusion}--\ref{sec:masked-uniform-gidd} and related work to \S\ref{sec:related}. Concretely, our main contributions are:
\begin{itemize}
  \item \textbf{A general CTMC ELBO.} Building on~\cite{campbell2022continuous}, we derive the continuous-time discrete-diffusion ELBO for finite-state jump processes, with its reconstruction and terminal-prior boundary terms, the support and regularity conditions under which it is well defined, and two complementary derivations (an infinitesimal-KL limit and a Girsanov argument). The result is a mechanical recipe for the training objective of any forward noising process.
  \item \textbf{The ELBO as path divergence: the Oracle Distance.} After the fixed entropy and endpoint terms are separated, the negative ELBO is exactly the reverse-time path divergence from the oracle reverse process to the learned one (Theorem~\ref{thm:nelbo-pathkl}). The objective is therefore a distance between reverse dynamics, not merely a likelihood bound.
  \item \textbf{The ELBO as a reverse-rate projection.} The population optimizer is the posterior average of the clean-conditioned reverse bridge rate, viewed from the information available at the current noisy state. This yields a Pythagorean decomposition into irreducible information loss and model mismatch, and identifies the universal oracle floor, the data entropy.
  \item \textbf{One reverse rate, three coordinates.} For product CTMCs we express the same optimal reverse rate in denoiser, cavity, and score coordinates, with closed-form conversions among them. This identifies which law is optimized in masked, uniform, score-based, and hybrid discrete diffusion; in particular, the standard bridge plug-in is optimized by the cavity law, not the denoiser.
  \item \textbf{Practical consequences.} The dictionary explains why masked diffusion admits ordinary denoising cross-entropy, why denoiser parameterizations are ill-conditioned for uniform and full-support hybrid diffusion, and why the bridge plug-in targets a cavity law; it calibrates ELBO implementations at initialization, including the boundary terms usually dropped in cross-method comparisons; and it separates a trained model's reverse-rate error, which the ELBO measures, from the factorization error of parallel-token sampling, which the ELBO cannot see.
\end{itemize}

\section{A general theory of discrete diffusion}\label{sec:general-framework}
This section is an exposition of the main results, conceptual intuitions, and practical insights of the paper; the full derivations live in \S\S\ref{sec:discrete-diffusion}--\ref{sec:masked-uniform-gidd} for the interested reader.

\subsection{The general theory}

Diffusion models turn sampling from a complicated data distribution into the problem of reversing a simple noising process. One chooses a forward stochastic process $(Z_t)_{t\in[0,T]}$ whose marginals $q_t:=\Law(Z_t)$ interpolate between the data law $q_0=\pdata$ and a terminal law $q_T$ from which sampling is easy; generation then reduces to learning the reverse dynamics: sample $Z_T\sim q_T$, run an approximate reverse process back to time zero, and hope that the resulting law is close to $q_0$. In discrete diffusion the noising process corrupts data directly on its finite alphabet, so the object to reverse is a discrete Markov evolution rather than a Gaussian perturbation in Euclidean space.

\paragraph{From VAEs to the diffusion ELBO.}
Historically, the training objective behind diffusion models comes from the variational autoencoder~\cite{kingma2014autoencoding,rezende2014stochastic}. For a latent-variable model $p^\theta(x,z)$, with data $x$ and latent $z$, and encoder law $q(z\given x)$, Jensen's inequality gives
\[
  \log p^\theta(x)
  =\log \E_{q(z\given x)}\!\left[\frac{p^\theta(x,z)}{q(z\given x)}\right]
  \ge
  \E_{q(z\given x)}\!\left[\log\frac{p^\theta(x,z)}{q(z\given x)}\right]
  =\E_{q(z\given x)}[\log p^\theta(x\given z)]-\KLdiv{q(\cdot\given x)}{p^\theta_z},
\]
whose RHS is the usual variational bound: the evidence lower bound (ELBO), a reconstruction term minus a prior-matching term (\S\ref{subsec:variational-formulation}). Diffusion models use the same variational principle, but the latent variable is no longer a single random variable: it is the whole noising trajectory.

In discrete time, fix clean data $z_0$ and let the latent variable be the Markov chain $z_{1:N}=(z_1,\ldots,z_N)$, with forward law $q_{1:N\given0}(z_{1:N}\given z_0)=\prod_{k=1}^N q_{k\given k-1}(z_k\given z_{k-1})$. If the learned reverse model has terminal prior $p^\theta_N$, final reconstruction kernel $p^\theta_{0\given1}(z_0\given z_1)$, and reverse kernels $p^\theta_{k-1\given k}$, so that the full model law is $p^\theta_{0:N}(z_{0:N}) = p^\theta_N(z_N)\prod_{k=1}^N p^\theta_{k-1\given k}(z_{k-1}\given z_k)$, the VAE bound becomes
\[
  \begin{aligned}
    \ELBO_N(\theta;\, z_0)
     & =
    \E_{q_{1\given0}}[\log p^\theta_{0\given1}(z_0\given z_1)]
    -\KLdiv{q_{N\given0}(\cdot\given z_0)}{p^\theta_N} \\
     & \quad-
    \sum_{k=2}^N
    \E_{q_{k\given0}}
    \KLdiv{ q_{k-1\given k,0}(\cdot\given z_k,z_0) }{ p^\theta_{k-1\given k}(\cdot\given z_k) }.
  \end{aligned}
\]
In particular, the discrete-time ELBO already has the structure that will persist in continuous time: a reconstruction term at the left endpoint, a prior term at the right endpoint, and a sum of local discrepancies between the true reverse bridge kernels (the clean-conditioned kernels $q_{k-1\given k,0}(\cdot\given z_k,z_0)$) and learned reverse kernels. Proposition~\ref{prop:discrete-time-diffusion-elbo} (\S\ref{subsec:variational-formulation}) gives the exact finite-chain statement used as the discrete skeleton for the CTMC derivation.

\paragraph{Continuous time and the reverse jump rate.}
Passing from a discrete chain to a continuous-time Markov chain (CTMC) removes the arbitrary choice of a discretization level $N$, allows time points to be sampled continuously, and makes time changes or importance-sampling clocks part of the mathematical formulation rather than implementation details. It also exposes the infinitesimal object that the model is trying to learn: the reverse jump rate. This is useful both practically, because models can be trained from randomly sampled times and without fixing the sampling grid, and mathematically, because the ELBO becomes the path-space relative entropy and has an exact local expression in terms of jump rates. To derive this continuous-time ELBO, let us introduce first some notation.

A CTMC process $(Z_t)_{t\in[0,T]}$ is characterized through its generator $Q_t$, which governs the first-order probabilities of jumping from the current state to another or staying:
\[
  \mathbb P\!\left(Z_{t+h}=y\given Z_t=x\right)
  =
  \begin{cases}
    h\,Q_t(x,y)+o(h),   & y\ne x, \\
    1+h\,Q_t(x,x)+o(h), & y=x,
  \end{cases}
  \qquad Q_t(x,x):=-\sum_{y\ne x}Q_t(x,y).
\]
We write $q_t=\Law(Z_t)$ and $q_{t\given0}(\cdot\given z_0)=\Law(Z_t\given Z_0=z_0)$. The clean-conditioned reverse rate and the marginal reverse rate are
\begin{equation}
  \RevQ_t(z,y\given z_0)
  =Q_t(y,z)\frac{q_{t\given0}(y\given z_0)}{q_{t\given0}(z\given z_0)},
  \qquad
  \RevQ_t(z,y)
  =Q_t(y,z)\frac{q_t(y)}{q_t(z)},
  \qquad y\ne z.
  \label{eq:walkthrough-reverse-rate}
\end{equation}

In diffusion models, the object ultimately learned is a reverse process. A neural network may be read as a denoiser, a score ratio, or a bridge plug-in, but none of these is a generative process until it is converted into reverse jump rates, which we denote $\RevQ_t^\theta(z_t,y)$: functions of the noisy state $z_t$, never of the hidden clean data $z_0$. The paper later compares these parameterizations in detail, but at the level of the ELBO they all enter through the rates $\RevQ^\theta_t$.

\paragraph{The continuous-time ELBO.}
For scalars $a,b\ge0$, define
\begin{equation}
  \label{eq:poisson-divergence}
  \Phi(a,b):=a\log\frac{a}{b}-a+b,
\end{equation}
with the usual conventions at the boundary. This \emph{local rate divergence} is the Bregman divergence of the negative-entropy potential $F(u)=u\log u-u$, and it describes the KL rate between two CTMC processes. In this paper we prove that, under the support and regularity assumptions of Theorem~\ref{thm:ctmc-elbo} and up to the two endpoint comparison terms, the continuous-time ELBO on $[0,T]$ has the path-integral form
\[
  \log p^\theta_0(z_0)
  \ge \ELBO_{[0,T]}(\theta;\, z_0)
  = - \int_0^T \Exp{q_{t\given0}(z_t\given z_0)}
  {\sum_{y\ne z_t} \Phi\!\left(\RevQ_t(z_t,y\given z_0),\RevQ_t^\theta(z_t,y)\right)}\,dt.
\]
CTMC path-space ELBOs of this kind have appeared in the continuous-time discrete diffusion literature, for example in the formulations of \cite{campbell2022continuous,lou2024sedd,shi2024simplified,sahoo2024mdlm,vonruette2025gidd}. Our derivation is included not merely to recover the known training objective, but to keep the terms in a form suitable for studying the objective itself; Theorem~\ref{thm:ctmc-elbo} (\S\ref{subsec:ctmc-elbo}) is the precise finite-window statement.

There are two reasons to revisit the derivation. First, many presentations discard model-independent terms of the ELBO because they do not affect optimization. This is natural when the goal is only to obtain a loss function, but it hides the information-theoretic structure of the objective. In this paper we keep these terms and show that they are not decorative: they identify the irreducible cost of the ELBO with the information destroyed by the forward process, and they lead to a path-space decomposition of what the reverse model is actually learning.

Second, practical implementations rarely optimize the ideal full-interval objective without modification. Endpoint behavior at $t=0,T$ can be singular or numerically unstable, posterior ratios may have high variance, reconstruction and terminal-prior choices may be separated from the path loss, and Monte Carlo training often uses time windows or importance-sampling clocks. For these reasons one often truncates the interval to $[t_1,t_2]\subset(0,T)$. Once this is done, however, the effect on the ELBO is no longer transparent: the boundary terms that were harmless in the full-interval or perfectly matched setting become essential for interpreting the objective as a likelihood bound.

In this paper we prove that, under the light conditions of Theorem~\ref{thm:ctmc-elbo}, the finite-window CTMC likelihood admits the bound
\begin{equation}
  \sboxed{
    \begin{aligned}
      \ELBO_{[t_1,t_2]}(\theta;\, z_0)
       & := \Exp{q_{t_1\given0}(z_{t_1}\given z_0)}
            {\log p^\theta_{0\given t_1}(z_0\given z_{t_1})}
      - \KLdiv{q_{t_2\given0}(\cdot\given z_0)}{p^\theta_{t_2}}                                                           \\
       & \quad- \int_{t_1}^{t_2} \Exp{q_{t\given0}(z_t\given z_0)}
                                 {\sum_{y\ne z_t} \Phi\!\left(\RevQ_t(z_t,y\given z_0),\RevQ_t^\theta(z_t,y)\right)}\,dt,
    \end{aligned}
  }
  \tag{ELBO}\label{eq:finite-window-elbo}
\end{equation}
where $p^\theta_{0\given t_1}$ is the reconstruction decoder and $p^\theta_{t_2}$ is the terminal prior. In implementations, the same neural model that specifies the reverse rates often also provides this reconstruction head.

This form of the ELBO is particularly useful because its inputs are the model reverse rate $\RevQ^\theta$ and the clean-conditioned reverse rate $\RevQ(\cdot\given z_0)$. The latter is easy to compute from the forward process itself: $\RevQ_t(z,y\given z_0)=Q_t(y,z)\,q_{t\given0}(y\given z_0)/q_{t\given0}(z\given z_0)$ requires only the conditional forward kernels $q_{t\given0}$ and no knowledge of the data law $\pdata$ beyond samples $z_0$. The object we ultimately want, however, is the marginal reverse rate $\RevQ_t(z,y)=Q_t(y,z)q_t(y)/q_t(z)$, because only the reverse CTMC with these rates transports $q_T$ back to $q_0=\pdata$.

\paragraph{The difficulties interpreting the ELBO.}
The perceived drawback of the ELBO is that it is an inequality: it seemingly replaces the intractable likelihood by a computable bound, as reflected in its name. Indeed, taking expectation in \eqref{eq:finite-window-elbo} with respect to $Z_0\sim q_0=\pdata$ and subtracting the data entropy $H(q_0)=H(\pdata)$ from both sides gives a KL upper bound against $p^\theta_0$, the model's law at time zero, that is, the law of its generated samples:
\[
  \KLdiv{\pdata}{p^\theta_0}
  \le -H(q_0)-\E_{q_0}\bigl[\ELBO_{[t_1,t_2]}(\theta;\, z_0)\bigr].
\]

A metric which is a bound is hard to interpret. If two papers report different ELBO values, and the reported number is only an upper bound on a KL divergence with an unknown gap, it is not obvious whether a better value reflects a better generative law or only a tighter variational approximation. This is especially acute in domains such as text and music, where direct metrics on generated samples are difficult to define.

There are also two structural questions specific to diffusion models. First, the ELBO trains $\RevQ^\theta$ through the local divergence $\Phi(\RevQ_t(\cdot\given z_0),\RevQ_t^\theta)$, where the first argument is the clean-conditioned reverse rate, whereas generation requires the marginal reverse CTMC with rate $\RevQ_t$. What, then, is the population optimizer of the objective?  Second, what is the best achievable value of the ELBO itself?  In this paper, we answer these questions by identifying the ELBO with a path-space relative entropy, plus constants with direct information-theoretic meaning.

\paragraph{The Oracle Distance: NELBO ${=}$ entropy $+$ path KL to the oracle.}
The answer compares whole trajectories rather than time marginals. An endpoint law and a family of jump rates determine the law of the \emph{entire} trajectory, its \emph{path law} (\S\ref{par:path-laws}), and two path laws matter here: $P^\star_{[0,T]}$, the law of the noising trajectories themselves, read in reverse time as started at $q_T$ and run backward with the marginal reverse rates $\RevQ_t$; and $P^\theta_{[0,T]}$, the model's, started at its terminal prior and run backward with $\RevQ^\theta_t$. Indeed, direction is a property of the chosen factorization, an endpoint law and rates, not of the law itself: a process Markov in forward time is Markov in reverse time, with different rates, and $P^\star$ simply \emph{is} the noising law. In the clean full-interval setting, where the model starts at the correct terminal law $q_T$ and the endpoint terms vanish, we prove in \S\S\ref{sec:discrete-diffusion}--\ref{sec:oracle-information-theory} that
\begin{equation}
  \label{eq:full-path-kl-decomposition}
  \sboxed{
    \begin{aligned}
      -\Exp{q_0}{\ELBO_{[0,T]}(\theta;\, z_0)}
       & = H(q_0) + \KLdiv{P^\star_{[0,T]}}{P^\theta_{[0,T]}}.
    \end{aligned}
  }
\end{equation}
The significance is that the ELBO is not just an arbitrary tractable bound: after adding the entropy constant, it is exactly the path-space KL between the true and the learned reverse dynamics. We call this expression of the ELBO the \emph{Oracle Distance} identity (Corollary~\ref{cor:nelbo-pathkl}, \S\ref{subsec:oracle-distance}). In particular,
\[
  \KLdiv{\pdata}{p^\theta_0}
  \le \KLdiv{P^\star_{[0,T]}}{P^\theta_{[0,T]}}
  = -H(q_0) - \Exp{q_0}{\ELBO_{[0,T]}(\theta;\, z_0)},
\]
because marginalization cannot increase relative entropy. This explains why optimizing the ELBO is powerful: it makes the learned reverse path law follow the true reverse path law. It also shows the limitation of the metric: the path-law KL can be substantially larger than the endpoint KL between generated samples and data, so the ELBO may penalize pathwise discrepancies that are not fully visible at the final sample level (we expand on this point later in \S\ref{subsec:sequences}).

The same identity has a clipped-window version (Theorem~\ref{thm:nelbo-pathkl}, \S\ref{subsec:oracle-distance}): training on $[t_1,t_2]\subset (0,T)$ with a model terminal prior $p^\theta_{t_2}$ and a model final decoder $p^\theta_{0\given t_1}$ gives
\begin{equation}
  \label{eq:finite-window-nelbo-pathkl-intro}
  \sboxed{
    \begin{aligned}
      -\Exp{q_0}{\ELBO_{[t_1,t_2]}(\theta;\, z_0)}
      ={} & H(q_0)
      + \KLdiv{P^\star_{[t_1,t_2]}}{P^\theta_{[t_1,t_2]}}
      \\
          & + \Exp{q_{t_1}}{\KLdiv{q_{0\given t_1}(\cdot\given z_{t_1})}{p^\theta_{0\given t_1}(\cdot\given z_{t_1})}}.
    \end{aligned}
  }
\end{equation}
The reconstruction mismatch at $t_1$ sits outside of the path KL, while the usual terminal-prior mismatch $\KLdiv{q_{t_2}}{p^\theta_{t_2}}$ sits \emph{inside} it: by the finite-state Girsanov formula (Theorem~\ref{thm:girsanov}, \S\ref{subsec:ctmc-elbo}), a KL between path laws always contains the KL between their laws at the starting time, here $t_2$, plus an integral of local rate divergences,
\[
  \KLdiv{P^\star_{[t_1,t_2]}}{P^\theta_{[t_1,t_2]}}
  = \KLdiv{q_{t_2}}{p^\theta_{t_2}}
  + \int_{t_1}^{t_2} \Exp{q_t}{\textstyle\sum_{y\neq z_t} \Phi\big(\RevQ_t(z_t,y),\RevQ^\theta_t(z_t,y)\big)}\,dt.
\]
In this sense, the theorem reveals what the CTMC ELBO actually is.

\begin{figure}[b]
  \centering
  \includegraphics[width=0.42\linewidth]{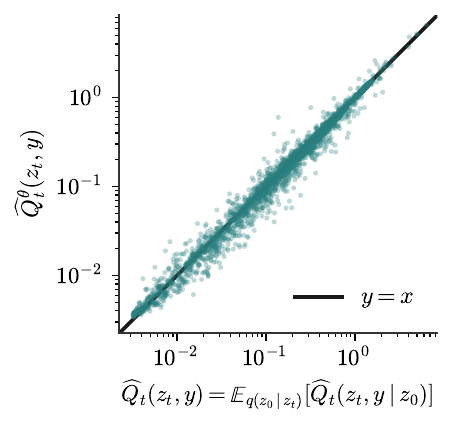}
  \caption{The model rates $\RevQ^\theta_t(z_t,y)$ from a trained uniform-diffusion denoiser neural network, sampled across many $(z_t,y,t)$, average to the oracle rate $\RevQ_t(z_t,y)=\Exp{q(z_0\mid z_t)}{\RevQ_t(z_t,y\given z_0)}$.}
  \label{fig:projection}
\end{figure}

\paragraph{The optimizer is a reverse-rate projection.}
The optimizer is now an almost trivial corollary: the path KL in~\eqref{eq:full-path-kl-decomposition} or the clipped-window version~\eqref{eq:finite-window-nelbo-pathkl-intro} is non-negative and vanishes exactly when the model runs the true marginal reverse rates, so among all rates measurable with respect to the noisy state $Z_t$ only, the optimum is exactly the true reverse marginal rate $\RevQ_t(z_t,y)$:
\begin{equation}
  \label{eq:reverse-rate-projection-intro}
  \sboxed{
    \RevQ_t^{\theta\star}(z_t,y)
    = \Exp{}{\RevQ_t(Z_t,y\given Z_0) \;\middle|\; Z_t=z_t}
    = \RevQ_t(z_t,y), \qquad y\ne z_t.
  }
\end{equation}
The middle expression, which equals the marginal rate by a short Bayes computation (Theorem~\ref{thm:reverse-rate-projection}, \S\ref{subsec:projection}), is what gives the optimum its meaning: although the training target is clean-conditioned, its \emph{projection} onto the information available to the model is exactly the unconditional reverse rate needed for generation (Figure~\ref{fig:projection}).

\paragraph{The irreducible cost is information loss.}
The entropy constant in~\eqref{eq:full-path-kl-decomposition} is not mere bookkeeping; it has a precise information-theoretic origin. Evaluated at the oracle reverse rate $\RevQ_t$, the per-time ELBO cost that remains is exactly the rate at which the forward process destroys information about the data,
\[
  \J^\star_t
  :=\Exp{Z_0,Z_t}{\sum_{y\neq Z_t}\Phi\!\left(\RevQ_t(Z_t,y\given Z_0),\RevQ_t(Z_t,y)\right)}
  =\frac{d}{dt}H(Z_0\given Z_t)
  =-\frac{d}{dt}I(Z_0;Z_t)\ge0,
\]
the CTMC counterpart of the I-MMSE relation between estimation and information (Theorem~\ref{thm:elbo-oracle-information-loss}, \S\ref{subsec:oracle-distance}). Integrated over the schedule this telescopes to the total information the data carries, the data entropy $H(Z_0)=H(\pdata)$ (Figure~\ref{fig:floor}).

\begin{figure}[b]
  \centering
  \includegraphics[width=\linewidth]{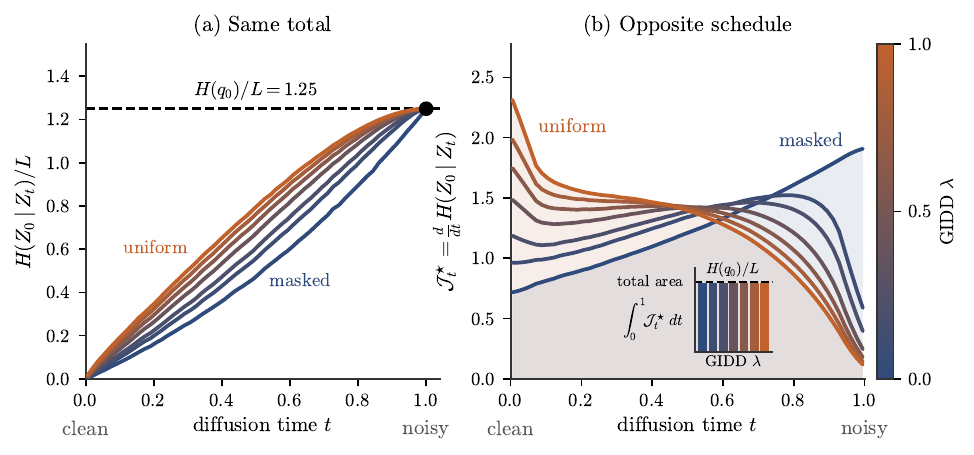}
  \caption{Every noising process destroys exactly $H(q_0)$ information, at its own rate, computed at the exact oracle on the GIDD family (\S\ref{sec:experimental-setup}) that interpolates masked and uniform diffusion through the parameter $\lambda$. \textbf{(a)} Information destroyed by time $t$, $H(Z_0\given Z_t)/L$, follows different paths to the common ceiling $H(q_0)/L$ (Theorem~\ref{thm:elbo-oracle-information-loss}, \S\ref{subsec:oracle-distance}). \textbf{(b)} Its rate $\J^\star_t=\tfrac{d}{dt}H(Z_0\given Z_t)$: uniform destroys information earlier than masked, with equal area (inset).}
  \label{fig:floor}
\end{figure}

\paragraph{A Pythagorean split.}
The two previous facts combine into a proof of the Oracle Distance~\eqref{eq:full-path-kl-decomposition}. For any model rate, the per-time trained cost separates exactly, with no cross term, into what the oracle pays plus how far the model's rates are from the oracle's (a Bregman property of $\Phi$; Proposition~\ref{prop:pythagoras}, \S\ref{subsec:oracle-distance}):
\[
  \underbrace{\Exp{Z_0,Z_t}{\textstyle\sum_{y\ne Z_t}\Phi\big(\RevQ_t(Z_t,y\given Z_0),\RevQ^\theta_t(Z_t,y)\big)}}_{\text{trained cost}}
  = \underbrace{\ \J^\star_t\ }_{\text{oracle cost}}
  + \underbrace{\Exp{Z_t}{\textstyle\sum_{y\ne Z_t}\Phi\big(\RevQ_t(Z_t,y),\RevQ^\theta_t(Z_t,y)\big)}}_{\text{model--oracle rate divergence}}.
\]
Integrating over time, the oracle cost telescopes to the data entropy $H(q_0)$ by the information-loss identity above, while the model--oracle divergence accumulates into the path KL: this is the identity.

\begin{figure}[b]
  \centering
  \includegraphics[width=\linewidth]{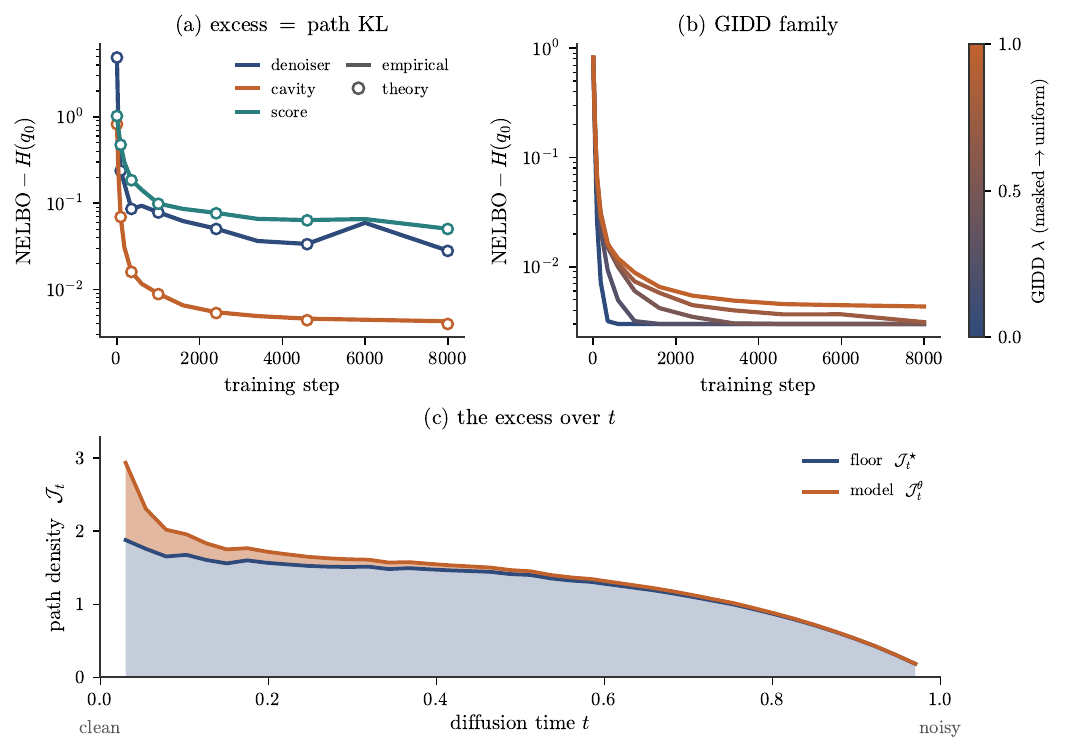}
  \caption{The NELBO is an exact distance to the oracle (Corollary~\ref{cor:nelbo-pathkl}, \S\ref{subsec:oracle-distance}). Panels \textbf{(a,b)} plot the per-token excess $\NELBO(\theta)-H(q_0)$: it vanishes over training for every parameterization (panel \textbf{(a)}; UDM denoiser, cavity, and score networks from \S\ref{subsec:sequences}) and every noising process (panel \textbf{(b)}; GIDD family with cavity network). \textbf{(c)} The Pythagorean split over diffusion time $t$: the per-time cost is the oracle rate $\J^\star_t$ plus the model--oracle divergence, whose integral is the path KL.}
  \label{fig:infocore}
\end{figure}

\paragraph{A universal entropy floor.}
The same result also identifies the intrinsic floor of the ELBO. Since the path-law KL in \eqref{eq:full-path-kl-decomposition} is non-negative and vanishes exactly at the true marginal reverse dynamics, for a model class expressive enough to realize them, we get
\begin{equation}
  \label{eq:elbo-floor-intro}
  \sboxed{
    \inf_{\theta}\ -\Exp{q_0}{\ELBO_{[0,T]}(\theta;\, z_0)} = H(\pdata).
  }
\end{equation}
Thus the best achievable value is not an artifact of the particular noising process or parameterization; it is the data entropy (Figure~\ref{fig:infocore}). Lemma~\ref{lem:universal-floor} and Corollary~\ref{cor:oracle-exact} (\S\ref{subsec:nelbo-floor}) give the formal statements, and Figure~\ref{fig:calibration} shows the boundary-term decomposition summing to this floor. The ELBO has a universal entropy floor, and the excess above that floor measures pathwise mismatch to the true reverse process.

\paragraph{Training clocks and the information-uniform time.}
In practice the path integral in~\eqref{eq:finite-window-elbo} is estimated by Monte Carlo over $t$ and $z_t\sim q_{t\given0}(\cdot\given z_0)$, optionally with an importance density $p_{\mathrm{IS}}(t)$ for variance reduction; choosing $p_{\mathrm{IS}}$ is the same as choosing the \emph{clock} on which the reverse chain is run (\S\ref{subsec:ctmc-elbo-estimation}). Beyond the variance-optimal choice, one clock is intrinsic to the \emph{process}: since the oracle cost rate is $\J^\star_t=\tfrac{d}{dt}H(Z_0\given Z_t)$ (Theorem~\ref{thm:elbo-oracle-information-loss}), the clock $\tau(t)=H(Z_0\given Z_t)/H(q_0)$ drains information \emph{linearly}, $I(Z_0;Z_\tau)=(1-\tau)\,H(q_0)$, and straightens the otherwise process-dependent curves of Figure~\ref{fig:floor}a onto a single diagonal.

\subsection{Discrete diffusion for sequential data}\label{subsec:sequences}

Discrete diffusion is most often applied to sequence modeling, broadly any multi-dimensional data such as text, images, music, or graphs. However, a practitioner immediately meets three apparently different ways to parameterize the model: predict the clean token directly (a \emph{denoiser}); the \emph{bridge plug-in} trick, widely assumed to be an equivalent way of training that same denoiser (it is not); or predict a ratio of noisy conditionals (a \emph{concrete score}). Folklore holds that some of these mysteriously misbehave, and indeed the uniform-diffusion ELBO can \emph{diverge} at initialization. This section resolves the confusion with the central guiding principle we just introduced in the previous section: all three are coordinates of one object, the \emph{true reverse rate} that the ELBO optimizes. Choosing a parameterization is just choosing which coordinates of that object to learn.

Using this one fact we give an answer to several open practical questions: what is the dictionary that converts between the three parameterizations, why a denoiser model fed to a cavity sampler silently trains a \emph{different} law under uniform but not masked diffusion, what it would take to improve on masked diffusion, and it also provides calibration checks that run at initialization, among others.

\paragraph{Sequences as product spaces.}
Sequential data lives on a \emph{product space} (\S\ref{sec:product-ctmc}), whose (high-order) positional correlations are exactly the structure generative models aim to capture. We index the $L$ positions by $1\le i\le L$ and work on the finite product state space
\[
  \X:=\X_1\times\cdots\times \X_L.
\]
To exploit this structure, the standard choice is a noising process that \emph{factors across positions} (Theorem~\ref{thm:product-ctmc}, \S\ref{subsec:product-ctmc-theory}): each position is corrupted independently, so for times $s<t$ the forward kernel is
\begin{equation*}
  q_{t\given s}(y\given x):=\prod_{i=1}^L q_{t\given s}^i(y^i\given x^i), \qquad x,y\in\X,
\end{equation*}
where $x^i$ is the $i$-th coordinate, $x^{-i}:=(x^j)_{j\neq i}$ the remaining coordinates, and $q_{t\given s}^i$ the forward kernel on $\X_i$. Equivalently, by the Kolmogorov forward equation, the generator factors \emph{additively},
\begin{equation}
  Q_t(x,y) = \sum_{i=1}^L (Q_t)_i(x,y), \qquad (Q_t)_i(x,y) = \one\{x^{-i}=y^{-i}\}\,R_t^i(x^i,y^i),
  \label{eq:walkthrough-generator-factorization}
\end{equation}
so jumps occur one coordinate at a time, at a rate $R_t^i$. Nothing forces the per-position processes to be equal, though they usually are.

This factorization into token-wise jump rates is what makes both training and the standard marginal-based sampling tractable, as we now show. Like the left-to-right factorization of autoregressive models, this choice trades a (hopefully mild) loss of expressivity for tractability.

\paragraph{From sequence rates to token rates.}
The payoff of factorization is that the whole reverse process is fixed by \emph{token marginals} alone, never full-sequence laws. To see it, we compute the true reverse rate of Theorem~\ref{thm:reverse-rate-projection} for the clean-conditioned positional rates, averaging over $q(z_0\given z_t)$, and cancelling the non-$i$ factors, to get
\begin{equation}
  (\RevQ_t)_i(z_t, y)
  = \Exp{}{(\RevQ_t)_i(Z_t, y\given Z_0) \;\middle|\; Z_t=z_t}
  = R_t^i(y^i,z_t^i)\, \Exp{q(z_0^i\given z_t)} {\frac{q^i_{t\given 0}(y^i\given z_0^i)}{q^i_{t\given 0}(z_t^i\given z_0^i)}}.
  \label{eq:walkthrough-rates-through-marginal-denoiser}
\end{equation}
The reverse rates therefore depend on the data \emph{only} through the clean-token marginals $q(z_0^i\given z_t)$, which is our first key insight for sequence modeling and is formalized as the denoiser coordinate in Theorem~\ref{thm:three-coordinates} (\S\ref{sec:product-ctmc}). The literature typically posits a token-level parameterization on the strength of factorizability; deriving it through the projection result instead shows what that factorization buys (the dependence on data through those clean-token marginals alone) and thereby which token-level choices affect the optimized law and which are immaterial.

\paragraph{Existing parameterizations and open questions.}
The marginal $q(z_0^i\given z_t)$ is the familiar \emph{denoiser}, which we denote by $\pi_i^\star$: the $i$-th clean-token law given the current noisy sequence,
\begin{equation*}
  \sboxed{\pi_i^\star(z_0^i\given z_t,t) := q(z_0^i\given z_t).}
\end{equation*}
The standard recipe, known as the \emph{mean} parameterization, fits a neural network head $\pi_i^\theta\approx\pi_i^\star$, plugs it into~\eqref{eq:walkthrough-rates-through-marginal-denoiser} to obtain rates $\RevQ_t^\theta$, and feeds those to the ELBO of Theorem~\ref{thm:ctmc-elbo}; it is commonly read as the ``average reverse rate'' under the denoiser's uncertainty. Reverse sampling then factors the step through the same head,
\begin{equation}
  p_{s\given t}^\theta(z_s\given z_t) = \prod_{i=1}^L p^\theta(z_s^i\given z_t),
  \qquad p^\theta(z_s^i\given z_t) = \Exp{\pi_i^\theta(z_0^i\given z_t,t)}{q^i_{s\given t,0}(z_s^i\given z_t^i, z_0^i)},
  \label{eq:walkthrough-ancestral-sampling-marginal}
\end{equation}
even though the true law $q_{s\given t}$ \emph{cannot} factor over positions: if it did, the clean-data law $q(z_0)=q(z_0\given z_T)$ (by terminal mixing, Assumption~\ref{ass:terminal-mixing}) would factor too, giving a pure unigram model.

This recipe works for masked diffusion but \emph{fails} for uniform diffusion, where the ELBO diverges at initialization. Uniform-diffusion (UDM) and generalized-interpolating (GIDD) works instead adopt the \emph{bridge plug-in} parameterization, in which the forward kernel $q_{t\given0}^i$ is averaged over the network's output distribution \emph{before} forming the rate~\eqref{eq:walkthrough-reverse-rate}, and surprisingly this keeps the ELBO finite and allows smooth training. A third route, that of \emph{concrete scores}, instead reads the head as ratios of noisy conditional laws. These options raise the central questions we answer next: how are the three related, are they interchangeable, why does one of them sometimes fail, and what reverse CTMC does each actually induce?

\begin{figure}[b]
  \centering
  \begin{tikzpicture}[
      >={Stealth[length=2.8mm]},
      coord/.style={draw=blue!55, line width=0.7pt, rounded corners=3pt, fill=blue!4,
          inner sep=6pt, align=center, text width=4.0cm},
      hub/.style={draw=black, line width=1pt, rounded corners=3pt, fill=black!5,
          inner sep=6pt, align=center, text width=4.6cm},
      conv/.style={font=\scriptsize, fill=white, inner sep=2pt, align=center},
      rel/.style={<->, draw=blue!55, line width=0.8pt},
      spoke/.style={draw=gray!40, line width=0.5pt, dashed}
    ]
    \node[coord] (den) at (0,4.1)
    {\textbf{Denoiser}\\
      $\pi_i^\star(z_0^i\given z_t) := q(z_0^i\given z_t)$\\[3pt]
      {\scriptsize MDLM, D3PM, MD4}};
    \node[coord] (cav) at (-5.9,-2.4)
    {\textbf{Cavity}\\
      $\mu_i^\star(z_0^i\given z_t^{-i})$\\[1pt]
      $:= q(z_0^i\given z_t^{-i})$\\[3pt]
      {\scriptsize UDLM, GIDD, Duo}};
    \node[coord] (sco) at (5.9,-2.4)
    {\textbf{Score}\\
      $s_i^\star(z_t^i,y^i\given z_t^{-i})$\\[1pt]
      $:= \dfrac{q_t(y^i,z_t^{-i})}{q_t(z_t)}$\\[3pt]
      {\scriptsize SEDD, RADD, TCSM}};
    \coordinate (hubc) at (0,0.1);
    \draw[spoke] (hubc) -- (den);
    \draw[spoke] (hubc) -- (cav);
    \draw[spoke] (hubc) -- (sco);
    \draw[rel] (den) -- (cav) node[conv,pos=0.32] {local Bayes update\\[2pt]
      $\pi_i^\star \propto \mu_i^\star\;q_{t\given 0}^i(z_t^i\given \cdot)$};
    \draw[rel] (sco) -- (den) node[conv,pos=0.68] {compose\\[1pt]via cavity};
    \draw[rel] (cav) -- (sco) node[conv,pos=0.5] {ratio of cavity-averaged kernels\\[3pt]
      $s_i^\star=\dfrac{\Exp{z_0^i\sim\mu_i^\star}{q^i_{t\given0}(y^i\given z_0^i)}}{\Exp{z_0^i\sim\mu_i^\star}{q^i_{t\given0}(z_t^i\given z_0^i)}}$\\[5pt]
      linear in $\mu_i^\star$: invert $q^i_{t\given0}$\\[0pt]
      and normalize to go back};
    \node[hub] at (hubc)
    {\textbf{True reverse rate}\\[2pt]
      {\scriptsize$(\RevQ_t)_i(z_t,y)$}\\[1pt]
      {\scriptsize$=\Exp{}{(\RevQ_t)_i(Z_t,y\given Z_0) \;\middle|\; Z_t=z_t}$}};
  \end{tikzpicture}
  \caption{The reverse rate in three coordinates: each vertex lists a coordinate of the reverse rate, the law it learns (omitting the time conditioning from the notation), and the methods it recovers; each arrow names its conversion formula, with arguments suppressed (Proposition~\ref{prop:coordinate-conversions}, \S\ref{subsec:product-ctmc-theory}).}
  \label{fig:coordinate-triangle}
\end{figure}

\paragraph{Three coordinates, one object.}
The answer lies in the ELBO's \emph{unique} optimizer, the unconditional reverse rate of Theorem~\ref{thm:reverse-rate-projection}, because we can rewrite~\eqref{eq:walkthrough-rates-through-marginal-denoiser} in terms of true laws \emph{other} than the denoiser. The bridge plug-in turns out to target the \emph{cavity} law $\mu_i^\star$, namely the $i$-th clean-token law given the context $z_t^{-i}$ but \emph{ignoring its own noisy observation} $z_t^i$,
\begin{equation*}
  \sboxed{\mu_i^\star(z_0^i\given z_t^{-i},t)
    := q(z_0^i\given z_t^{-i}),}
\end{equation*}
while the concrete score $s_i^\star$ targets a ratio of noisy conditional laws,
\begin{equation*}
  \sboxed{s_i^\star(z_t^i,y^i\given z_t^{-i},t)
    := \frac{q_t(y^i, z_t^{-i})}{q_t(z_t)},}
\end{equation*}
where $(y^i,z_t^{-i})$ denotes the sequence $z_t$ with its $i$-th token replaced by $y^i$. Our second contribution makes this picture exact by identifying three exact and equivalent expressions for the token-wise true reverse rates $(\RevQ_t)_i$, which together form a \emph{coordinate dictionary},
\begin{align*}
  (\RevQ_t)_i(z_t,y)
   & = R_t^i(y^i,z_t^i)\,
  \Exp{\pi_i^\star(\cdot\given z_t,t)}
  {\frac{q_{t\given0}^i(y^i\given \cdot)}{q_{t\given0}^i(z_t^i\given \cdot)}}
   &
   & \text{\eqref{eq:denoiser-parameterization-oracle}, denoiser}
  \\
   & = R_t^i(y^i,z_t^i)\,
  \frac{\Exp{\mu_i^\star(\cdot\given z_t^{-i},t)}{q_{t\given0}^i(y^i\given \cdot)}}{\Exp{\mu_i^\star(\cdot\given z_t^{-i},t)}{q_{t\given0}^i(z_t^i\given \cdot)}}
   &
   & \text{\eqref{eq:cavity-parameterization-oracle}, cavity}
  \\
   & = R_t^i(y^i,z_t^i)\,s_i^\star(z_t^i,y^i\given z_t^{-i},t)
   &
   & \text{\eqref{eq:score-parameterization-oracle}, score.}
\end{align*}
To read these off: the denoiser line averages the forward-kernel ratio $q_{t\given0}^i(y^i\given\cdot)/q_{t\given0}^i(z_t^i\given\cdot)$ over the clean token $z_0^i\sim\pi_i^\star$; the cavity line takes a ratio of the cavity-averaged kernels $\Exp{\mu_i^\star}{q_{t\given0}^i(\cdot\given z_0^i)}$; and the score is already that reweighting and enters raw. All three agree at the oracle laws, which is exactly the dictionary. But averaging the ratio (denoiser) and taking the ratio of averages (cavity) are different operations on a \emph{learned} head, which is why the denoiser/cavity distinction bites for uniform yet vanishes for masked diffusion (next paragraph).

\begin{figure}[b]
  \centering
  \includegraphics[width=\linewidth]{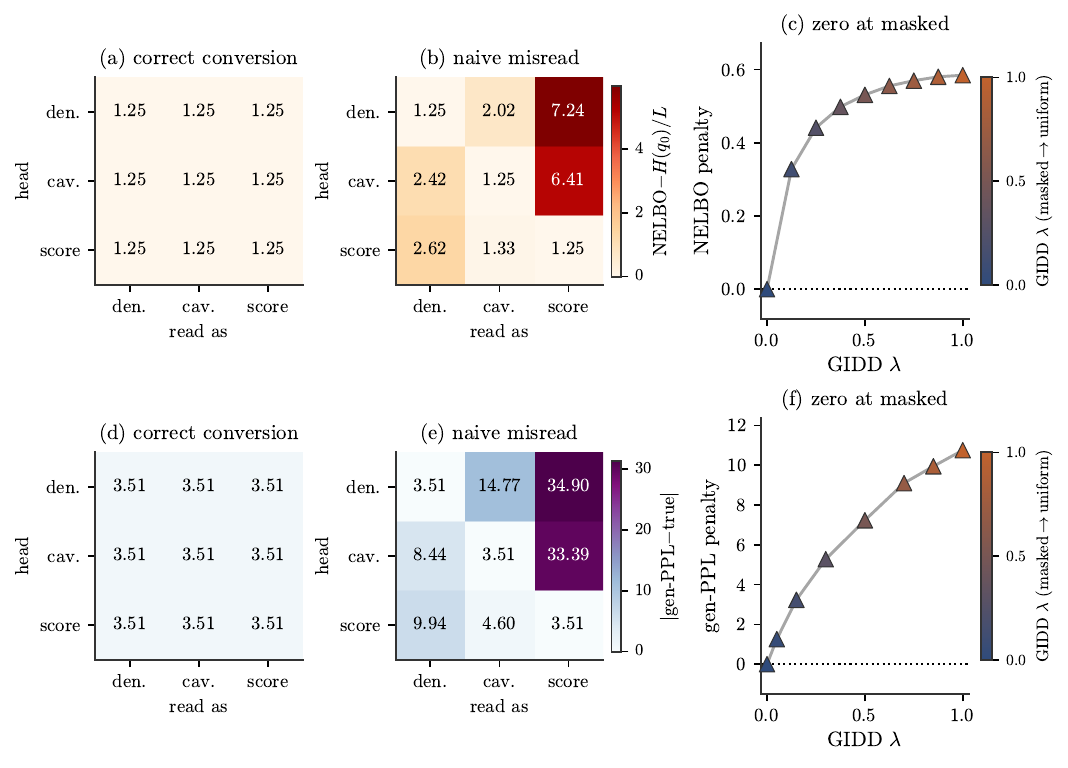}
  \caption{Reading the exact reverse rate held in coordinate $A$ (row) as coordinate $B$ (column). \emph{Top, per-token NELBO:} \textbf{(a)} with the closed-form conversion (Figure~\ref{fig:coordinate-triangle}, Table~\ref{tab:coordinate-conversions}) every cell equals the floor $H(q_0)/L$; \textbf{(b)} the naive misread penalizes every off-diagonal. \emph{Bottom, generative perplexity} (ancestral sampling, exact oracle laws): \textbf{(d)} every correct conversion reaches oracle performance; \textbf{(e)} off-diagonals penalized. \textbf{(c,f)} Sweeping the GIDD weight $\lambda$, the denoiser/cavity penalty grows continuously from zero at masked, where the two coordinates coincide.}
  \label{fig:coordinates}
\end{figure}

Fitting a head to each yields a different parameterization, and each is a complete specification of the generative process, since substituting a coordinate into the rate fixes the reverse jump rates and, by the Kolmogorov forward equation, those rates in turn determine the reverse CTMC in full. Figure~\ref{fig:coordinate-triangle} gathers the three coordinates: the oracle law each one learns, the literature methods it recovers, and the closed-form conversions between them.

Because all three coordinates target the \emph{same} oracle object, namely the unique optimum of the ELBO, each head $\pi_i^\theta,\mu_i^\theta,s_i^\theta$ is optimized exactly at $\pi_i^\star,\mu_i^\star,s_i^\star$, uniquely up to identifiability of the head-to-rate map.

\paragraph{Why masked diffusion is special.}
The link between $\pi_i^\star$ and $\mu_i^\star$ is simple: the denoiser is the cavity reweighted by the local forward kernel $q_{t\given 0}^i(z_t^i\given z_0^i)$,
\begin{equation*}
  \pi_i^\star(z_0^i \given z_t,t) \propto \mu_i^\star(z_0^i\given z_t^{-i},t)\,q_{t\given 0}^i(z_t^i\given z_0^i).
\end{equation*}
Thus, the forward kernel $q_{t\given 0}^i(z_t^i\given z_0^i)$ is exactly the Bayes update for observing the local token $z_t^i$ on top of the context $z_t^{-i}$. This splits the mutual information between $z_0^i$ and $z_t$ into two parts: what the \emph{context} $z_t^{-i}$ reveals (the cavity $\mu_i^\star$, which must be learned), and what the \emph{local token} $z_t^i$ adds (the forward kernel weight, available analytically). A denoiser head must reproduce both of these, which is a concrete source of its training difficulty. Masked diffusion escapes the distinction entirely, because at a masked token the local observation is uninformative and thus $q_{t\given0}^i(\mathfrak{m}\given\cdot)$ constant, so that $\pi_i^\star=\mu_i^\star$, that is, the denoiser and cavity coincide in masked diffusion (Corollary~\ref{cor:masked-denoiser-cavity}, \S\ref{sec:masked-uniform-gidd}). We show the importance of this distinction in Figure~\ref{fig:coordinates}.

\begin{figure}[b]
  \centering
  \includegraphics[width=0.6\linewidth]{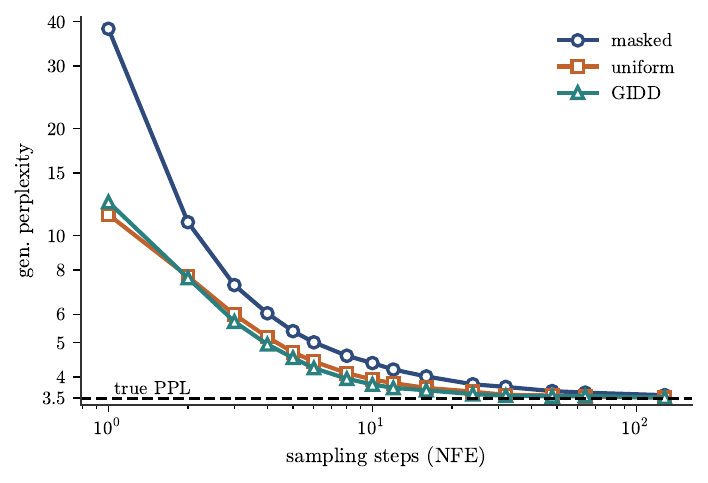}
  \caption{Sampling with the \emph{exact oracle denoiser} and the factorized ancestral samplers of \S\ref{sec:masked-uniform-gidd} zeroes the reverse-rate error, so the excess of the generative perplexity is pure sampling error, which even the oracle rates incur. It vanishes only as the sampling budget grows, masked being penalized more due to its inability to self-correct.}
  \label{fig:nfe}
\end{figure}

\paragraph{Two sources of modeling bias.}
The framework cleanly separates the two sources of error in a discrete-diffusion model. The first is \emph{reverse-rate} error, the gap between the model rate and the true reverse jump rate, which is exactly what the ELBO measures, by the Oracle Distance Theorem~\ref{thm:nelbo-pathkl}. The second is sampling \emph{factorization} error and is invisible to the ELBO. The trajectory drift noted above is not a third source: it is the same reverse-rate error, incurred now at the model's own self-generated states rather than at the true (teacher-forced) states the ELBO scores; in the well-specified, sufficient-capacity limit the learned rate is correct at \emph{every} state and no drift occurs, so drift is a generalization gap rather than an objective mismatch (\S\ref{sec:conclusion}). Factorization error, by contrast, persists even when the reverse-rate error is exactly zero. The literature uses various small-time sampling schemes (Euler, $\tau$-leaping, Tweedie), but rarely makes the sequence-to-token derivation explicit, even though the sampling gap \emph{is exactly} the factorization bias. All such schemes rest on the same fact, namely that over a short step $[s,t]$ the rates $(\RevQ_t)_i(z_t,y)$ can be treated as constant up to $O(t-s)$ terms, and under that approximation the unconditional reverse rate factors \emph{exactly}, so the factorized ancestral sampler is justified. Figure~\ref{fig:nfe} isolates this factorization error: even an oracle with zero reverse-rate error degrades at few-step sampling.

\paragraph{Calibration at initialization.}
The framework also yields \emph{calibration} identities at initialization, for an uninformative head $1/V$. First, a cavity $1/V$ head has per-token NELBO $\log V$ (Proposition~\ref{prop:uniform-cavity-elbo}, \S\ref{sec:product-ctmc}), which provides a useful debugging tool, whereas a denoiser $1/V$ head instead makes the uniform-diffusion NELBO \emph{diverge} (Proposition~\ref{prop:uniform-denoiser-ELBO-diverges}, \S\ref{sec:masked-uniform-gidd}); both identities are confirmed numerically in Figure~\ref{fig:calibration}. This divergence also pinpoints when denoisers are hard to train: whenever placing mass on the ``wrong'' token near the clean endpoint $t_1$ creates singular quotients $q^i_{t\given0}(y^i\given z_0^i)/q^i_{t\given0}(z_t^i\given z_0^i)$, which is precisely UDM's failure mode.

\begin{figure}[t]
  \centering
  \includegraphics[width=\linewidth]{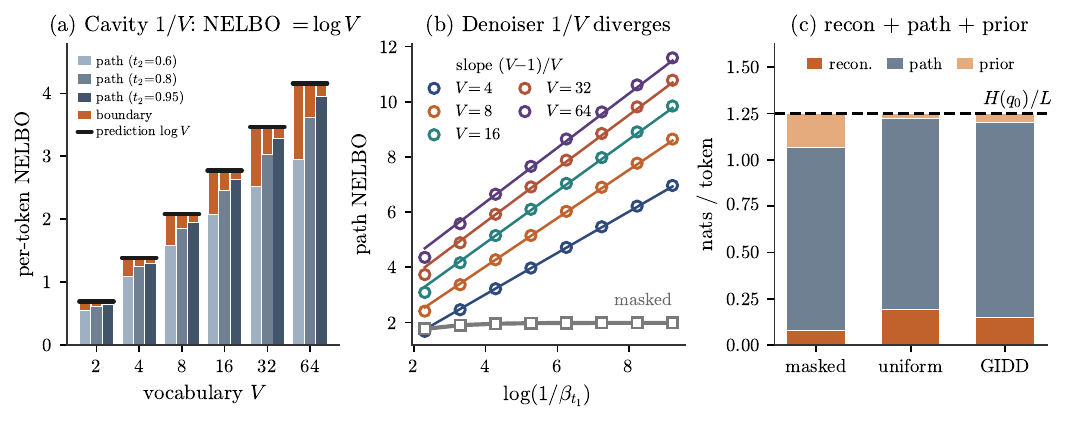}
  \caption{\textbf{(a)} A cavity head $\mu_i^\theta\equiv1/V$ has per-token NELBO $\log V$ once boundary terms are kept (Proposition~\ref{prop:uniform-cavity-elbo}). \textbf{(b)} The same head read as a denoiser makes the uniform NELBO diverge, linearly in $\log(1/\beta_{t_1})$ with slope $\tfrac{V-1}{V}$ (Proposition~\ref{prop:uniform-denoiser-ELBO-diverges}), while masked stays finite. \textbf{(c)} The oracle NELBO splits as reconstruction $+$ path $+$ terminal prior, summing to the same $H(q_0)$ for any noising process (Corollary~\ref{cor:oracle-exact}; the window $[0.1,0.9]$ exaggerates the boundary terms).}
  \label{fig:calibration}
\end{figure}

\paragraph{Practical takeaways.}
The framework reduces to a few concrete rules for the practitioner.
\begin{itemize}
  \item \textbf{Match the neural network head during sampling to its training loss, or convert.} A head is optimized by the law its loss targets; feed a cavity head into a denoiser sampler and one is missing the analytic Bayes update~\eqref{eq:cavity-to-denoiser-propto}.
  \item \textbf{Avoid a raw denoiser head for uniform and GIDD.} Its ELBO diverges at initialization (Proposition~\ref{prop:uniform-denoiser-ELBO-diverges}); use the cavity (bridge plug-in) coordinate, which stays finite and well-calibrated.
  \item \textbf{Use the $\log V$ identity as a calibration and debugging tool.} A cavity $1/V$ head has per-token NELBO $\log V$ independently of the diffusion process and the training endpoints $[t_1,t_2]$, after including the boundary terms.
  \item \textbf{Keep the boundary terms.} The reconstruction and terminal-prior KL terms are required for ELBOs to be comparable across processes; dropping them, as is common, silently breaks such comparisons.
\end{itemize}

\paragraph{Instantiations: masked, uniform, GIDD.}
Section~\S\ref{sec:masked-uniform-gidd} instantiates the framework on the three standard processes. We do so by showing that GIDD's ELBO is really a generic \emph{cavity} ELBO, valid for any source-independent rate, with a closed-form integrand given by a KL plus an Itakura--Saito term (Proposition~\ref{prop:source-independent-cavity-elbo}). Specializing it recovers the masked, uniform, and GIDD integrands, and the score coordinate likewise re-derives the results of SEDD and RADD in a few steps; the cavity optimality of the UDM bridge plug-in observed by~\cite{gourevitch2026udm} follows as one special case (see \S\ref{sec:related}). Table~\ref{tab:process-summary} collects the resulting formulas, including the prior boundary terms that are usually dropped in the literature but are needed to compare ELBOs across methods.

\section{Background and related work}\label{sec:related}

\paragraph{Discrete diffusion and its continuous-time form.}
Discrete diffusion originates with multinomial and structured categorical corruption processes~\cite{hoogeboom2021argmax,austin2021d3pm}, the discrete analogue of the Gaussian diffusion line~\cite{sohldickstein2015deep,ho2020denoising}, and now scales to large language models~\cite{nie2025llada}. Campbell et al.~\cite{campbell2022continuous} cast discrete diffusion as a continuous-time Markov chain (CTMC) with exact forward and reverse rates and a continuous-time variational bound, later extended by score-based~\cite{sun2022score,lou2024sedd} and flow-based~\cite{gat2024discreteflow,campbell2024discreteflow} formulations, and generalized to broad classes of Markov processes by denoising Markov models~\cite{benton2024denoising} and Generator Matching~\cite{holderrieth2025generator}. Section~\S\ref{sec:discrete-diffusion} is a self-contained re-derivation of the CTMC ELBO building on~\cite{campbell2022continuous}, with the reconstruction and terminal-prior boundary terms and the rate-support and regularity conditions made explicit, and with two derivations (an infinitesimal-KL argument and a Girsanov argument, Theorem~\ref{thm:girsanov}). Existing discrete formulations absorb the boundary terms into model-independent constants or send them to zero with $T\to\infty$~\cite{campbell2022continuous,lou2024sedd,sahoo2024mdlm,jeon2025infodiscrete}, which makes reported ELBOs incomparable across noising processes as implemented; we keep them explicit throughout, which is what enables the cross-process calibration results of \S\ref{sec:product-ctmc}. To our knowledge, no discrete-diffusion work states the exact identity, with boundary terms, for arbitrary model rates. Other insights from our framework, such as the equivalence between importance sampling and the process' time parameterization, connect to likelihood-faithful versus reweighted diffusion objectives~\cite{kingma2023variational,nichol2021improved,shi2025demystifying,raya2026noise}.

\paragraph{Reverse-process parameterizations.}
The learned reverse process is parameterized in several ways: as a denoiser~\cite{austin2021d3pm,hoogeboom2021argmax,campbell2022continuous,sahoo2024mdlm,shi2024simplified}, as a concrete score~\cite{meng2022concrete,sun2022score,lou2024sedd,ou2025radd,zhang2025tcsm}, or as a leave-one-out or cavity law~\cite{vonruette2025gidd,gourevitch2026udm,schiff2025udlm,sahoo2025diffusionduality,sahoo2026scaling}, with the masked ELBO reducing to weighted cross-entropy~\cite{shi2024simplified,sahoo2024mdlm,zheng2023reparam}. Two recent frameworks unify subsets of these: Generator Matching~\cite{holderrieth2025generator} regresses a marginal Markov-process generator, and Target Concrete Score Matching~\cite{zhang2025tcsm} unifies training objectives through the target concrete score, recovering several existing masked and score-based objectives as special cases. Generator Matching in particular already contains, at the level of matching objectives, two ingredients we use: the marginal generator as a posterior average of conditional generators, and Bregman divergences as the loss class whose conditional and marginal forms share gradients (their Propositions~1--2). What it does not provide that we do is the likelihood side: the exact CTMC NELBO, boundary terms included, \emph{is} such a Bregman objective for $\Phi$, so the projection optimum is the maximum-ELBO reverse process, the excess above the oracle is a path KL, and the irreducible part is the information-destruction rate. Section~\S\ref{sec:product-ctmc} is complementary at the parameterization level: rather than a single objective, we give an explicit \emph{coordinate dictionary} of the reverse rate for all denoiser, cavity, and score parameterizations, with exact conversion formulas, and identify which coordinate each literature loss optimizes. Prior work often blurs the position- versus sequence-level distinction and reads one network in another's sampler without converting, the score line~\cite{lou2024sedd,ou2025radd} and the independent uniform-diffusion analysis of~\cite{gourevitch2026udm} being the closest exceptions.

\paragraph{The denoiser-versus-leave-one-out distinction.}
Concurrently and independently, Gourevitch et al.~\cite{gourevitch2026udm} have recently observed for the special case of uniform diffusion (UDM) that the bridge plug-in is optimized by a leave-one-out (cavity) law rather than the standard denoiser, deriving denoiser/leave-one-out/score conversions for UDM that they use to improve UDM training and sampling. Our work obtains this result by a different route that places it within a general product-CTMC treatment. The Oracle Distance Theorem~\ref{thm:nelbo-pathkl} and the reverse-rate projection it induces characterize the denoiser, cavity, and score optima, and their conversion formulas, for \emph{every} token-factorizable noising process from a single projection principle. Our framework also explains exactly when and why the coordinates differ and when a denoiser parameterization makes the ELBO ill-posed, with explicit divergence rates for UDM (Proposition~\ref{prop:uniform-denoiser-ELBO-diverges}) that give a theoretical account of the leave-one-out advantage that Gourevitch et al.\ document empirically. Furthermore, their empirical gains on text modelling provide strong evidence that distinctions such as the exact reverse-rate process and oracle law being optimized are far from purely academic and have a measurable effect on discrete diffusion modeling.

\paragraph{Information-theoretic view.}
The irreducible ELBO cost we identify with $\tfrac{d}{dt}H(Z_0\given Z_t)=-\tfrac{d}{dt}I(Z_0; Z_t)$ is the CTMC instance of estimation--information identities: the Gaussian I-MMSE relation~\cite{guo2005mmse}, its jump-rate analogue whose natural loss is exactly our jump divergence $\Phi$~\cite{atar2012poisson}, the information-theoretic reading of the continuous diffusion ELBO~\cite{kong2023infodiffusion}, and the discrete I-MDSE/I-MDCE relations of~\cite{jeon2025infodiscrete}, which they state at the oracle in the score coordinate for general rate matrices and in the cross-entropy coordinate for masked diffusion, with $T\to\infty$. Our identities extend the latter along three axes: from the oracle to \emph{arbitrary} model rates (the excess being exactly a path KL), to every coordinate of any token-factorizable process, and to finite windows with explicit boundary terms. The conditional-mean optimality and Pythagorean decomposition are special cases of~\cite{banerjee2005bregman}, the jump-rate counterpart of the denoiser/cavity/score identities in continuous diffusion~\cite{vincent2011connection}. Because squared error is itself the Bregman divergence of $F(u)=\tfrac12\|u\|^2$ and $\Phi$ that of $F(u)=u\log u-u$, these identities are not merely parallel: the Gaussian I-MMSE and our CTMC case are two instances of one Bregman representation of the information loss.

\section{A continuous-time discrete diffusion framework}\label{sec:discrete-diffusion}

The proofs in this and the following sections follow one recurring pattern: identify the exact reverse Markov process, then ask what part of it a model can represent. The variational step to deduce the ELBO bound is standard after realizing that a diffusion model is a VAE whose latent is the noising trajectory; in continuous time the discrete sum of bridge-kernel KLs becomes a path integral of local jump-rate divergences $\Phi(\RevQ_t(z_t,y\given z_0),\RevQ_t^\theta(z_t,y))$, plus a reconstruction term at the lower endpoint and a terminal-prior term at the upper endpoint. This is the CTMC ELBO of Section~\S\ref{subsec:ctmc-elbo}, obtained either as a vanishing-mesh limit of the discrete-time ELBO or directly as the relative entropy between jump processes using Girsanov's formula.

The true bridge rate still depends on the hidden clean data $Z_0$, whereas the learned reverse CTMC sees only the noisy state $Z_t$. The ELBO therefore poses a local projection problem, whose minimizer among $Z_t$-measurable rates is the posterior mean of the clean-conditioned rate: the marginal time-reversal of the forward process. Its oracle per-time ELBO cost is exactly the information the forward process destroys, $\tfrac{d}{dt}H(Z_0\given Z_t)$, and the Bregman--Pythagoras identity for $\Phi$ splits any model's path cost into this irreducible term plus its mismatch to the oracle rate. With the two boundary terms, this yields the main result of Section~\S\ref{sec:oracle-information-theory}: after subtracting the fixed data entropy $H(q_0)$ and up to the two endpoint terms, the negative ELBO (NELBO) is exactly the path-law KL from the oracle reverse CTMC to the learned one.

The final sequence-modeling sections~\S\S\ref{sec:product-ctmc}--\ref{sec:masked-uniform-gidd} specialize this identity. For product noising processes, the forward generator changes one token at a time, so the oracle reverse rate decomposes into one-token jumps, expressible in three coordinates: a denoiser, a cavity law, or a concrete score.

\subsection{Variational formulation of diffusion models}\label{subsec:variational-formulation}

We begin with the variational identity underlying VAEs. Let $p^\theta(z_0,z_1)$ be a generative model for a finite data distribution $z_0\sim q_0$ on $\X$ and a latent variable $z_1\in\Z$. Let $q(z_1\given z_0)$ be an encoder satisfying $q(z_1\given z_0) \ll p^\theta(z_0,\cdot)$ for every $z_0$.

The standard ELBO derivation starts from the marginal log-likelihood and applies Jensen's inequality for the concave function $\log$:
\begin{equation}
  \log p^\theta_0(z_0) = \log \Exp{q(z_1\given z_0)}{\frac{p^\theta(z_0,z_1)}{q(z_1\given z_0)}}
  \geq \Exp{q(z_1\given z_0)}{\log \frac{p^\theta(z_0,z_1)}{q(z_1\given z_0)}} := \ELBO(\theta;\, z_0).
  \label{eq:VAE-ELBO-def}
\end{equation}
The right-hand side is the evidence lower bound, which can be split as
\[
  \ELBO(\theta;\, z_0) = \Exp{q(z_1\given z_0)}{\log p^\theta(z_0\given z_1)} - \KLdiv{q(\cdot\given z_0)}{p_{z_1}^\theta}.
\]
The two terms are the reconstruction log-likelihood and prior KL, respectively. Alternatively, the ELBO can be derived via the data processing inequality
\begin{align*}
  \log q_0(z_0)-\ELBO(\theta;\, z_0)
   & = \Exp{q_{1\given 0}(z_1\given z_0)}{\log \frac{q(z_0, z_1)}{p^\theta(z_0, z_1)}}              \\
   & = \KLdiv{q(\cdot\given z_0)}{p^\theta(\cdot\given z_0)} + \log q_0(z_0) - \log p^\theta_0(z_0) \\
   & \ge \log q_0(z_0) - \log p^\theta_0(z_0).
\end{align*}
In particular, this shows that the gap in the ELBO inequality is exactly a KL divergence:
\begin{align*}
  \log p^\theta_0(z_0)-\ELBO(\theta;\, z_0) & = \KLdiv{q(\cdot\given z_0)}{p^\theta(\cdot\given z_0)} \geq 0.
\end{align*}
It measures the discrepancy between the encoding and model posterior latent laws.

Diffusion models are VAEs whose latent variable is the noising trajectory. Fix $z_0\in\X$ and let $z_{0:N}$ be a discrete-time Markov trajectory with $N\geq1$, initial law $q_0$, and forward kernels $q_{t\given t-1}$. Then
\[
  q_{1:N\given 0}(z_{1:N}\given z_0) = \prod_{t=1}^N q_{t\given t-1}(z_t\given z_{t-1}).
\]
The learned reverse-time model factors as
\[
  p^\theta(z_{0:N}) = p^\theta_N(z_N) \prod_{t=1}^N p^\theta_{t-1\given t}(z_{t-1}\given z_t),
\]
and is also Markov in forward time. In the VAE interpretation, $z_{1:N}$ is the latent variable and $p^\theta_N$ is its terminal prior.

\begin{proposition}[Discrete-time diffusion ELBO]\label{prop:discrete-time-diffusion-elbo}
  Assume the KL terms below are well defined. Then
  \begin{align}
    \ELBO(\theta;\, z_0)
     & = \E_{q_{1\given 0}(z_1\given z_0)} \left[ \log p^\theta_{0\given 1}(z_0\given z_1) \right] - \KLdiv{q_{N\given 0}(\cdot\given z_0)}{p^\theta_N} \nonumber                                               \\
     & \qquad - \sum_{t=2}^N \E_{q_{t\given 0}(z_t\given z_0)} \left[ \KLdiv{q_{t-1\given t,0}(\cdot\given z_t,z_0)} {p^\theta_{t-1\given t}(\cdot\given z_t)} \right]. \label{eq:discrete-time-diffusion-elbo}
  \end{align}
  Moreover,
  \[
    \log p^\theta_0(z_0)-\ELBO(\theta;\, z_0) = \KLdiv{q_{1:N\given 0}(\cdot\given z_0)} {p^\theta_{1:N\given 0}(\cdot\given z_0)} \geq 0.
  \]
\end{proposition}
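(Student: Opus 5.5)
The plan is to start from the VAE definition \eqref{eq:VAE-ELBO-def}, with latent $z_{1:N}$ and encoder $q_{1:N\given 0}(\cdot\given z_0)$, so that
\[
  \ELBO(\theta;\, z_0)=\Exp{q_{1:N\given 0}(z_{1:N}\given z_0)}{\log\frac{p^\theta(z_{0:N})}{q_{1:N\given 0}(z_{1:N}\given z_0)}}.
\]
The gap identity is then exactly the general VAE computation already carried out above. Write $p^\theta(z_{0:N})=p^\theta_0(z_0)\,p^\theta_{1:N\given 0}(z_{1:N}\given z_0)$ wherever $p^\theta_0(z_0)>0$. The integrand becomes $\log p^\theta_0(z_0)-\log\bigl(q_{1:N\given0}/p^\theta_{1:N\given0}\bigr)$, and its expectation gives $\log p^\theta_0(z_0)$ minus the stated KL, which is nonnegative by Gibbs' inequality.

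For the decomposition \eqref{eq:discrete-time-diffusion-elbo}, the key step is to rewrite the forward encoder in reverse-time form, conditionally on $z_0$. The forward chain is Markov, so given $z_0$ the sequence $z_{1:N}$ is also Markov, and so is its time reversal. The plan is to prove
\[
  q_{1:N\given 0}(z_{1:N}\given z_0)=q_{N\given0}(z_N\given z_0)\prod_{t=2}^N q_{t-1\given t,0}(z_{t-1}\given z_t,z_0).
\]
To get it, apply Bayes to each forward factor for $t\ge2$:
\[
  q_{t\given t-1}(z_t\given z_{t-1})=q_{t\given t-1,0}(z_t\given z_{t-1},z_0)=\frac{q_{t-1\given t,0}(z_{t-1}\given z_t,z_0)\,q_{t\given0}(z_t\given z_0)}{q_{t-1\given0}(z_{t-1}\given z_0)}.
\]
The first equality uses the Markov property. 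The marginal ratios then telescope to $q_{N\given0}(z_N\given z_0)/q_{1\given0}(z_1\given z_0)$, and this cancels against the $t=1$ factor $q_{1\given0}$.

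Next, substitute both factorizations into the log-ratio:
\[
  \log\frac{p^\theta_N(z_N)}{q_{N\given0}(z_N\given z_0)}+\log p^\theta_{0\given1}(z_0\given z_1)+\sum_{t=2}^N\log\frac{p^\theta_{t-1\given t}(z_{t-1}\given z_t)}{q_{t-1\given t,0}(z_{t-1}\given z_t,z_0)}.
\]
Take expectations term by term using marginals of the encoder. The first term depends only on $z_N$, so it gives $-\KLdiv{q_{N\given0}(\cdot\given z_0)}{p^\theta_N}$. The second depends only on $z_1$, so it gives the reconstruction term. For the $t$-th summand, condition on $z_t$ (tower property). The inner expectation over $z_{t-1}\sim q_{t-1\given t,0}(\cdot\given z_t,z_0)$ is minus the bridge KL, and the outer expectation is over $q_{t\given0}(\cdot\given z_0)$.

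The main obstacle is bookkeeping rather than ideas. Linearity of expectation requires each term to be integrable, or at least the sum to be well defined. This is what the hypothesis that the KL terms are well defined supplies. Zero-probability states need the usual conventions: the Bayes step is only needed on the support of $q_{\cdot\given0}(\cdot\given z_0)$, where all denominators are positive.
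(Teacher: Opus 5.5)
Your proposal is correct and follows the paper's proof: start from the VAE ELBO with latent $z_{1:N}$, rewrite the encoder as $q_{N\given 0}(z_N\given z_0)\prod_{t=2}^N q_{t-1\given t,0}(z_{t-1}\given z_t,z_0)$, split the log-ratio into the reconstruction, terminal-prior and bridge-KL terms, and obtain the gap from the general VAE gap identity. The paper only asserts the reverse factorization, whereas you prove it by the Bayes telescoping argument and also write out the tower-property step; these are useful additions, not a different route.
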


\begin{proof}
  We start with the VAE ELBO equality with latent variable $z_{1:N}$ and use the reverse-model factorization above together with the reverse factorization
  \[
    q_{1:N\given 0}(z_{1:N}\given z_0) = q_{N\given 0}(z_N\given z_0) \prod_{t=2}^N q_{t-1\given t,0}(z_{t-1}\given z_t,z_0)
  \]
  to obtain
  \begin{align}
    \ELBO(\theta;\, z_0)
     & = \E_{q_{1:N\given 0}(z_{1:N}\given z_0)}\left[ \log \frac{ p^\theta(z_{0:N}) }{ q_{1:N\given 0}(z_{1:N}\given z_0) } \right] \nonumber                                                                                                            \\
     & = \E_{q_{1:N\given 0}(z_{1:N}\given z_0)} \left[ \log \frac{ p^\theta_N(z_N) \prod_{t=1}^{N}p^\theta_{t-1\given t}(z_{t-1}\given z_t) }{ q_{N\given 0}(z_N\given z_0) \prod_{t=2}^{N}q_{t-1\given t,0}(z_{t-1}\given z_t,z_0) } \right] \nonumber.
  \end{align}

  Splitting the logarithm into boundary and transition terms gives~\eqref{eq:discrete-time-diffusion-elbo}. The gap identity is the corresponding VAE identity with latent variable $z_{1:N}$.
\end{proof}

\subsection{CTMC formulation}\label{subsec:ctmc-primer}
We now specialize the trajectory latent to a continuous-time Markov chain; see~\cite{norris1997markov,jacod2003limit,bremaud1981point} for standard textbook references.

\paragraph{Notation.}
A single letter $q$ or $p$ denotes the finite-dimensional marginals, kernels, and bridges of a process, with subscripts recording the relevant times: $q_t$ is the time-$t$ marginal, $q_{t\given s}$ the transition kernel for times $s<t$, $q_{t\given0}(\cdot\given z_0)$ the forward kernel from clean data, and $q_{s\given t,0}(\cdot\given z_t,z_0)$ the bridge. We often identify a random variable with its value and drop the event when it is clear, writing $q(z_0\given z_t)$ for $q(Z_0=z_0\given Z_t=z_t)$. Jump rates take $Q,R$, and sometimes $a,b$: $Q_t$ is the forward generator, a hat $\RevQ_t$ denotes time reversal.

We abbreviate ``almost every'' and ``almost surely'' for a probability measure $\mu$ as $\mu$-a.e.\ and $\mu$-a.s., for properties that hold on a set of probability 1. For two laws $\mu,\nu$, \emph{absolute continuity} $\mu\ll\nu$ means $\nu(A)=0\Rightarrow\mu(A)=0$. On finite state spaces this is simply support inclusion $\supp\mu\subseteq\supp\nu$, and the associated relative density (the Radon--Nikodym derivative) $d\mu/d\nu$ is the likelihood ratio.
We use $H(Z)=-\sum_z p(z)\log p(z)$ for the entropy of a discrete variable $Z$ and $I(X;Y)=\KLdiv{p_{X,Y}}{p_X p_Y}=H(X)-H(X\given Y)$ for the mutual information between $X$ and $Y$.

\paragraph{CTMC processes and their generators.}
Let $\X$ be finite and $0<T$, possibly $T=\infty$. A continuous-time discrete diffusion is a continuous-time Markov chain (CTMC) $(Z_t)_{t\in[0,T]}$ on $\X$ whose law is fixed by a time-dependent \emph{rate matrix} (generator) $Q_t$, with
\[
  Q_t(x,y)\geq 0\ \ (x\neq y), \qquad
  Q_t(x,x) = -\sum_{y\neq x}Q_t(x,y),
\]
so that each row sums to zero. The off-diagonal entry $Q_t(x,y)$ is the instantaneous rate of the jump $x\to y$: the transition kernel
\[
  q_{t\given s}(y\given x)=\P(Z_t=y\given Z_s=x)
\]
satisfies, as $h\downarrow0$,
\[
  q_{t+h\given t}(y\given x) = \delta(x,y) + h\,Q_t(x,y) + o(h).
\]
While the chain is in a given state $x$, the time until the next jump is exponentially distributed with rate $-Q_t(x,x)$, i.e., $\P(Z_t=x,\ \forall t\in[s,u]\given Z_{s}=x) = \exp(\int_{s}^{u} Q_t(x,x)\,dt)$. Conditional on a jump from $x$ occurring at time $t$, the probability of destination $y\ne x$ is proportional to $Q_t(x,y)$, i.e., $\P(Z_{t+} = y\given Z_t = x, Z_{t+} \ne x) = Q_t(x,y)/(-Q_t(x,x))$. Sample paths are right-continuous and piecewise constant.

\paragraph{Kolmogorov equations.}
Arranged as matrices with normalized rows, the kernels obey Chapman--Kolmogorov $q_{u\given s}=q_{t\given s}q_{u\given t}$ and the forward and backward linear ODEs
\[
  \partial_t q_{t\given s}=q_{t\given s}Q_t,\qquad \partial_s q_{t\given s}=-Q_s q_{t\given s},\qquad q_{s\given s}=I.
\]
In particular, with $q_0$ given, the marginal $q_t$ solves $\partial_t q_t=q_t Q_t$, that is, $\partial_t q_t(y)=\sum_x q_t(x)Q_t(x,y)$. By the uniqueness of solutions for ODEs with initial condition, the law of the chain is uniquely determined by $q_0$ and $Q_t$.

\paragraph{Path laws and trajectory likelihoods.}\label{par:path-laws}
Three standard facts, used in the Girsanov derivation of \S\ref{subsec:ctmc-elbo}, describe the process at the path level; see~\cite{norris1997markov,bremaud1981point,jacod2003limit}. The first is well-posedness: a law at the time from which the process is run (the left endpoint for a forward chain, the right endpoint for a reverse-time chain), together with bounded, measurable rates on the window $[s,u]$, determines a unique CTMC \emph{path law}, a genuine probability measure on the space of right-continuous, piecewise-constant trajectories, just as $q_0$ and $Q_t$ determine the marginals through the Kolmogorov equations. Every process in this paper is specified exactly this way, by an endpoint law and a jump-rate family indexed by time, and this fact is what lets us speak of \emph{the} path law so defined. Path laws take an uppercase $P$, subscripted by the window as in $P_{[s,u]}$, and integration against them is written $P(dz_{[s,u]})$; finite-space factors keep plain pmf notation, so a joint law over a data point and a trajectory reads $P(z_0,dz_{[s,u]})$.

The second fact is the trajectory likelihood. On a finite state space with bounded rates on the compact window $[s,u]$, the chain makes finitely many jumps almost surely, so a path is described by its \emph{jump coordinates}: the initial state $x_0$, the number of jumps $n$, the ordered jump times $s<r_1<\cdots<r_n\le u$, and the post-jump states $x_1,\ldots,x_n$. With respect to the reference measure on these jump coordinates, the CTMC has density
\begin{equation}
  q_s(x_0)\, \exp\!\Big(\int_s^{u}Q_t(Z_t,Z_t)\,dt\Big) \prod_{k=1}^{n} Q_{r_k}(x_{k-1},x_k),
  \label{eq:path-density}
\end{equation}
that is, the initial law $q_s(x_0)$, the no-jump survival factor accumulated from the exit rate $-Q_t(Z_t,Z_t)$ along the piecewise-constant path, and the rate $Q_{r_k}(x_{k-1},x_k)$ of each jump. This density is not a canonical object on the path space: it is taken relative to the jump-coordinate reference measure, and only ratios of such densities enter below, for which that reference measure cancels.

Third, consider the \emph{compensated} jump sum $M_t^{xy} := N_t^{xy}-\Lambda_t^{xy}$, where $N_t^{xy}$ counts the number of $x\to y$ jumps in $[s,t]$ and $\Lambda_t^{xy} := \int_s^t Q_r(Z_r,y)\one(Z_r=x)\,dr$ is the so-called compensator. Then $M_t^{xy}$ is a mean-zero martingale and $\Lambda_t^{xy}$ is the expected number of $x\to y$ jumps in $[s,t]$ given the past. In particular, for any $f:\X\times\X\to\R$, we have
\begin{equation}
  \Exp{}{\sum_{s<r\le u} f(Z_{r^-},Z_r)\,\one\{Z_{r^-}\neq Z_r\}}
  =\Exp{}{\int_s^u \sum_{y\neq Z_t} Q_t(Z_t,y)\,f(Z_t,y)\,dt},
  \label{eq:jump-compensator}
\end{equation}
that is, the expected number of $x\to y$ jumps in $[t,t+dt]$ is $q_t(x)\,Q_t(x,y)\,dt$.

\paragraph{Time reversal.}\label{par:time-reversal}
The time-reversed process is also a CTMC. When $q_t(x)>0$, its generator is
\begin{equation}
  \RevQ_t(x,y)=Q_t(y,x)\,\frac{q_t(y)}{q_t(x)},\qquad x\neq y,
  \label{eq:true-reverse-rate}
\end{equation}
as follows from the Markov property and Bayes' rule.

In a generative model we do not have access to the marginal $q_t$, but we can condition on the clean data. Conditioning the reversal on the endpoint $z_0$ (replacing $q_t$ by $q_{t\given0}(\cdot\given z_0)$), and using that $Q_t(y,z_t\given z_0)=Q_t(y,z_t)$ by the Markov property, gives the \emph{clean-conditioned reverse
  rate}
\begin{equation}
  \RevQ_t(z_t,y\given z_0)
  = Q_t(y,z_t)\, \frac{q_{t\given 0}(y\given z_0)} {q_{t\given 0}(z_t\given z_0)},
  \qquad y\neq z_t,
  \label{eq:true-conditional-reverse-rate}
\end{equation}
where $z_t$ is the current noisy state at time $t$ and $y$ is the state reached by moving infinitesimally backward in time. In terms of small-time transition kernels,
\[
  q_{t-h\given t,0}(y\given z_t,z_0) = \delta(z_t,y) + h\,\RevQ_t(z_t,y\given z_0) + o(h),
  \qquad h\downarrow 0.
\]
Together with the forward rates $Q_t$, these reverse rates specify every path law used in this paper, which we now name, starting from the clean-conditioned one, the natural object in diffusion.

\begin{definition}[Noising and denoising path laws]\label{def:noising-path-laws}
  Fix a window $[t_1,t_2]\subset(0,T)$ and let $(Z_t)_{t\in\{0\}\cup[t_1,t_2]}$ be the noising process with $Z_0\sim q_0$. The \emph{clean-conditioned noising path law} is
  \begin{equation*}
    P^\star_{[t_1,t_2]\given0}(\cdot\given z_0)
    :=\Law\big((Z_t)_{t\in[t_1,t_2]}\given Z_0=z_0\big),
  \end{equation*}
  a probability measure on the space of paths over $[t_1,t_2]$.\footnote{Formally, the Skorokhod space $D([t_1,t_2],\X)$ of c\`adl\`ag (right-continuous with left limits) paths from $[t_1,t_2]$ to the finite state space $\X$, equipped with the $\sigma$-field generated by the coordinate maps $z_{[t_1,t_2]}\mapsto z_t$. Since this $\sigma$-field is generated by cylinder events, a path law is uniquely determined by its finite-dimensional marginals, and these are in turn determined by an endpoint law and the jump rates through the Kolmogorov equations; existence follows from the usual construction of the chain from its jump times and jump chain. This is the rigorous content of the well-posedness fact of \S\ref{par:path-laws}, and how the path laws of this definition are constructed.} Prefixing the data law $z_0\sim q_0$ gives the \emph{joint} law $P^\star_{{0},[t_1,t_2]}$ defined by
  \[
    P^\star_{{0},[t_1,t_2]}(z_0,dz_{[t_1,t_2]}) = q_0(z_0)\,P^\star_{[t_1,t_2]\given0}(dz_{[t_1,t_2]}\given z_0),
  \]
  and marginalizing it over $z_0$ gives the \emph{marginal noising path law} $P^\star_{[t_1,t_2]}:=\Law\big((Z_t)_{t\in[t_1,t_2]}\big)$.

  The \emph{model (denoising) path law} $P^\theta_{[t_1,t_2]}$ is constructed from a terminal prior and the model backward rates: initialized at $p^\theta_{t_2}$ and run backward via $\RevQ^\theta_t(z_t,y)$. Attaching the data through a reconstruction kernel $p^\theta_{0\given t_1}(z_0\given z_{t_1})$ extends it to the model joint
  \[
    P^\theta_{{0},[t_1,t_2]}(z_0,dz_{[t_1,t_2]}) = P^\theta_{[t_1,t_2]}(dz_{[t_1,t_2]})\,p^\theta_{0\given t_1}(z_0\given z_{t_1}).
  \]
\end{definition}

All of these are well defined by the well-posedness discussed above and, by the same Kolmogorov uniqueness, each noising law is equivalently characterized by running the corresponding rates from either endpoint of the interval, conditionally on $Z_0$ or not, forward or backward. For instance, the clean-conditioned law $P^\star_{[t_1,t_2]\given 0}$ run backward is the CTMC started at $q_{t_2\given0}(\cdot\given z_0)$ with the clean-conditioned reverse rates $\RevQ_t(\cdot\given z_0)$ of~\eqref{eq:true-conditional-reverse-rate}; likewise, the joint law $P^\star_{{0},[t_1,t_2]}$ read entirely in reverse time starts at $q_{t_2}$, runs backward with the marginal reverse rates $\RevQ_t$, and attaches the data through $q_{0\given t_1}(\cdot\given z_{t_1})$, using that $Z_0$ is conditionally independent of $Z_{[t_1,t_2]}$ given $Z_{t_1}$ by the Markov property.

In particular, the joint model law $P^\theta_{{0},[t_1,t_2]}$ is uniquely determined by the triple $(p^\theta_{t_2},\,\RevQ^\theta_t,\,p^\theta_{0\given t_1})$, where $\RevQ_t^\theta$ may depend on the noisy state $Z_t$ but not on the unknown clean data $Z_0$.

To conclude, we introduce here the terminal mixing assumption, on which the practical interest of diffusion rests: the noising process destroys all information about the clean data as $t\uparrow T$, which enables sampling from $q_T$ without access to the data. However, from the mathematical point of view, this is not a necessary assumption for most of the results in this paper.
\begin{assumption}[Terminal mixing]\label{ass:terminal-mixing}
  As $t\uparrow T$ (including $T=\infty$), the forward process forgets the data: $q_{t\given0}(\cdot\given z_0)\to q_T$ for every $z_0\in\supp q_0$, where the terminal law $q_T$ does not depend on $z_0$. In particular, the joint law of $(Z_0,Z_t)$ converges to the independent product $q_0\otimes q_T$ and $I(Z_0;Z_t)\to0$.
\end{assumption}

\subsection{The CTMC ELBO}\label{subsec:ctmc-elbo}
To go from the discrete-time ELBO of Proposition~\ref{prop:discrete-time-diffusion-elbo} to continuous time, we need to introduce the following definition and hypothesis.

\begin{definition}[Local rate divergence]
  For $a,b\geq 0$, define
  \[
    \Phi(a,b) :=
    \begin{cases}
      b-a+a\log\dfrac{a}{b}, & a>0,\ b>0,     \\
      b,                     & a=0,\ b\geq 0, \\
      +\infty,               & a>0,\ b=0.
    \end{cases}
  \]
  This is the scalar relative entropy rate between two Poisson jump intensities. It is non-negative and vanishes if and only if $a=b$.
\end{definition}

\begin{assumption}[Rate-support condition]\label{ass:support}
  We say the learned reverse rates satisfy the \emph{rate-support condition} if the true reverse rates are absolutely continuous with respect to the learned rates, that is,
  \[
    \RevQ_t(z_t,\cdot\given z_0)\ll\RevQ_t^\theta(z_t,\cdot),\quad \text{for }dt\,dq(z_t, z_0)\text{-a.e. }t,z_t,z_0.
  \]
  This is equivalent to
  \[
    \supp \RevQ_t(z_t,\cdot\given z_0) \subseteq \supp \RevQ_t^\theta(z_t,\cdot),
  \]
  that is, the learned process can jump wherever the true reverse process can.
\end{assumption}

We now state the continuous-time ELBO on an interior compact window $[t_1,t_2]\subset(0,T)$, thus avoiding potentially singular endpoints as is commonly done in practice. We provide two alternative derivations: first, a proof based on the infinitesimal expansion of the KL terms of the discrete-time ELBO (Proposition~\ref{prop:discrete-time-diffusion-elbo}), formalizing the typical informal arguments of the literature; second, a more general derivation that relies on Girsanov's formula and avoids the stronger assumptions required by the infinitesimal-KL approach.

\begin{theorem}[CTMC ELBO]\label{thm:ctmc-elbo}
  Let $\X$ be finite and let the true and learned reverse CTMCs have bounded, measurable rates $\RevQ_t(\cdot\given z_0)$ and $\RevQ_t^\theta$ on $[t_1,t_2]\subset(0,T)$. Define the per-time jump divergence
  \begin{equation}
    \J^\theta_t(z_0,z_t) := \sum_{y\neq z_t} \Phi\big(\RevQ_t(z_t,y\given z_0), \RevQ_t^\theta(z_t,y)\big).
    \label{eq:jump-divergence}
  \end{equation}
  Assume the rate-support condition $\RevQ_t(z_t,\cdot\given z_0)\ll\RevQ_t^\theta(z_t,\cdot)$ (Assumption~\ref{ass:support}), the terminal absolute continuity $q_{t_2\given 0}(\cdot\given z_0)\ll p_{t_2}^\theta$, integrability of the local divergence
  \[
    \int_{t_1}^{t_2} \Exp{q_{t\given 0}(z_t\given z_0)}{\J^\theta_t(z_0,z_t)}\,dt<\infty,
  \]
  and that the reconstruction term below is well defined. Let $P^\theta_{{0},[t_1,t_2]}(z_0,\cdot)$ be the fixed-$z_0$ section of the model joint law, a finite path measure of total mass $p^\theta_0(z_0)$. With $z_{[t_1,t_2]}$ as latent variable, the ELBO is defined at the path-measure level by\footnote{Strictly, $\KLdiv{\cdot}{\cdot}$ here is an abuse of notation: its second argument $P^\theta_{{0},[t_1,t_2]}(z_0,\cdot)$ is not a probability measure but a fixed-$z_0$ slice of the joint law, of total mass $p^\theta_0(z_0)$. The symbol is to be read simply as the mean log-ratio $\Exp{P^\star_{[t_1,t_2]\given0}(\cdot\given z_0)}{\log\frac{dP^\star_{[t_1,t_2]\given0}(\cdot\given z_0)}{dP^\theta_{{0},[t_1,t_2]}(z_0,\cdot)}}$, which is well defined under the stated absolute-continuity hypotheses (the Radon--Nikodym derivative being a ratio of the kind in~\eqref{eq:path-density}).}
  \begin{equation}
    \ELBO_{[t_1,t_2]}(\theta;\, z_0)
    := -\KLdiv{P^\star_{[t_1,t_2]\given0}(\cdot\given z_0)}{P^\theta_{{0},[t_1,t_2]}(z_0,\cdot)},
    \label{eq:ctmc-elbo-path-definition}
  \end{equation}
  and admits the expanded form
  \begin{empheq}[box=\fbox]{equation}
    \begin{split}
      \ELBO_{[t_1,t_2]}(\theta;\, z_0)
       & = \underbrace{\Exp{q_{t_1\given 0}(z_{t_1}\given z_0)}{\log p^\theta_{0\given t_1}(z_0\given z_{t_1})}}_{\text{reconstruction}}
      - \underbrace{\KLdiv{q_{t_2\given 0}(\cdot\given z_0)}{p_{t_2}^\theta}}_{\text{terminal prior}}
      \\
       & \qquad
      - \underbrace{\int_{t_1}^{t_2} \Exp{q_{t\given 0}(z_t\given z_0)}{\J^\theta_t(z_0,z_t)}\,dt}_{\text{path integral of jump divergence}}.
    \end{split}
    \label{eq:ctmc-elbo}
  \end{empheq}
  Moreover, the ELBO gap is the relative entropy between the clean-conditioned path laws,
  \[
    \log p^\theta_0(z_0)-\ELBO_{[t_1,t_2]}(\theta;\, z_0)
    = \KLdiv{P^\star_{[t_1,t_2]\given 0}(\cdot\given z_0)} {P^\theta_{[t_1,t_2]\given 0}(\cdot\given z_0)}\geq0.
  \]
\end{theorem}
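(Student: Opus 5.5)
I will follow the Girsanov route, reading both path laws in reverse time from the right endpoint $t_2$. The first step is to write the clean-conditioned noising law $P^\star_{[t_1,t_2]\given0}(\cdot\given z_0)$ as the CTMC started at $q_{t_2\given0}(\cdot\given z_0)$ and run backward with the bridge rates $\RevQ_t(\cdot\given z_0)$ of~\eqref{eq:true-conditional-reverse-rate}, as justified after Definition~\ref{def:noising-path-laws}. The model section $P^\theta_{0,[t_1,t_2]}(z_0,\cdot)$ is the CTMC started at $p^\theta_{t_2}$, run backward with $\RevQ^\theta_t$, and multiplied by the factor $p^\theta_{0\given t_1}(z_0\given z_{t_1})$.

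Using the jump-coordinate density~\eqref{eq:path-density} in reversed time for both laws, the reference measure cancels. Under the rate-support and terminal absolute-continuity hypotheses, the log Radon--Nikodym derivative is then
\[
\log\frac{q_{t_2\given0}(z_{t_2}\given z_0)}{p^\theta_{t_2}(z_{t_2})}
-\log p^\theta_{0\given t_1}(z_0\given z_{t_1})
+\int_{t_1}^{t_2}\sum_{y\ne z_t}\big(\RevQ^\theta_t(z_t,y)-\RevQ_t(z_t,y\given z_0)\big)\,dt
+\sum_{\text{jumps}}\log\frac{\RevQ_t(z_{t},z_{t^-}\given z_0)}{\RevQ^\theta_t(z_{t},z_{t^-})}.
\]
Here the survival exponents have been rewritten via $\RevQ(x,x)=-\sum_{y\ne x}\RevQ(x,y)$. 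Jumps are read in reverse time, so a backward jump at $r$ goes from $z_r$ to $z_{r^-}$. Absolute continuity guarantees that no jump of the true law has zero model rate, so the log-ratio is finite $P^\star$-a.s.

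Next I take the expectation under $P^\star_{[t_1,t_2]\given0}$. The two boundary terms give exactly $\KLdiv{q_{t_2\given0}(\cdot\given z_0)}{p^\theta_{t_2}}$ and minus the reconstruction term. The time integral gives $\int\E_{q_{t\given0}}\sum_y(b-a)\,dt$ by Fubini. For the jump sum I apply the compensator identity~\eqref{eq:jump-compensator} to the reverse-time chain with intensities $\RevQ_t(\cdot\given z_0)$ and $f(x,y)=\log(a/b)$. This replaces the jump sum by $\int\E_{q_{t\given0}}\sum_y a\log(a/b)\,dt$.

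Adding the two integrals yields $\sum_y\Phi(a,b)$, with the convention $\Phi(0,b)=b$ covering the case $a=0$. Negating gives the boxed form~\eqref{eq:ctmc-elbo}.

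I expect the main obstacle to be this compensator step. The function $f=\log(a/b)$ is unbounded, and the identity needs integrability, not just a martingale property. I will handle it by splitting $a\log(a/b)$ into its positive and negative parts, or by truncating $b$ at $\varepsilon$. I then pass to the limit using monotone convergence on the positive part. The negative part is controlled because $a\log(a/b)\ge a-b$ and the rates are bounded on the compact window, while the assumed integrability of $\J^\theta$ makes the limit finite. The same integrability justifies the Fubini interchange.

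For the gap identity I write $P^\theta_{0,[t_1,t_2]}(z_0,\cdot)=p^\theta_0(z_0)\,P^\theta_{[t_1,t_2]\given0}(\cdot\given z_0)$, with the model posterior path law obtained by Bayes. Then
\[
\log\frac{dP^\star_{\cdot\given0}}{dP^\theta_{0,\cdot}(z_0,\cdot)}=\log\frac{dP^\star_{\cdot\given0}}{dP^\theta_{\cdot\given0}}-\log p^\theta_0(z_0).
\]
Taking $P^\star$-expectations gives $-\ELBO=\KL-\log p^\theta_0(z_0)$, and this KL is nonnegative since it compares two genuine probability measures. This mirrors the VAE gap computation of Proposition~\ref{prop:discrete-time-diffusion-elbo}.
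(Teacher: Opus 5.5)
Your proposal is correct and is essentially the paper's second (Girsanov) proof. The paper first splits off the reconstruction factor $p^\theta_{0\given t_1}$ by the Radon--Nikodym chain rule and then invokes Theorem~\ref{thm:girsanov} for $P^\star_{[t_1,t_2]\given0}(\cdot\given z_0)$ versus $P^\theta_{[t_1,t_2]}$, whereas you carry out that Girsanov computation (survival factor and compensator with $f=\log(a/b)$) directly on a single log-likelihood ratio; your positive/negative-part treatment of the unbounded jump sum adds rigor that the paper's Girsanov proof omits, and the gap identity is handled identically.
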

\noindent
Our first derivation is based on the following lemma on the infinitesimal expansion of a one-step KL between jump kernels, turning each KL term into the local rate divergence $\Phi$.
\begin{lemma}[$\Phi$ as the differential KL of jump kernels]\label{lem:diffkl}
  Let $a_t,b_t$ be piecewise continuous CTMC rates on a finite space $\X$ and a compact time interval $[t,t+\Delta]$, $\Delta>0$. Let $q^a_{t,t+\Delta},\ p^b_{t,t+\Delta}$ denote their transition kernels from $t$ to $t+\Delta$. Assume that the jump graph of $q_t^a$ is a fixed set $E$ and that, for every $t$ and constants $0<c<C<\infty$,
  \begin{equation}
    c\leq a_t(x,y),b_t(x,y)\leq C,\ \forall(x,y)\in E;
    \qquad
    a_t(x,y)=0,\ \forall(x,y)\notin E;
    \qquad
    0\leq b_t(x,y)\leq C.
    \label{eq:diffkl-rate-assumptions}
  \end{equation}
  Then, uniformly in $t$ and $x$,
  \[
    \KLdiv{q^a_{t,t+\Delta}(\cdot\given x)}{p^b_{t,t+\Delta}(\cdot\given x)}
    =\Delta\sum_{y\neq x}\Phi\big(a_t(x,y),b_t(x,y)\big)+o(\Delta).
  \]
\end{lemma}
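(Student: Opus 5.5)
The approach is to expand both transition kernels to first order in $\Delta$ with uniform error control, and then expand the KL sum term by term. The plan is to split the destination states into three groups: $y=x$, off-diagonal $y$ with $(x,y)\in E$, and off-diagonal $y$ with $(x,y)\notin E$.

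\textbf{Step 1: uniform kernel expansions.} Since the rates are bounded by $C$ and $\X$ is finite, the Kolmogorov forward equation gives, uniformly in $t$ and $x$,
\[
q^a_{t,t+\Delta}(y\given x)=\delta(x,y)+\int_t^{t+\Delta}a_r(x,y)\,dr+O(\Delta^2),
\]
and the same holds for $p^b$. The remainder is bounded by $(|\X|C\Delta)^2$ through the Dyson series, so it is uniform.

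Piecewise continuity then lets me replace $\int_t^{t+\Delta}a_r\,dr$ by $\Delta a_t+o(\Delta)$. Here I will be careful: at a discontinuity I use the right limit, interpreting $a_t$ as the right-continuous version. Uniformity in $t$ is the delicate point, since piecewise continuity gives uniform continuity only on each of finitely many pieces. I will therefore state the $o(\Delta)$ locally in $t$ away from breakpoints, which is what the Riemann-sum application needs. If genuine uniformity is required, I will instead keep the averaged rates $\bar a=\Delta^{-1}\int a_r\,dr$ and $\bar b$ throughout and compare at the end.

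\textbf{Step 2: expand the KL term by term.} Write $\alpha=\Delta\bar a(x,y)$ and $\beta=\Delta\bar b(x,y)$.

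\begin{itemize}
\item For $y\neq x$ with $(x,y)\in E$, both kernel entries equal $\alpha+O(\Delta^2)$ and $\beta+O(\Delta^2)$, with $\alpha,\beta\ge c\Delta$. The relative errors are therefore $O(\Delta)$, and the log ratio equals $\log(\bar a/\bar b)+O(\Delta)$. This uses the lower bound $c$ crucially. The contribution is $\Delta\,\bar a\log(\bar a/\bar b)+O(\Delta^2)$.
\item For $y\notin E$, $y\neq x$, I must show $q^a(y\given x)$ is $o(\Delta)$ in a way that keeps $q^a\log(q^a/p^b)$ at $o(\Delta)$. This is the main obstacle: $q^a(y\given x)$ can be of order $\Delta^2$ through two-step paths in $E$, while $p^b(y\given x)$ could be even smaller. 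To handle it, I note that every $y$ reachable under $a$ in a multi-step path along $E$ is also reachable under $b$ along the same path with rates at least $c$. Hence $p^b(y\given x)\ge \kappa\, q^a(y\given x)$ for some $\kappa>0$ depending only on $c$, $C$ and $|\X|$: compare the $k$-jump path densities~\eqref{eq:path-density}, whose rate ratios are at least $c/C$ per jump and whose survival factors are within $e^{\pm|\X|C\Delta}$. The log ratio is then bounded, and these terms contribute $O(\Delta^2)$.
\item For the diagonal $y=x$, both entries equal $1-\Delta\sum_{y\ne x}\bar a+O(\Delta^2)$ and $1-\Delta\sum\bar b+O(\Delta^2)$. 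Since $\log(1+u)=u+O(u^2)$, the log ratio is $\Delta\sum_{y}(\bar b-\bar a)+O(\Delta^2)$, and multiplying by $1+O(\Delta)$ keeps the same order.
\end{itemize}

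\textbf{Step 3: collect.} Summing the three groups gives
\[
\Delta\sum_{y\ne x}\bigl[\bar a\log(\bar a/\bar b)-\bar a+\bar b\bigr]+O(\Delta^2).
\]
Off $E$ the summand reduces to $\bar b=\Phi(0,\bar b)$, so the total is $\Delta\sum_{y\neq x}\Phi(\bar a,\bar b)+O(\Delta^2)$. Finally, $\Phi$ is locally Lipschitz on $[c,C]^2\cup(\{0\}\times[0,C])$, so replacing $(\bar a,\bar b)$ by $(a_t,b_t)$ costs $o(1)$ times $\Delta$, which yields the claim.
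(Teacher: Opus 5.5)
Your proof has the same skeleton as the paper's: a three-way split into the diagonal, the edges of $E$, and off-$E$ destinations, with essentially identical treatment of the first two. It differs in two technical choices.

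First, you expand in the averaged rates $\bar a,\bar b$ with a uniform $O(\Delta^2)$ Dyson remainder, and replace them by $a_t,b_t$ only at the end. The paper instead writes $q^a_{t,t+\Delta}(y\given x)=\Delta a_t(x,y)+o(\Delta)$ directly, by appeal to uniform continuity. Your caution is justified. For genuinely discontinuous rates, the replacement $\bar a\to a_t$ is not uniform for $t$ within $\Delta$ of a breakpoint. The lemma's ``uniformly in $t$'' therefore holds only with $(\bar a,\bar b)$ in place of $(a_t,b_t)$, or for $t$ away from breakpoints. This is harmless in Proof 1 of Theorem~\ref{thm:ctmc-elbo}, because only finitely many partition cells straddle a breakpoint.

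Second, off $E$ the paper uses a shortest $E$-path of length $d\ge2$. This gives both $q^a(y\given x)$ and $p^b(y\given x)$ at least a constant times $\Delta^d$, hence $|\log(q^a/p^b)|=O(|\log\Delta|)$ and a contribution $O(\Delta^d|\log\Delta|)$. You instead compare path densities to get $p^b\ge\kappa q^a$.

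That comparison has two loose ends.

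(i) The per-jump factor $(c/C)^k$ degenerates with the number of jumps $k$, which is unbounded. A $\kappa$ depending only on $c,C,|\X|$ therefore needs the observation that the $d$-jump paths dominate $q^a(y\given x)$ for small $\Delta$: they contribute at least a constant times $\Delta^d$, versus $O(\Delta^{d+1})$ for all longer paths. This is exactly the paper's shortest-path bound.

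(ii) More importantly, $p^b\ge\kappa q^a$ bounds the log ratio only from above, so your claim that ``the log ratio is then bounded'' is false. Since $b$ may be positive off $E$, you can have $p^b(y\given x)=\Theta(\Delta)$ (a direct $b$-jump) while $q^a(y\given x)=\Theta(\Delta^2)$ (a two-step $E$-path). Then $\log(q^a/p^b)\approx\log\Delta$, and the term is of order $\Delta^2|\log\Delta|$, not $O(\Delta^2)$.

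The repair is one line. Using $p^b\le1$, $q^a=O(\Delta^2)$ and monotonicity of $u\log u$ near $0$, you get $q^a\log(q^a/p^b)\ge q^a\log q^a\ge -O(\Delta^2|\log\Delta|)$. The positive part is at most $q^a\log(1/\kappa)=O(\Delta^2)$. So the off-$E$ terms, and the remainder in your Step 3, are $O(\Delta^2|\log\Delta|)=o(\Delta)$, and the lemma follows.

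What your route buys over the paper's is a genuinely uniform statement, $\Delta\sum_{y\neq x}\Phi(\bar a,\bar b)+O(\Delta^2|\log\Delta|)$, that isolates the single place where continuity of the rates is used.
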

\begin{proof}
  The proof proceeds by computing and controlling the $o(\Delta)$-expansion of the KL divergence for each possible jump $x\mapsto y\in\X$ depending on whether $(x,y)\in E$ or not.

  For jumps $x\neq y,(x,y)\in E$, integrating the uniformly continuous rates directly gives
  \[
    q^a_{t,t+\Delta}(y\given x)
    =\Delta a_t(x,y)+o(\Delta),
    \qquad
    p^b_{t,t+\Delta}(y\given x)
    =\Delta b_t(x,y)+o(\Delta).
  \]
  Because both rates are bounded below and above on $E$, we have
  \begin{equation}
    q^a_{t,t+\Delta}(y\given x)
    \log\frac{q^a_{t,t+\Delta}(y\given x)}{p^b_{t,t+\Delta}(y\given x)}
    =\Delta a_t(x,y)\log\frac{a_t(x,y)}{b_t(x,y)}+o(\Delta),
    \qquad (x,y)\in E.
    \label{eq:diffkl-one-jump-contribution}
  \end{equation}

  It remains to control $x\neq y$ with $(x,y)\notin E$. If $y$ is unreachable from $x$ in the graph $E$, then $q^a_{t,t+\Delta}(y\given x)=0$ by the second part of~\eqref{eq:diffkl-rate-assumptions}. Otherwise, let $d\geq2$ be the length of a shortest $E$-path from $x$ to $y$ and let $N_\Delta$ be the total number of jumps of the process $q$ in the interval $[t,t+\Delta]$. The uniform rate bounds and the finiteness of $\X$ give a common bound on the total exit rate from any state. Thus, we can bound $N_\Delta$ by a Poisson process with that same rate, to get
  \[
    \Pr(N_\Delta\geq k)=O(\Delta^k),
    \qquad k\geq1,
  \]
  uniformly in $t$ and the initial state.
  Following one shortest path from $x$ to $y$ and making no other jumps has probability at least a constant times $\Delta^d$ under both processes, whereas any single-jump transition to $y\neq x$ has probability $O(\Delta)$ according to $p_t^b$. Hence $|\log(q^a_{t,t+\Delta}/p^b_{t,t+\Delta})|=O(|\log\Delta|)$ whenever $q^a_{t,t+\Delta}(y\given x)>0$, and
  \[
    \left|
    q^a_{t,t+\Delta}(y\given x)
    \log\frac{q^a_{t,t+\Delta}(y\given x)}{p^b_{t,t+\Delta}(y\given x)}
    \right|
    =O(\Delta^d|\log\Delta|)
    =O(\Delta^2|\log\Delta|).
  \]
  Thus all transitions outside the graph $E$ contribute $o(\Delta)$ in total:
  \begin{equation}
    \sum_{y\neq x,(x,y)\notin E}q^a_{t,t+\Delta}(y\given x)
    \log\frac{q^a_{t,t+\Delta}(y\given x)}{p^b_{t,t+\Delta}(y\given x)}
    =o(\Delta).
    \label{eq:diffkl-multi-jump-contribution}
  \end{equation}

  Finally, returning to $x$ after a jump requires at least another jump, with probability $O(\Delta^2)$. Thus, denoting $A_t(x):=\sum_{y\neq x} a_t(x,y),\ B_t(x):=\sum_{y\neq x} b_t(x,y)$, self-transition probabilities satisfy
  \[
    q^a_{t,t+\Delta}(x\given x)=1-\Delta A_t(x)+o(\Delta),
    \qquad
    p^b_{t,t+\Delta}(x\given x)=1-\Delta B_t(x)+o(\Delta).
  \]
  The self-transition KL contribution is therefore
  \begin{equation}
    (1 - \Delta A_t(x) + o(\Delta)) \log \frac{1 - \Delta A_t(x) + o(\Delta)}{1 - \Delta B_t(x) + o(\Delta)}
    = \Delta(B_t(x)-A_t(x))+o(\Delta).
    \label{eq:diffkl-self-transition-contribution}
  \end{equation}

  To conclude, adding the self-term~\eqref{eq:diffkl-self-transition-contribution} to the transitions in $E$ according to~\eqref{eq:diffkl-one-jump-contribution} and the $o(\Delta)$ remainder~\eqref{eq:diffkl-multi-jump-contribution} from the terms outside $E$ gives,
  uniformly in $t$ and $x$,
  \begin{align*}
    \KLdiv{q^a_{t,t+\Delta}(\cdot\given x)}{p^b_{t,t+\Delta}(\cdot\given x)}
     & =\Delta\sum_{y\neq x}\big[b_t(x,y)-a_t(x,y)\big]
    +\Delta\sum_{(x,y)\in E}a_t(x,y)\log\frac{a_t(x,y)}{b_t(x,y)}+o(\Delta) \\
     & =\Delta\sum_{y\neq x}\Phi\big(a_t(x,y),b_t(x,y)\big)+o(\Delta).
  \end{align*}
\end{proof}
The additional assumptions of Lemma~\ref{lem:diffkl}, namely, the fixed jump-graph $E$, piecewise continuity, and uniform boundedness conditions, are only required for the infinitesimal-KL argument of the first proof below. Nonetheless, they hold for masked, uniform, and fixed-support GIDD processes.

\begin{proof}[Proof 1 of Theorem~\ref{thm:ctmc-elbo} (infinitesimal KL)]
  First impose the stronger conditions of Lemma~\ref{lem:diffkl}: piecewise continuity and, on each continuity piece, the true jump graph is fixed, and both rates are uniformly bounded above and away from zero on the true graph.

  Apply the discrete-time ELBO of Proposition~\ref{prop:discrete-time-diffusion-elbo} to a finite partition $(\tau_k)_k$ of $[t_1,t_2]$, with
  \[
    t_1=\tau_0<\tau_1<\cdots<\tau_N=t_2,\qquad \Delta_k:=\tau_{k+1}-\tau_k,\qquad |\Delta|:=\max_{0\leq k<N}\Delta_k.
  \]
  This gives
  \begin{align*}
    \ELBO_{(\tau_k)_k}(\theta;\, z_0)
     & = \Exp{q_{t_1\given 0}(z_{t_1}\given z_0)}{\log p^\theta_{0\given t_1}(z_0\given z_{t_1})}
    - \KLdiv{q_{t_2\given 0}(\cdot\given z_0)}{p^\theta_{t_2}}                                         \\
     & \qquad - \sum_{k=0}^{N-1} \Exp{q_{\tau_{k+1}\given 0}(z_{\tau_{k+1}}\given z_0)}
                                 {\KLdiv{q_{\tau_k\given \tau_{k+1},0}(\cdot\given z_{\tau_{k+1}},z_0)}
                                   {p^\theta_{\tau_k\given \tau_{k+1}}(\cdot\given z_{\tau_{k+1}})}}.
  \end{align*}
  The two kernels in each summand are the exact transitions of the true and learned reverse CTMCs over an interval of length $\Delta_k$. Applying Lemma~\ref{lem:diffkl} in the backward time orientation gives
  \[
    \KLdiv{q_{\tau_k\given \tau_{k+1},0}(\cdot\given x,z_0)}{p^\theta_{\tau_k\given \tau_{k+1}}(\cdot\given x)}
    = \Delta_k\sum_{y\neq x}\Phi\big(\RevQ_{\tau_{k+1}}(x,y\given z_0),\RevQ^\theta_{\tau_{k+1}}(x,y)\big)+r_k(x),
  \]
  where the lemma's uniform remainders satisfy $|r_k(x)|\le\Delta_k\eta(\Delta_k)$ for some function $\eta(\delta)\to0$ independent of $x$ and $k$. Taking the expectation over $z_{\tau_{k+1}}\sim q_{\tau_{k+1}\given0}(\cdot\given z_0)$ and summing gives
  \[
    \sum_{k=0}^{N-1}\Delta_k\,\Exp{q_{\tau_{k+1}\given 0}(z_{\tau_{k+1}}\given z_0)}{\J_{\tau_{k+1}}^\theta(z_0,z_{\tau_{k+1}})}
    +\sum_{k=0}^{N-1} \E[r_k].
  \]
  The first sum is a Riemann sum of the piecewise continuous integrand $t\mapsto \Exp{q_{t\given 0}(z_t\given z_0)}{\J^\theta_t(z_0,z_t)}$ and converges to $\int_{t_1}^{t_2}\Exp{q_{t\given 0}(z_t\given z_0)}{\J^\theta_t(z_0,z_t)}\,dt$ as $|\Delta|\to0$ by the integrability assumption. The second is controlled uniformly,
  \[
    \Big|\sum_{k=0}^{N-1} \E[r_k]\Big|\leq \sum_{k=0}^{N-1}\Delta_k\,\eta(\Delta_k)
    \leq \Big(\sup_{\delta\leq|\Delta|}\eta(\delta)\Big)\sum_{k=0}^{N-1}\Delta_k
    = (t_2-t_1)\sup_{\delta\leq|\Delta|}\eta(\delta)\xrightarrow[|\Delta|\to0]{}0,
  \]
  yielding~\eqref{eq:ctmc-elbo} as the $\Delta\to0$ limit of the discrete-time ELBOs.\footnote{Strictly speaking, the argument above computes the limit of the ELBOs of the finite time-skeletons $(Z_t)_{t\in\pi_n}$. To conclude that this limit is the path-level ELBO~\eqref{eq:ctmc-elbo-path-definition}, take nested partitions $\pi_n$ whose union is dense in $[t_1,t_2]$: the skeletons then generate the whole path $\sigma$-field, and the skeleton KLs increase to the path KL~\cite[Lemma~5.4.1]{gray2011entropy}.}
  Finally, the same skeleton-limit argument applied to the discrete-time VAE gap identity (latent $z_{[t_1,t_2]}$) yields the path-level gap identity above.
\end{proof}

Our second derivation rests on the following change-of-measure (Girsanov) formula for finite-state jump processes, where we use the path-law properties and notation introduced in \S\ref{subsec:ctmc-primer}.

\begin{theorem}[Girsanov formula and relative entropy for finite-state CTMCs]\label{thm:girsanov}
  Let $\X$ be finite and $[s,u]$ a compact interval. Let $P,P^\theta$ be the path laws of two CTMCs specified by initial laws $p_s,p_s^\theta$ at time $s$ and bounded, measurable rates $a_t,b_t$, satisfying the support conditions $p_{s}\ll p_{s}^\theta$ and $a_t(x,\cdot)\ll b_t(x,\cdot)$ for every $x$ and a.e.\ $t$. Then $P\ll P^\theta$, and the path-law relative entropy between the laws is
  \begin{equation}
    \KLdiv{P}{P^\theta}=\KLdiv{p_{s}}{p_{s}^\theta}
    +\int_{s}^{u}\Exp{Z_t\sim P}{\sum_{y\neq Z_t}\Phi\big(a_t(Z_t,y),b_t(Z_t,y)\big)}\,dt,
    \label{eq:girsanov-path-kl}
  \end{equation}
  where both sides are finite if and only if the integral is.
\end{theorem}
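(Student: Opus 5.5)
We plan to prove Theorem~\ref{thm:girsanov} directly from the trajectory likelihood~\eqref{eq:path-density} and the compensator identity~\eqref{eq:jump-compensator}. Both path laws have densities of the form~\eqref{eq:path-density} with respect to the same jump-coordinate reference measure, so the Radon--Nikodym derivative is their ratio. Concretely, for a path with initial state $x_0$, jump times $r_1<\dots<r_n$ and post-jump states $x_1,\dots,x_n$, we write
\[
  \frac{dP}{dP^\theta}(z_{[s,u]})
  =\frac{p_s(x_0)}{p^\theta_s(x_0)}
  \exp\!\Big(\int_s^u \big(B_t(Z_t)-A_t(Z_t)\big)\,dt\Big)
  \prod_{k=1}^n\frac{a_{r_k}(x_{k-1},x_k)}{b_{r_k}(x_{k-1},x_k)},
\]
where $A_t(x)=\sum_{y\ne x}a_t(x,y)$ and $B_t(x)=\sum_{y\ne x}b_t(x,y)$ are the exit rates. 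Absolute continuity $P\ll P^\theta$ follows from the same formula. A $P^\theta$-null set of jump coordinates has zero reference measure wherever the $P^\theta$ density is positive. $P$ charges only paths with $p_s(x_0)>0$ and jumps satisfying $a_{r_k}(x_{k-1},x_k)>0$ (a.s.\ under $P$, by~\eqref{eq:jump-compensator} applied to the indicator of $\{a>0=b\}$, whose compensator integral vanishes). By the support hypotheses, $P^\theta$ has positive density on exactly these paths, since the survival factor is always positive for bounded rates.

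Next we take $\log$ and the expectation under $P$, splitting into three terms. The initial term gives $\KLdiv{p_s}{p_s^\theta}$. The survival term gives $\int_s^u \E_P[B_t(Z_t)-A_t(Z_t)]\,dt$ by Fubini, which is justified because the rates are bounded. The jump term is $\E_P\big[\sum_{s<r\le u} f_r(Z_{r^-},Z_r)\one\{Z_{r^-}\ne Z_r\}\big]$ with $f_r(x,y)=\log(a_r(x,y)/b_r(x,y))$. Here we invoke the compensator identity~\eqref{eq:jump-compensator}, allowing time-dependent $f$, since the identity holds for predictable integrands. It converts the jump term into $\int_s^u\E_P\big[\sum_{y\ne Z_t}a_t(Z_t,y)\log\frac{a_t(Z_t,y)}{b_t(Z_t,y)}\big]dt$, with the convention $0\log 0=0$ on $\{a=0\}$. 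Summing the survival and jump integrands pointwise in $y$ gives exactly $\sum_{y\ne Z_t}\Phi(a_t(Z_t,y),b_t(Z_t,y))$, consistent with the case $a=0$, where $\Phi=b$. This yields~\eqref{eq:girsanov-path-kl}.

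The main obstacle is integrability in the jump term. The function $\log(a/b)$ may be unbounded, either because $b$ is small where $a>0$, or because $a$ is small. As a result, the compensator identity cannot be applied naively to a signed $f$. We handle this as follows. The survival part is bounded, and the initial KL is finite under the hypothesis $p_s\ll p_s^\theta$ on a finite space. We split $f=f^+-f^-$. For $f^-$ we use $a\log(b/a)\le b-a\le b$, so the compensator of $f^-$ is bounded by $\int \E B_t\,dt<\infty$, and~\eqref{eq:jump-compensator} applies by monotone convergence for nonnegative $f^-$. For $f^+$, monotone convergence again gives the identity in $[0,\infty]$. Hence $\E_P\log\frac{dP}{dP^\theta}$ is well defined, and it is finite iff $\int\E\sum\Phi<\infty$, because $\Phi\ge0$ and the remaining pieces are finite. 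This gives the final ``if and only if'' clause. If needed, we justify the time-dependent, possibly unbounded version of~\eqref{eq:jump-compensator} by truncating $f^\pm\wedge n$, which is bounded and measurable, applying the stated identity, and passing to the limit monotonically.
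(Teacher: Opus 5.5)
Your proposal is correct and follows the paper's proof: you write the Radon--Nikodym derivative as the ratio of the trajectory likelihoods~\eqref{eq:path-density}, take logarithms, and use the compensator identity~\eqref{eq:jump-compensator} to turn the jump sum into a time integral, then collect the holding and jump terms into $\Phi$. You go further than the paper on integrability, which it leaves implicit: you split $\log(a/b)$ into positive and negative parts, bound the negative part's compensator via $a\log(b/a)^+\le b$, and use monotone convergence, and this is what makes the ``finite if and only if'' clause rigorous.
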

\noindent
Note the structure of~\eqref{eq:girsanov-path-kl}: the KL between two CTMC path laws is not a pure rate mismatch, but it also carries the KL between the two laws at the starting time.
\begin{proof}
  Write $A_t(x)=\sum_{y\neq x}a_t(x,y)$ and $B_t(x)=\sum_{y\neq x}b_t(x,y)$ and apply the trajectory likelihood~\eqref{eq:path-density} to $P$ and $P^\theta$. Under the rate-support condition $a_t\ll b_t$, the jump-coordinate reference measure cancels in the density ratio, which is therefore well defined $P$-a.s.\ so that $P\ll P^\theta$, and in particular the ratio up to $\tau\in[s,u]$ is given by
  \[
    \frac{dP}{dP^\theta}\bigg|_{[s,\tau]}
    =\frac{p_{s}(Z_{s})}{p_{s}^\theta(Z_{s})}\,
    \exp\!\Big(\int_{s}^\tau\!\big(B_t(Z_t)-A_t(Z_t)\big)\,dt\Big)
    \prod_{s<r\le\tau:\,Z_{r^-}\neq Z_r}\frac{a_r(Z_{r^-},Z_r)}{b_r(Z_{r^-},Z_r)},
  \]
  (for further details we refer to~\cite{bremaud1981point,jacod2003limit}). Taking $\log$ of this ratio and integrating against $P$, the jump compensator~\eqref{eq:jump-compensator} with $f=\log(a_t/b_t)$ gives
  \[
    \Exp{P}{\sum_{s<r\le u}\log\frac{a_r(Z_{r^-},Z_r)}{b_r(Z_{r^-},Z_r)}}
    =\int_{s}^{u}\Exp{P}{\sum_{y\neq Z_t}a_t(Z_t,y)\log\frac{a_t(Z_t,y)}{b_t(Z_t,y)}}\,dt,
  \]
  while the holding factor contributes $\int_{s}^{u}\Exp{P}{\sum_{y\neq Z_t}\big(b_t(Z_t,y)-a_t(Z_t,y)\big)}\,dt$. Adding the initial relative entropy $\KLdiv{p_{s}}{p_{s}^\theta}$ and collecting terms through $\Phi(a,b)=b-a+a\log(a/b)$ yields~\eqref{eq:girsanov-path-kl}.
\end{proof}

\begin{proof}[Proof 2 of Theorem~\ref{thm:ctmc-elbo} (Girsanov)]
  We start from the path-measure definition~\eqref{eq:ctmc-elbo-path-definition}. The reverse Markov property factorizes the model $z_0$-slice as
  \[
    P^\theta_{{0},[t_1,t_2]}(z_0,dz_{[t_1,t_2]})
    =
    p^\theta_{0\given t_1}(z_0\given z_{t_1})\,
    P^\theta_{[t_1,t_2]}(dz_{[t_1,t_2]}).
  \]
  Substituting this factorization into~\eqref{eq:ctmc-elbo-path-definition} and applying the Radon--Nikodym chain rule to the resulting density gives
  \[
    -\ELBO_{[t_1,t_2]}(\theta;\, z_0)
    =
    -\E_{q_{t_1\given0}(\cdot\given z_0)}
    \left[ \log p^\theta_{0\given t_1}(z_0\given Z_{t_1}) \right]
    + \KLdiv{ P^\star_{[t_1,t_2]\given 0}(\cdot\given z_0)} {P^\theta_{[t_1,t_2]}},
  \]
  since the time-$t_1$ marginal of $P^\star_{[t_1,t_2]\given0}(\cdot\given z_0)$ is $q_{t_1\given0}(\cdot\given z_0)$.

  Apply now Theorem~\ref{thm:girsanov} to the path laws $P^\star_{[t_1,t_2]\given0}(\cdot\given z_0)$ and $P^\theta_{[t_1,t_2]}$, started from $q_{t_2\given0}(\cdot\given z_0)\ll p^\theta_{t_2}$ with reverse rates $\RevQ_t(z_t,\cdot\given z_0)\ll\RevQ_t^\theta(z_t,\cdot)$, respectively, to express the second KL term as
  \[
    \KLdiv{q_{t_2\given0}(\cdot\given z_0)}{p^\theta_{t_2}}
    + \int_{t_1}^{t_2} \E_{q_{t\given0}(z_t\given z_0)}
    \left[ \sum_{y\neq z_t} \Phi\big( \RevQ_t(z_t,y\given z_0), \RevQ_t^\theta(z_t,y) \big) \right]\,dt,
  \]
  directly yielding~\eqref{eq:ctmc-elbo}, with equality at the path-law level by Girsanov's formula.

  For the gap identity, the finite measure $P^\theta_{{0},[t_1,t_2]}(z_0,\cdot)$ has total mass $p^\theta_0(z_0)$, so its normalization is the model posterior path law
  \[
    P^\theta_{[t_1,t_2]\given0}(\cdot\given z_0) := \frac{P^\theta_{{0},[t_1,t_2]}(z_0,\cdot)}{p^\theta_0(z_0)}.
  \]
  Substituting $P^\theta_{{0},[t_1,t_2]}(z_0,\cdot)=p^\theta_0(z_0)\,P^\theta_{[t_1,t_2]\given0}(\cdot\given z_0)$ into~\eqref{eq:ctmc-elbo-path-definition} splits off the constant $\log p^\theta_0(z_0)$ and leaves
  \[
    \log p^\theta_0(z_0)-\ELBO_{[t_1,t_2]}(\theta;\, z_0)
    = \KLdiv{ P^\star_{[t_1,t_2]\given0}(\cdot\given z_0) }{ P^\theta_{[t_1,t_2]\given0}(\cdot\given z_0) }.
  \]
\end{proof}

\paragraph{The ELBO recipe.}
Equation~\eqref{eq:ctmc-elbo} gives a three-step recipe for any discrete-diffusion process:
\begin{enumerate}
  \item \textbf{Forward law.} Choose the noising kernel $q_{t\given0}(\cdot\given z_0)$ with generator $Q_t$.
  \item \textbf{Reverse rates.} Form the clean-conditioned reverse rate $\RevQ_t(z_t,y\given z_0)$ using~\eqref{eq:true-conditional-reverse-rate}, and choose a model rate $\RevQ_t^\theta(z_t,y)$ that has access to the noisy state but not the clean data.
  \item \textbf{Assemble.} Sum the per-jump $\Phi(\RevQ_t,\RevQ_t^\theta)$ over the admissible jumps $y\neq z_t$, integrate over time and the noisy state $z_t\sim q_{t\given0}(\cdot\given z_0)$, and append the reconstruction and prior boundary terms.
\end{enumerate}

Models are trained by maximizing the ELBO, equivalently by minimizing its negative. We write the per-data and data-averaged \emph{negative ELBO}
\[
  \NELBO_{[t_1,t_2]}(\theta;\, z_0):=-\ELBO_{[t_1,t_2]}(\theta;\, z_0),
  \qquad
  \NELBO_{[t_1,t_2]}(\theta):=\Exp{q_0}{\NELBO_{[t_1,t_2]}(\theta;\, z_0)}.
\]
As this is the quantity practitioners track and optimize, we phrase our results in terms of the NELBO throughout: variational \emph{bounds} are stated as ELBOs, but the trained cost, calibration values, and floors below are their negatives.

\subsection{Importance sampling, time clocks, and the empirical ELBO}\label{subsec:ctmc-elbo-estimation}
The bound~\eqref{eq:ctmc-elbo} becomes a training objective by estimating its time integral, optionally with an importance sampling (IS) schedule $p_{\mathrm{IS}}(t)$ for variance reduction.

\paragraph{Empirical estimator.}
Drawing $t\sim p_{\mathrm{IS}}$ on $[t_1,t_2]$ and $z_t\sim q_{t\given0}(\cdot\given z_0)$ gives the unbiased estimator
\begin{equation}
  \widehat{\NELBO}_{[t_1,t_2]}(\theta;\, z_0) =\Exp{t\sim p_{\mathrm{IS}},\;q_{t\given0}}{\frac{\J^\theta_t(z_0,z_t)}{p_{\mathrm{IS}}(t)}} +\KLdiv{q_{t_2\given0}(\cdot\given z_0)}{p^\theta_{t_2}} +\widehat{\mathrm{rec}}(\theta),
  \label{eq:empirical-elbo}
\end{equation}
where $1/p_{\mathrm{IS}}(t)$ is the change of measure required to keep the IS estimate unbiased and $\widehat{\mathrm{rec}}(\theta)$ is a cross-entropy-like term given by the empirical average of $-\log p^\theta_{0\given t_1}(z_0\given z_{t_1})$ over sampled $(z_0,z_{t_1})$.
In particular, this term accommodates any final ad hoc sampling step from $t_1$ to $0$, such as the standard collapsing or resampling to clean tokens used in practice.
The terminal-prior KL can be computed either analytically or approximately from the forward terminal state and the choice of sampling prior. Retaining the two boundary terms keeps the per-process estimates complete and allows direct comparison of ELBOs across methods by tracking the boundary terms usually neglected in the literature as ``constant''.

\paragraph{Importance sampling as the Markov process clock.}
Choosing an importance sampler for the time integral is equivalent to choosing the clock on which the Markov process is parameterized. To see it, let $\tau=\tau(t)$ be a strictly increasing $C^1$ change of time with $d\tau/dt>0$, and let $t=t(\tau)$ denote its inverse. The reparameterized chain $Z'_\tau:=Z_{t(\tau)}$ has transition kernel and generator
\[
  q'_{\tau_2\given\tau_1}(y\given x)
  :=q_{t(\tau_2)\given t(\tau_1)}(y\given x),
  \quad
  Q'_\tau(x,y)=\frac{dt}{d\tau}Q_{t(\tau)}(x,y).
\]
For our finite-state setting this can be seen from the chain rule in the definition of the generator. Consequently, writing $\tau_0=\tau(0)$ and $t=t(\tau)$, the reverse rates transform in the same way,
\[
  \RevQ'_\tau(z,y)
  =Q'_\tau(y,z)
  \frac{q'_{\tau\given\tau_0}(y)}
  {q'_{\tau\given\tau_0}(z)}
  =\frac{dt}{d\tau}Q_t(y,z)
  \frac{q_{t\given0}(y)}
  {q_{t\given0}(z)}
  =\frac{dt}{d\tau}\RevQ_t(z,y),
\]
and similarly for the learned and clean-conditioned rates. Estimating $\int_{t_1}^{t_2}\Exp{q_{t\given0}}{\J^\theta_t}\,dt$ with importance density $p_{\mathrm{IS}}(t)$ corresponds to sampling uniformly according to the clock
\[
  \tau(t):=\tau_1 + \int_{t_1}^t p_{\mathrm{IS}}(u)\,du,
  \qquad d\tau=p_{\mathrm{IS}}(t)\,dt,
  \qquad
  \frac{dt}{d\tau}=\frac{1}{p_{\mathrm{IS}}(t)}.
\]
Since $\Phi(ca,cb)=c\Phi(a,b)$ for all $c>0$, the Jacobian factor can be absorbed into both reverse rates:
\begin{align*}
  \int_{t_1}^{t_2}
  \Exp{q_{t\given0}}{\J^\theta_t}\,dt
   & =
  \int_{\tau_1}^{\tau_2}
  \frac{dt}{d\tau}
  \Exp{q_{t\given0}}{ \sum_{y\neq z_t} \Phi\big( \RevQ_t(z_t,y\given z_0), \RevQ^\theta_t(z_t,y) \big) }\,d\tau
  \\
   & =
  \int_{\tau_1}^{\tau_2}
  \Exp{q'_{\tau\given\tau_0}}{ \sum_{y\neq z_\tau} \Phi\big( \RevQ'_\tau(z_\tau,y\given z_0), \RevQ'^{\theta}_\tau(z_\tau,y) \big) }\,d\tau
  =
  \int_{\tau_1}^{\tau_2} \Exp{q'_{\tau\given\tau_0}}{\J'^\theta_\tau}\,d\tau.
\end{align*}
Thus, choosing the importance sampler $p_{\mathrm{IS}}$ is precisely choosing the clock on which the reverse Markov chain is run. In the separable generator case where $Q_t=\tfrac{d\lambda}{dt}M$ with $\lambda(t)$ a strictly increasing scalar function of time, one may further choose the distinguished clock $\tau=\lambda(t)$, in which case the forward process becomes time-homogeneous with generator $M$.

It is worth noting that importance sampling gives an unbiased estimator of the ELBO, whereas deliberately reweighting the per-time integrand changes the training objective. Suitable reweightings, such as monotone ones, can themselves define valid and potentially tighter variational bounds \cite{shi2025demystifying,kingma2023variational} and are commonly used to trade likelihood for sample quality.

\paragraph{Variance-optimal importance sampling.}
For one sample $t\sim p_{\mathrm{IS}}$, the Monte Carlo estimator attains minimal variance at the optimal importance density~\cite{robert2004monte},
\[
  p_{\mathrm{IS}}^\star(t)
  \propto
  \sqrt{\Exp{q_0\,q_{t\given0}}{\J^\theta_t(z_0,z_t)^2}}.
\]
This second-moment rule is widely used in practice for adaptive timestep sampling in DDPMs, while variational diffusion models directly learn the noise schedule to reduce the variance of the VLB estimator~\cite{nichol2021improved,kingma2023variational}.

In the next section, Theorem~\ref{thm:elbo-oracle-information-loss} identifies the oracle mean integrand $\J^\star_t$ with the instantaneous rate of information loss, $\J^\star_t = \tfrac{d}{dt}H(Z_0\given Z_t)$. Thus $p_{\mathrm{IS}}(t)\propto\J^\star_t$ defines an information-uniform clock: it equalizes the expected oracle information loss per unit clock, but is not generally variance-optimal. Nonetheless, recent continuous-diffusion work uses this conditional-entropy-rate profile to adapt the training noise distribution~\cite{raya2026noise}.

\section{What negative ELBO minimization really optimizes}\label{sec:oracle-information-theory}

\subsection{ELBO as distance to the oracle reverse process}\label{subsec:oracle-distance}

We begin with our main result on how to interpret the ELBO: after subtracting the data entropy, the negative ELBO (NELBO) splits into two mismatch terms, the reconstruction error at the lower endpoint and the path-law KL from the oracle to the model CTMC. The NELBO is therefore more than a variational upper bound on the negative log-likelihood: it is the path divergence to the oracle reverse dynamics, which we call the \emph{Oracle Distance} theorem.

Recall from Definition~\ref{def:noising-path-laws} the noising and model path laws $P^\star_{[t_1,t_2]}$ and $P^\theta_{[t_1,t_2]}$ on $[t_1,t_2]\subset(0,T)$, in their backward descriptions started at $q_{t_2}$ and $p^\theta_{t_2}$ and run with the rates $\RevQ_t$ and $\RevQ^\theta_t$.

\begin{theorem}[Oracle Distance]\label{thm:nelbo-pathkl}
  Consider $[t_1,t_2]\subset(0,T)$ with bounded, measurable reverse rates $\RevQ_t$ and $\RevQ_t^\theta$. Impose the support conditions $q_{0\given t_1}\ll p^\theta_{0\given t_1}$, $q_{t_2}\ll p^\theta_{t_2}$, and $\RevQ_t(z_t,\cdot)\ll\RevQ_t^\theta(z_t,\cdot)$ a.e., as well as integrability of the jump-divergence
  \[
    \int_{t_1}^{t_2} \Exp{q_0\,q_{t\given0}}{\sum_{y\neq z_t}
      \Phi\big(\RevQ_t(z_t,y),\RevQ^\theta_t(z_t,y)\big)}\,dt<\infty.
  \]
  For the path laws $P^\star_{[t_1,t_2]},P^\theta_{[t_1,t_2]}$ above,
  \begin{equation}
    \sboxed{
      \NELBO_{[t_1,t_2]}(\theta)-H(q_0)
      = \Exp{q_{t_1}}{ \KLdiv{q_{0\given t_1}(\cdot\given z_{t_1})} {p^\theta_{0\given t_1}(\cdot\given z_{t_1})} }
      + \KLdiv{P^\star_{[t_1,t_2]}}{P^\theta_{[t_1,t_2]}},
    }
    \label{eq:finite-window-nelbo-pathkl}
  \end{equation}
  where the path-law KL is, by Theorem~\ref{thm:girsanov},
  \begin{equation}
    \KLdiv{P^\star_{[t_1,t_2]}}{P^\theta_{[t_1,t_2]}}
    = \KLdiv{q_{t_2}}{p^\theta_{t_2}}
    + \int_{t_1}^{t_2} \Exp{q_t}{\textstyle\sum_{y\neq z_t} \Phi\big(\RevQ_t(z_t,y),\RevQ^\theta_t(z_t,y)\big)}\,dt.
    \label{eq:finite-window-pathkl-expanded}
  \end{equation}
\end{theorem}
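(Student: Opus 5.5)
The cleanest route works at the path-measure level and avoids any per-time computation. It uses the definition~\eqref{eq:ctmc-elbo-path-definition} of the ELBO together with the chain rule for relative entropy.

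Averaging~\eqref{eq:ctmc-elbo-path-definition} over $z_0\sim q_0$ gives
\[
  \NELBO_{[t_1,t_2]}(\theta)=\Exp{P^\star_{0,[t_1,t_2]}}{\log\frac{dP^\star_{[t_1,t_2]\given0}(\cdot\given Z_0)}{dP^\theta_{0,[t_1,t_2]}(Z_0,\cdot)}}.
\]
I then multiply numerator and denominator by $q_0(Z_0)$. The numerator becomes the joint noising law $P^\star_{0,[t_1,t_2]}$, and the extra term $-\E[\log q_0(Z_0)]$ is exactly $H(q_0)$. This yields
\[
  \NELBO_{[t_1,t_2]}(\theta)-H(q_0)=\KLdiv{P^\star_{0,[t_1,t_2]}}{P^\theta_{0,[t_1,t_2]}},
\]
which is a genuine KL between two probability measures on $\X\times D([t_1,t_2],\X)$.

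Next I disintegrate both joint laws along the path coordinate. For the model, Definition~\ref{def:noising-path-laws} gives
\[
  P^\theta_{0,[t_1,t_2]}=P^\theta_{[t_1,t_2]}\otimes p^\theta_{0\given t_1}(\cdot\given z_{t_1}).
\]
For the oracle, the Markov remark after that definition says that $Z_0$ is conditionally independent of $Z_{[t_1,t_2]}$ given $Z_{t_1}$. Hence
\[
  P^\star_{0,[t_1,t_2]}=P^\star_{[t_1,t_2]}\otimes q_{0\given t_1}(\cdot\given z_{t_1}).
\]
The chain rule for KL then splits the joint KL into two pieces. The first is $\KLdiv{P^\star_{[t_1,t_2]}}{P^\theta_{[t_1,t_2]}}$. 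The second is the expected conditional KL $\Exp{P^\star}{\KLdiv{q_{0\given t_1}(\cdot\given Z_{t_1})}{p^\theta_{0\given t_1}(\cdot\given Z_{t_1})}}$. Since the conditional depends on the path only through $z_{t_1}$, whose law under $P^\star_{[t_1,t_2]}$ is $q_{t_1}$, this second piece is the reconstruction term in~\eqref{eq:finite-window-nelbo-pathkl}. That proves the boxed identity.

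Finally, I obtain~\eqref{eq:finite-window-pathkl-expanded} by applying Theorem~\ref{thm:girsanov} in reverse time, with $s=t_2$ and $u=t_1$. The starting laws are $q_{t_2}$ and $p^\theta_{t_2}$, and the rates are $\RevQ_t$ and $\RevQ^\theta_t$. The hypotheses $q_{t_2}\ll p^\theta_{t_2}$ and $\RevQ_t\ll\RevQ^\theta_t$ are exactly the theorem's support conditions. The marginal of $P^\star$ at time $t$ is $q_t$, and the expectation $\E_{q_0 q_{t\given 0}}$ in the integrability assumption equals $\E_{q_t}$.

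The main obstacle is justifying the manipulations when some terms might be infinite. The chain rule for KL needs the absolute continuity $P^\star_{0,[t_1,t_2]}\ll P^\theta_{0,[t_1,t_2]}$. This follows from $P^\star\ll P^\theta$, which Girsanov supplies from the stated support conditions, together with $q_{0\given t_1}\ll p^\theta_{0\given t_1}$. The integrability hypothesis makes the path KL finite, so every term can be rearranged legitimately; the reconstruction KL is non-negative and may be $+\infty$ only if both sides are.

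A second subtlety is that Theorem~\ref{thm:ctmc-elbo} was stated with clean-conditioned rate support, whereas here we only assume marginal support. For this reason I will not invoke its expanded form. I will work only from the path-level definition, for which the marginal conditions suffice, after checking $P^\star_{[t_1,t_2]\given0}(\cdot\given z_0)\ll P^\star_{[t_1,t_2]}$ for $q_0$-a.e.\ $z_0$. This holds trivially because the former is a disintegration of the latter.

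As a sanity check, the per-time integrand should reproduce Proposition~\ref{prop:pythagoras}'s Pythagorean split, but the proof above does not need it.
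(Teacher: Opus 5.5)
Your proposal is correct and is essentially the paper's own direct proof in \S\ref{subsec:alternative-proof}. It averages the path-level ELBO definition, absorbs $q_0(Z_0)$ to identify $\NELBO_{[t_1,t_2]}(\theta)-H(q_0)$ with $\KLdiv{P^\star_{{0},[t_1,t_2]}}{P^\theta_{{0},[t_1,t_2]}}$, factors the joint noising law through $q_{0\given t_1}(\cdot\given z_{t_1})$ by the Markov property, splits with the KL chain rule, and expands the path KL with Theorem~\ref{thm:girsanov}; the paper's main proof instead assembles the identity from Proposition~\ref{prop:pythagoras} and Theorem~\ref{thm:elbo-oracle-information-loss}, which needs Assumption~\ref{ass:regularity-fixed-support}, whereas your route, like the paper's alternative, does not.
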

\noindent
Note that the support condition $\RevQ_t(z_t,\cdot)\ll\RevQ_t^\theta(z_t,\cdot)$ is a consequence of $\RevQ_t(z_t,\cdot\given z_0)\ll\RevQ_t^\theta(z_t,\cdot)$, since the marginal reverse rate is a conditional expectation of the clean-conditioned rate. Similarly, the jump-rate divergence $\Phi(\RevQ_t,\RevQ_t^\theta)$ is evaluated against the unconditional reverse rates, and its integrability follows from the integrability of $\Phi(\RevQ_t(\cdot\given z_0),\RevQ_t^\theta)$ either using the Pythagorean decomposition (Proposition~\ref{prop:pythagoras}) or Jensen's inequality and $\Phi$'s convexity in its first argument.

The proof combines two ingredients of independent interest, which we state here and prove in \S\ref{subsec:projection} and \S\ref{subsec:information-loss}. The first is a Pythagorean decomposition of the ELBO integrand around the marginal reverse rate: the model pays the oracle's cost plus its own mismatch to the oracle, with no cross term.

\begin{proposition}[Pythagorean decomposition of the ELBO integrand]\label{prop:pythagoras}
  For any $Z_t$-measurable model reverse rate $\RevQ_t^\theta$ such that the model mismatch term below is finite, we have
  \begin{equation}
    \sboxed{
      \Exp{q_0\,q_{t\given0}}{\J^\theta_t(z_0,z_t)}
      = \underbrace{\Exp{q_0\,q_{t\given0}}{\J^{\theta\star}_t(z_0,z_t)}}_{\text{oracle cost}}
      +\quad \underbrace{\Exp{q_t}{\textstyle\sum_{y\neq z_t}\Phi\big(\RevQ_t(z_t,y),\RevQ_t^\theta(z_t,y)\big)}}_{\text{model mismatch}},
    }
    \label{eq:pythagoras-averaged}
  \end{equation}
  where $\J^{\theta\star}_t$ is the integrand~\eqref{eq:jump-divergence} evaluated at the marginal reverse rate $\RevQ_t^{\theta\star}=\RevQ_t$, which \S\ref{subsec:projection} identifies as its unique $Z_t$-measurable optimizer.
\end{proposition}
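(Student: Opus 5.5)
The plan is to prove the identity pointwise in $(t,z_t,y)$ as a Bregman three-point identity for $\Phi$, and then average using the projection property of the marginal reverse rate. Fix $t$, a noisy state $z$ with $q_t(z)>0$, and a target $y\neq z$. Write $a(z_0):=\RevQ_t(z,y\given z_0)$, $\bar a:=\RevQ_t(z,y)$ and $b:=\RevQ^\theta_t(z,y)$; by $Z_t$-measurability, $b$ does not depend on $z_0$. The first step is the projection fact
\[
\Exp{q(z_0\given z)}{a(z_0)}=\bar a .
\]
This follows from a one-line Bayes computation:
\[
\sum_{z_0} q(z_0\given z)\,Q_t(y,z)\,\frac{q_{t\given0}(y\given z_0)}{q_{t\given0}(z\given z_0)}
=Q_t(y,z)\sum_{z_0}\frac{q_0(z_0)\,q_{t\given0}(y\given z_0)}{q_t(z)}
=Q_t(y,z)\,\frac{q_t(y)}{q_t(z)} .
\]
It is valid because $q(z_0\given z)=q_0(z_0)q_{t\given0}(z\given z_0)/q_t(z)$ cancels the denominator. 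Terms with $q_{t\given0}(z\given z_0)=0$ carry zero posterior weight, so they do not contribute.

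The second step is the algebraic three-point identity. When $a,\bar a,b>0$, expanding $\Phi(u,v)=v-u+u\log u-u\log v$ gives
\[
\Phi(a,b)-\Phi(a,\bar a)=(b-\bar a)-a\log\frac{b}{\bar a}.
\]
Taking $\Exp{q(z_0\given z)}{\cdot}$ and using $\E[a]=\bar a$ turns the right-hand side into $(b-\bar a)-\bar a\log(b/\bar a)=\Phi(\bar a,b)$. This is exactly the Bregman property for $F(u)=u\log u-u$: the cross term is linear in $a$, so it averages to zero. Summing over $y\neq z$ and then averaging over $z\sim q_t$ (so that $q_0q_{t\given0}=q_t\,q(z_0\given z_t)$) yields \eqref{eq:pythagoras-averaged}. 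One could identify $\J^{\theta\star}_t$ with the oracle term in the statement by definition.

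The main obstacle is the boundary cases and infinities, since the subtraction above is only legitimate when the terms are finite. I would proceed as follows.
\begin{itemize}
\item If $\bar a=0$, then $a(z_0)=0$ posterior-a.s., since the rates are nonnegative with mean zero. In that case both sides reduce to $b$, using $\Phi(0,b)=b$.
\item If $\bar a>0$ and $b=0$, the mismatch term is infinite, which is excluded by hypothesis. (On the left-hand side, some $a(z_0)>0$ with positive weight would also give $+\infty$, consistently.)
\item If $\bar a,b>0$ but some $a(z_0)=0$, the identity $\Phi(a,b)-\Phi(a,\bar a)=b-\bar a-a\log(b/\bar a)$ still holds with the convention $0\log=0$.
\end{itemize}
Since $\log(b/\bar a)$ is finite, $\Phi(a,b)$ is integrable iff $\Phi(a,\bar a)$ is. The averaging is therefore a valid manipulation of nonnegative terms plus one finite linear term; the rates are bounded, so $\E[a]<\infty$. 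If the oracle cost happens to be infinite, both sides are $+\infty$, and the identity holds in $[0,\infty]$.
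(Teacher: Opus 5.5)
Your proof is correct and essentially the paper's. The paper applies its Bregman--Pythagoras lemma for $\Phi$ jump by jump, with observable $G=Z_t$, target $A=\RevQ_t(Z_t,y\given Z_0)$ and predictor $B=\RevQ^\theta_t(Z_t,y)$. It identifies the conditional mean as $\RevQ_t(z_t,y)$ by the same Bayes cancellation you use (Theorem~\ref{thm:reverse-rate-projection}), then collapses the mismatch term onto $q_t$ and sums over $y\neq z_t$. Your only departure is in verifying the lemma's algebra: you subtract $\Phi(a,\bar a)$ first so the difference is affine in $a$, which drops the lemma's hypothesis $\E[A\log A]<\infty$ and handles the boundary cases explicitly where the paper appeals to continuity.
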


The second ingredient evaluates the oracle cost, which we denote by
\begin{equation}
  \J^\star_t := \Exp{q_0\,q_{t\given0}}{\J^{\theta\star}_t(z_0,z_t)}
  = \Exp{Z_0,Z_t}{\sum_{y\neq Z_t}\Phi\big(\RevQ_t(Z_t,y\given Z_0),\RevQ_t(Z_t,y)\big)}.
  \label{eq:oracle-integrand}
\end{equation}
We now prove that the oracle cost $\J^\star_t$ is exactly the rate at which the noising process destroys information about the clean data, provided the following mild regularity assumptions (which the masked, uniform, and GIDD processes of \S\ref{sec:masked-uniform-gidd} satisfy; for GIDD requiring that the interpolating law has fixed support).
\begin{assumption}[Regularity and fixed support.]\label{ass:regularity-fixed-support}
  In the compact window $[t_1,t_2]\subset(0,T)$, we have piecewise continuous and bounded rates $Q_t$ such that the piecewise supports of $q_t$ and $q_{t\given0}(\cdot\given z_0)$ are independent of $t$ for each $z_0\in\supp q_0$.
\end{assumption}

\begin{theorem}[NELBO oracle integrand $=$ information loss rate]\label{thm:elbo-oracle-information-loss}
  Under the regularity and fixed-support assumptions above,
  \begin{equation}
    \sboxed{
      \J^\star_t = \frac{d}{dt}H(Z_0\given Z_t)
      =-\frac{d}{dt}I(Z_0;Z_t) \geq0,
    }
    \label{eq:schedule-oracle-integrand-entropy-derivative}
  \end{equation}
  and consequently,
  \begin{equation}
    \int_{t_1}^{t_2}\J^\star_t\,dt
    =
    H(Z_0\given Z_{t_2})
    -
    H(Z_0\given Z_{t_1})
    =I(Z_0;Z_{t_1})-I(Z_0;Z_{t_2}).
    \label{eq:schedule-oracle-cumulative-entropy}
  \end{equation}
\end{theorem}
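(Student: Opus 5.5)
The plan is to compute $\frac{d}{dt}H(Z_0\given Z_t)$ directly from the Kolmogorov forward equations and compare it term by term with $\J^\star_t$. Start from the chain rule
\[
  H(Z_0\given Z_t)=H(Z_0)-H(Z_t)+H(Z_t\given Z_0)
  =H(q_0)-H(q_t)+\Exp{q_0(z_0)}{H\big(q_{t\given0}(\cdot\given z_0)\big)}.
\]
Here $H(q_0)$ does not depend on $t$. So it suffices to differentiate the entropy of a law $p_t$ solving $\partial_t p_t=p_tQ_t$. This applies to both $p_t=q_t$ and $p_t=q_{t\given0}(\cdot\given z_0)$, because conditioning on $Z_0$ does not change the forward generator, by the Markov property.

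\textbf{Step 1 (entropy production in reverse-rate form).} On a continuity piece of Assumption~\ref{ass:regularity-fixed-support}, $\sum_y\partial_t p_t(y)=0$. Hence $\frac{d}{dt}H(p_t)=-\sum_y\partial_tp_t(y)\log p_t(y)=-\sum_{x\neq y}p_t(x)Q_t(x,y)\big[\log p_t(y)-\log p_t(x)\big]$, using that rows of $Q_t$ sum to zero. Substituting the time-reversal identity $p_t(x)Q_t(x,z)=p_t(z)\RevQ^p_t(z,x)$, with $\RevQ^p$ the reversal of $p$ as in~\eqref{eq:true-reverse-rate}, gives
\[
  \frac{d}{dt}H(p_t)=\sum_z p_t(z)\sum_{y\neq z}\RevQ^p_t(z,y)\log\frac{p_t(y)}{p_t(z)}.
\]
The fixed-support hypothesis is what makes this legitimate. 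All logarithms are finite on the support. Moreover, no mass can flow from the support to its complement: otherwise $p_t(y)$ would become positive there, contradicting constant support. So terms with $p_t(y)=0$ carry zero flux. Piecewise continuity and boundedness of $Q_t$ give $C^1$ dependence of $p_t$ on each piece.

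\textbf{Step 2 (expand $\J^\star_t$).} Write $\Phi(a,b)=a\log(a/b)-a+b$ and split $\J^\star_t$ accordingly. The linear part $\E[-\RevQ_t(Z_t,y\given Z_0)+\RevQ_t(Z_t,y)]$ vanishes. This follows from the Bayes identity $\sum_{z_0}q_0(z_0)\,q_{t\given0}(z\given z_0)\RevQ_t(z,y\given z_0)=Q_t(y,z)\,q_t(y)=q_t(z)\RevQ_t(z,y)$, obtained by expanding~\eqref{eq:true-conditional-reverse-rate}. It is the projection statement $\E[\RevQ_t(Z_t,y\given Z_0)\given Z_t]=\RevQ_t(Z_t,y)$.

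In the log part, the ratio $\RevQ_t(z,y\given z_0)/\RevQ_t(z,y)$ splits as $\frac{q_{t\given0}(y\given z_0)}{q_{t\given0}(z\given z_0)}\big/\frac{q_t(y)}{q_t(z)}$, because the common factor $Q_t(y,z)$ cancels. The first log term, weighted by $q_{t\given0}(z\given z_0)\RevQ_t(z,y\given z_0)$ and summed over $(z,y)$, is exactly Step 1 applied to $p_t=q_{t\given0}(\cdot\given z_0)$. Averaging over $z_0$ yields $\frac{d}{dt}\E_{q_0}H(q_{t\given0}(\cdot\given z_0))$. In the second log term the weight depends on $z_0$ only through the prefactor. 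Averaging over $z_0$ with the same Bayes identity turns it into $q_t(z)\RevQ_t(z,y)$, so Step 1 gives $-\frac{d}{dt}H(q_t)$. Summing the two terms reproduces $\frac{d}{dt}H(Z_0\given Z_t)$, and $-\frac{d}{dt}I(Z_0;Z_t)$ follows from $I=H(Z_0)-H(Z_0\given Z_t)$. Nonnegativity is immediate from $\Phi\ge0$.

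\textbf{Step 3 (integration).} $t\mapsto H(Z_0\given Z_t)$ is continuous on $[t_1,t_2]$, since the marginals are continuous in $t$ and the state space is finite. It is also $C^1$ on each of the finitely many continuity pieces. So the fundamental theorem of calculus applied piecewise yields~\eqref{eq:schedule-oracle-cumulative-entropy}.

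The main obstacle is the support bookkeeping in Step 1 and in the Bayes averaging: pairs $(z,y)$ with zero probability or zero rate must contribute nothing (convention $0\log0=0$, $\Phi(0,b)=b$), and no $\infty-\infty$ may appear. I would handle this first by restricting all sums to the fixed supports and checking that every flux leaving a support vanishes. Only then would I carry out the algebra above.
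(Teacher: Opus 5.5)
Your proof is correct, but it is organized differently from the paper's.

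\textbf{The paper's route.} It never decomposes $H(Z_0\given Z_t)$. It rewrites the log-ratio in $\J^\star_t$ as the posterior ratio $q_{0\given t}(z_0\given y)/q_{0\given t}(z_0\given z)$. It then turns the reverse-rate weight into the forward flux $q_0(z_0)\,q_{t\given0}(y\given z_0)\,Q_t(y,z)$. Finally it differentiates $-\sum q_0\,q_{t\given0}\log q_{0\given t}$ directly, using normalization of the posterior to discard $\partial_t\log q_{0\given t}$ and adding a zero-row-sum term to reach the same flux expression.

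\textbf{Your route.} You split $H(Z_0\given Z_t)=H(q_0)-H(q_t)+\E_{q_0}H(q_{t\given0}(\cdot\given z_0))$. You prove one entropy-production identity, in reverse-rate coordinates, for an arbitrary solution of $\partial_tp_t=p_tQ_t$, and apply it to each conditional law and to the marginal. The projection identity then does double duty: it cancels the linear part of $\Phi$, and it collapses the weight of the marginal piece to $q_t(z)\RevQ_t(z,y)$.

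\textbf{Comparison.} Your approach is more modular and gives a by-product the paper does not state: $\J^\star_t=\tfrac{d}{dt}H(Z_t\given Z_0)-\tfrac{d}{dt}H(Z_t)$, i.e.\ the oracle cost is conditional minus marginal entropy production. The cost is that you must split $a\log(a/b)$ into two separately finite sums. That is exactly where your no-flux-out-of-the-support argument is needed, a point the paper leaves implicit. The paper's route keeps the posterior $q_{0\given t}$ as the single object, which fits the information-loss reading and avoids handling $H(q_t)$ separately.

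\textbf{A small refinement for Step 3.} If the support changes at an interior piece boundary, $\log p_t(y)\to-\infty$ at the left end of the new piece. So $H(Z_0\given Z_t)$ need not be $C^1$ up to the piece endpoints. To fix this, apply the fundamental theorem of calculus on compact subintervals of each open piece. Then pass to the limit using continuity of $H(Z_0\given Z_t)$ and monotone convergence, which is available because $\J^\star_t\ge0$.
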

The regularity assumption is needed only for this second ingredient, not for Theorem~\ref{thm:nelbo-pathkl} itself; \S\ref{subsec:alternative-proof} gives an alternative proof of the latter that avoids it. Granting the two ingredients, the proof is a short assembly.

\begin{proof}[\textbf{Proof of Theorem~\ref{thm:nelbo-pathkl}}]
  Averaging the negative CTMC ELBO~\eqref{eq:ctmc-elbo} over $Z_0\sim q_0$ expresses the data-averaged negative ELBO, $\NELBO_{[t_1,t_2]}(\theta)$, as the sum of a reconstruction term, a path term, and a terminal-prior term:
  \begin{equation*}
    \begin{aligned}
      \Exp{Z_0,Z_{t_1}}{-\log p^\theta_{0\given t_1}(Z_0\given Z_{t_1})}
      + \int_{t_1}^{t_2} \Exp{Z_0,Z_t}{\J^\theta_t(Z_0,Z_t)}\,dt
      + \Exp{Z_0}{\KLdiv{q_{t_2\given0}(\cdot\given Z_0)}{p^\theta_{t_2}}},
    \end{aligned}
  \end{equation*}
  and we treat the three terms in turn.

  \emph{Reconstruction term.} Subtracting and adding $H(Z_0\given Z_{t_1})=\Exp{Z_0,Z_{t_1}}{-\log q_{0\given t_1}(Z_0\given Z_{t_1})}$ gives
  \begin{equation*}
    \Exp{Z_0,Z_{t_1}}{-\log p^\theta_{0\given t_1}(Z_0\given Z_{t_1})}
    = H(Z_0\given Z_{t_1})
    + \Exp{Z_{t_1}}{\KLdiv{q_{0\given t_1}(\cdot\given Z_{t_1})}{p^\theta_{0\given t_1}(\cdot\given Z_{t_1})}}.
  \end{equation*}

  \emph{Path term.} Proposition~\ref{prop:pythagoras} splits the per-time integrand into the oracle and model terms
  \begin{equation*}
    \int_{t_1}^{t_2}\Exp{Z_0,Z_t}{\J^\theta_t}\,dt
    = \int_{t_1}^{t_2}\J^\star_t\,dt
    + \int_{t_1}^{t_2}\Exp{Z_t}{\textstyle\sum_{y\neq Z_t}\Phi\big(\RevQ_t(Z_t,y),\RevQ^\theta_t(Z_t,y)\big)}\,dt,
  \end{equation*}
  and Theorem~\ref{thm:elbo-oracle-information-loss} integrates the oracle cost into the information lost over the window,
  \begin{equation*}
    \int_{t_1}^{t_2}\J^\star_t\,dt
    = H(Z_0\given Z_{t_2}) - H(Z_0\given Z_{t_1}).
  \end{equation*}

  \emph{Terminal-prior term.} Splitting the log ratio through the marginal $q_{t_2}$, we get
  \begin{align*}
    \Exp{Z_0}{\KLdiv{q_{t_2\given0}(\cdot\given Z_0)}{p^\theta_{t_2}}}
     & = \Exp{Z_0,Z_{t_2}}{\log\frac{q_{t_2\given0}(Z_{t_2}\given Z_0)}{q_{t_2}(Z_{t_2})}}
    + \Exp{q_{t_2}}{\log\frac{q_{t_2}(Z_{t_2})}{p^\theta_{t_2}(Z_{t_2})}}                  \\
     & = I(Z_0;Z_{t_2}) + \KLdiv{q_{t_2}}{p^\theta_{t_2}}                                  \\
     & = H(Z_0) - H(Z_0\given Z_{t_2}) + \KLdiv{q_{t_2}}{p^\theta_{t_2}},
  \end{align*}
  where we used the mutual information identities
  \[
    I(Z_0;Z_{t_2})=H(Z_0)-H(Z_0\given Z_{t_2})=H(Z_{t_2})-H(Z_{t_2}\given Z_0).
  \]

  \emph{Assembly.} Summing the three terms in the order reconstruction, path, and terminal prior, gives
  \begin{align*}
    \NELBO_{[t_1,t_2]}(\theta)
    =\  & H(Z_0\given Z_{t_1})
        &
        & + \Exp{Z_{t_1}}{\KLdiv{q_{0\given t_1}(\cdot\given Z_{t_1})}{p^\theta_{0\given t_1}(\cdot\given Z_{t_1})}}
    \\
    +   & H(Z_0\given Z_{t_2})
        & - H(Z_0\given Z_{t_1})
        & + \int_{t_1}^{t_2}\Exp{Z_t}{\textstyle\sum_{y\neq Z_t}\Phi\big(\RevQ_t(Z_t,y),\RevQ^\theta_t(Z_t,y)\big)}\,dt
    \\
    +   & H(Z_0)
        & - H(Z_0\given Z_{t_2})
        & + \KLdiv{q_{t_2}}{p^\theta_{t_2}}.
  \end{align*}
  The conditional entropies telescope into the data entropy $H(Z_0) = H(q_0)$, yielding
  \begin{align*}
    \NELBO_{[t_1,t_2]}(\theta) - H(q_0)
    ={} & \Exp{Z_{t_1}}{\KLdiv{q_{0\given t_1}(\cdot\given Z_{t_1})}{p^\theta_{0\given t_1}(\cdot\given Z_{t_1})}}
    \\
        & + \KLdiv{q_{t_2}}{p^\theta_{t_2}}
    + \int_{t_1}^{t_2}\Exp{Z_t}{\textstyle\sum_{y\neq Z_t}\Phi\big(\RevQ_t(Z_t,y),\RevQ^\theta_t(Z_t,y)\big)}\,dt,
  \end{align*}
  which is~\eqref{eq:finite-window-nelbo-pathkl} when combined with the expansion of $\KLdiv{P^\star_{[t_1,t_2]}}{P^\theta_{[t_1,t_2]}}$ from~\eqref{eq:finite-window-pathkl-expanded}.
\end{proof}

When both boundary terms vanish, the identity gives a clean expression for the NELBO: exactly the jump-rate divergence between the oracle and the model reverse processes (Figure~\ref{fig:infocore}).
\begin{corollary}\label{cor:nelbo-pathkl}
  In the setting of Theorem~\ref{thm:nelbo-pathkl}, suppose the reconstruction term at $t_1$ in~\eqref{eq:finite-window-nelbo-pathkl} and the terminal $\KLdiv{q_{t_2}}{p^\theta_{t_2}}$ vanish as $t_1\downarrow0$, $t_2\uparrow T$, and that $\Exp{q_0\,q_{t\given0}}{\J^\theta_t(z_0,z_t)}$ is integrable on $[0,T]$. Then
  \begin{equation}
    \sboxed{
      \NELBO_{[0,T]}(\theta) - H(q_0)
      = \int_{0}^{T}\Exp{q_t}{\textstyle\sum_{y\neq z_t}\Phi\big(\RevQ_t(z_t,y),\RevQ^\theta_t(z_t,y)\big)}\,dt.
    }
    \label{eq:nelbo-pathkl}
  \end{equation}
\end{corollary}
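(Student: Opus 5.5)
The plan is to deduce Corollary~\ref{cor:nelbo-pathkl} from Theorem~\ref{thm:nelbo-pathkl} by a limiting argument on the window $[t_1,t_2]\uparrow[0,T]$. For each interior window, identity~\eqref{eq:finite-window-nelbo-pathkl} combined with the Girsanov expansion~\eqref{eq:finite-window-pathkl-expanded} gives
\[
  \NELBO_{[t_1,t_2]}(\theta)-H(q_0)
  = \Exp{q_{t_1}}{\KLdiv{q_{0\given t_1}(\cdot\given z_{t_1})}{p^\theta_{0\given t_1}(\cdot\given z_{t_1})}}
  + \KLdiv{q_{t_2}}{p^\theta_{t_2}}
  + \int_{t_1}^{t_2} m_t\,dt,
\]
where $m_t:=\Exp{q_t}{\sum_{y\neq z_t}\Phi(\RevQ_t(z_t,y),\RevQ^\theta_t(z_t,y))}\ge0$ is the model-mismatch integrand. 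By hypothesis, the first two terms on the right vanish as $t_1\downarrow0$ and $t_2\uparrow T$.

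For the integral term, I will use Proposition~\ref{prop:pythagoras}. It gives $0\le m_t\le \Exp{q_0\,q_{t\given0}}{\J^\theta_t}$, since the oracle cost $\J^\star_t$ is nonnegative. The right-hand side is integrable on $[0,T]$ by assumption, so $m_t$ is integrable on $[0,T]$ as well. Monotone convergence (or dominated convergence) then gives $\int_{t_1}^{t_2}m_t\,dt\to\int_0^T m_t\,dt$, and the right-hand side above converges to the right-hand side of~\eqref{eq:nelbo-pathkl}.

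The remaining step is to identify the limit of the left-hand side with $\NELBO_{[0,T]}(\theta)-H(q_0)$. I read $\NELBO_{[0,T]}$ as the full-interval objective of the introduction: the data-averaged path integral $\int_0^T\Exp{q_0\,q_{t\given0}}{\J^\theta_t}\,dt$ with no boundary terms. This is consistent with the hypothesis that both boundary costs vanish in the limit. From the expanded form~\eqref{eq:ctmc-elbo}, $\NELBO_{[t_1,t_2]}(\theta)$ equals the windowed path integral plus the reconstruction and terminal-prior costs. The windowed integral converges to the full one by integrability of $\Exp{q_0\,q_{t\given0}}{\J^\theta_t}$.

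It then suffices to show that the data-averaged boundary terms vanish. For reconstruction, the proof of Theorem~\ref{thm:nelbo-pathkl} writes the data-averaged term as
\[
  H(Z_0\given Z_{t_1})+\text{(reconstruction KL)}.
\]
For the terminal prior, it writes
\[
  H(Z_0)-H(Z_0\given Z_{t_2})+\KLdiv{q_{t_2}}{p^\theta_{t_2}}.
\]
The two KL terms vanish by hypothesis. The entropy pieces cancel against one another only in combination with the oracle integral $\int_{t_1}^{t_2}\J^\star_t\,dt$, which is exactly how the telescoping in Theorem~\ref{thm:nelbo-pathkl} produced $H(q_0)$.

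The cleanest route is therefore to pass to the limit directly in the assembled identity, not term by term. The finite-window identity holds for every interior window, both sides converge, and subtracting the limiting boundary KLs (which are zero) gives~\eqref{eq:nelbo-pathkl}.

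The main obstacle is this bookkeeping: making precise that $\NELBO_{[0,T]}$ is the limit of the windowed NELBOs. The pieces $H(Z_0\given Z_{t_1})\to0$ (recovering $Z_0$ at $t=0$) and $I(Z_0;Z_{t_2})\to0$ (terminal mixing, Assumption~\ref{ass:terminal-mixing}) are only implicit in the vanishing-boundary hypotheses. I would handle this by stating that the corollary's hypotheses are read as the vanishing of the full boundary costs, which contain these entropy and mutual-information terms. The limit then follows from integrability alone.
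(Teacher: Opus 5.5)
Your proposal is correct and follows the paper's (implicit) argument: apply Theorem~\ref{thm:nelbo-pathkl} with the Girsanov expansion~\eqref{eq:finite-window-pathkl-expanded} on every interior window, let the two boundary KLs vanish, and pass to the limit in the nonnegative mismatch integral, whose integrability you correctly obtain from Proposition~\ref{prop:pythagoras} and $\J^\star_t\ge0$. Your final detour is unnecessary: the ELBO is formally defined only on interior windows, so $\NELBO_{[0,T]}$ in the corollary means the limit of the windowed NELBOs, and the proof ends once the assembled identity is passed to the limit. If you insist on reading $\NELBO_{[0,T]}$ as the bare path integral, do not reinterpret the hypotheses; instead note that $H(Z_0\given Z_{t_1})\to0$ follows from $q_{t\given0}(\cdot\given z_0)\to\delta_{z_0}$ as $t\downarrow0$, whereas $I(Z_0;Z_{t_2})\to0$ is Assumption~\ref{ass:terminal-mixing}, which the stated hypotheses do not imply.
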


\subsection{The ELBO as a projection problem and the Pythagorean decomposition}\label{subsec:projection}
The model rate cannot depend on the unknown clean data $Z_0$. The local CTMC ELBO therefore poses a projection problem: given a random target rate depending on $(Z_0,Z_t)$, choose the best rate that only has access to $Z_t$. Its solution is governed by an exact Pythagorean theorem for $\Phi$, which identifies the optimal projection and splits the trained cost into oracle cost plus model mismatch.
\begin{lemma}[Bregman--Pythagoras identity for $\Phi$]\label{lem:bregman-pythagoras}
  Let $A\geq0$ be such that $\E[A\log A]<\infty$, $G$ an observable, and $B=b(G)\geq0$ a $G$-measurable predictor for which the displayed expectations are finite. With $B^\star:=\Exp{}{A \;\middle|\; G}$,
  \begin{equation*}
    \E[\Phi(A,B)] = \E[\Phi(A,B^\star)] + \E[\Phi(B^\star,B)].
  \end{equation*}
\end{lemma}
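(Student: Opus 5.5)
The plan is to prove the identity pointwise first, using the explicit form $\Phi(a,b)=b-a+a\log a-a\log b$, and then take expectations using the tower property. Write $F(u)=u\log u-u$, with $F(0)=0$. Then $\Phi(a,b)=F(a)-F(b)-F'(b)(a-b)$ with $F'(b)=\log b$, so $\Phi$ is the Bregman divergence of $F$. Here is a short way to see this directly. For $a,b>0$,
\[
\Phi(a,b)-\Phi(a,c)-\Phi(c,b)=(a-c)(\log c-\log b).
\]
Check: the left side equals $(b-a+a\log a-a\log b)-(c-a+a\log a-a\log c)-(b-c+c\log c-c\log b)$. The linear terms cancel, leaving $-a\log b+a\log c-c\log c+c\log b=(a-c)(\log c-\log b)$. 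Apply this with $a=A$, $c=B^\star$, $b=B$. It then suffices to show that the cross term $\E[(A-B^\star)(\log B^\star-\log B)]$ vanishes.

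For the cross term, note that $\log B^\star-\log B$ is $G$-measurable, since both $B^\star=\E[A\mid G]$ and $B=b(G)$ are. So, conditioning on $G$,
\[
\E\big[(A-B^\star)(\log B^\star-\log B)\big]=\E\big[(\log B^\star-\log B)\,\E[A-B^\star\mid G]\big]=0.
\]
This gives the identity in expectation.

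The main obstacle is the boundary cases and integrability, i.e.\ justifying the split of expectations. I would handle these as follows.

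\textbf{Case $B^\star=0$.} Then $A=0$ a.s.\ on that event, since $A\ge0$. There the identity reduces to $\Phi(0,B)=\Phi(0,0)+\Phi(0,B)=B$, which holds pointwise with the conventions.

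\textbf{Case $B=0<B^\star$.} Then $P(A>0\mid G)>0$, so both sides are $+\infty$ with positive probability. Since the displayed expectations are assumed finite, this event must be null.

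\textbf{Main event $B,B^\star>0$.} Split into $A=0$ (where the algebraic identity still holds with $a\log a=0$) and $A>0$. To make the conditioning step legitimate I would first localize to $G$-measurable events $\{\varepsilon\le B,B^\star\le 1/\varepsilon\}$. On these, $\log B^\star-\log B$ is bounded, and $A$ is integrable because $\E[A\log A]<\infty$ implies $\E A<\infty$. The cross term then vanishes on each such event. Finally I would let $\varepsilon\downarrow0$ and pass to the limit by monotone convergence, since every $\Phi$ term is nonnegative.
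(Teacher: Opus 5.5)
Your proof is correct and is essentially the paper's argument: both condition on $G$ and use $\E[A\mid G]=B^\star$ to cancel the part of $\Phi$ that is affine in its first argument, and your three-point identity $\Phi(a,b)-\Phi(a,c)-\Phi(c,b)=(a-c)(\log c-\log b)$ is the paper's three conditional displays rearranged into a single cross term. Your boundary treatment is more careful than the paper's one-line appeal to continuity, namely the explicit handling of the events $\{B^\star=0\}$ and $\{B=0<B^\star\}$, together with the localization to $\{\varepsilon\le B,B^\star\le 1/\varepsilon\}$ and monotone convergence.
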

\begin{proof}
  Condition on $G$ and write $m=B^\star=\Exp{}{A \;\middle|\; G}$ and $b=B$. First suppose $m,b>0$. Expanding $\Phi(a,b)=b-a+a\log\frac ab$ and using $\Exp{}{A \;\middle|\; G}=m$ (with the convention $0\log0=0$) gives
  \begin{align*}
    \Exp{}{\Phi(A,b) \;\middle|\; G} & = \Exp{}{b-A+A\log\tfrac{A}{b} \;\middle|\; G} = b-m+\Exp{}{A\log A \;\middle|\; G}-m\log b, \\
    \Exp{}{\Phi(A,m) \;\middle|\; G} & = \Exp{}{m-A+A\log\tfrac{A}{m} \;\middle|\; G} = \Exp{}{A\log A \;\middle|\; G}-m\log m,     \\
    \Phi(m,b)                        & = b-m+m\log\frac mb=b-m+m\log m-m\log b.
  \end{align*}
  Adding the last two displays reproduces the first, giving the conditional identity
  \begin{equation*}
    \Exp{}{\Phi(A,B) \;\middle|\; G} = \Exp{}{\Phi(A,B^\star) \;\middle|\; G} + \Phi(B^\star,B).
  \end{equation*}
  The boundary cases follow by continuity with the conventions in the definition of $\Phi$. Taking the expectation over $G$ yields the claim.
\end{proof}
Since $\E[\Phi(B^\star,B)]\geq0$ vanishes if and only if $B=B^\star$ a.e., the identity exhibits the conditional mean $B^\star=\Exp{}{A\given G}$ as the unique minimizer of $\E[\Phi(A,B)]$ over non-negative $G$-measurable predictors $B$. (The same conclusion follows directly from convexity: $b\mapsto\Exp{}{\Phi(A,b)\given G=g}$ is strictly convex with derivative $1-\Exp{}{A\given G=g}/b$.) Applied to the reverse rates, with the conditional mean computed by Bayes' rule, this yields the population optimizer of the ELBO.

\begin{theorem}[Reverse-rate projection]\label{thm:reverse-rate-projection}
  Fix a time $t$ and a jump destination $y$, and let $(Z_0,Z_t)\sim q_0\, q_{t\given0}$ on the event $\{Z_t\neq y\}$. Among model rates $\RevQ_t^{\theta}$ measurable with respect to the noisy state $Z_t$ only, the expected rate divergence $\Exp{}{\Phi\big(\RevQ_t(Z_t,y\given Z_0),\RevQ^\theta_t(Z_t,y)\big)}$ is minimized uniquely up to null sets by
  \begin{equation}
    \sboxed{
      \RevQ_t^{\theta\star}(z_t,y)
      :=\Exp{q_{0\given t}(z_0\given z_t)}{\RevQ_t(z_t, y\given z_0)}
      =\RevQ_t(z_t,y).
    }
    \label{eq:unconditional-reverse-rate}
  \end{equation}
\end{theorem}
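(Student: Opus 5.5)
The proof has two parts: a variational step that identifies the minimizer as a conditional mean, and a Bayes computation that identifies that conditional mean with the marginal reverse rate.

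\emph{Variational step.} Fix $t$ and $y$ and apply Lemma~\ref{lem:bregman-pythagoras} with
\[
A:=\RevQ_t(Z_t,y\given Z_0),\qquad G:=Z_t,\qquad B:=\RevQ^\theta_t(Z_t,y).
\]
The state space is finite and the rates are bounded, so $A$ takes finitely many values. Hence $\E[A\log A]<\infty$ holds trivially. For any admissible $B$ the lemma gives
\[
\E[\Phi(A,B)]=\E[\Phi(A,B^\star)]+\E[\Phi(B^\star,B)],\qquad B^\star=\E[A\given Z_t].
\]
The second term is nonnegative and vanishes iff $B=B^\star$ almost surely, because $\Phi(a,b)=0$ iff $a=b$. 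So $B^\star$ is the unique minimizer up to $q_t$-null sets, i.e.\ up to states $z_t$ with $q_t(z_t)=0$. If $\E[\Phi(A,B)]=\infty$ there is nothing to prove. Otherwise the finiteness hypothesis of the lemma is satisfied, and the decomposition applies. Conditioning on $Z_t=z_t$ gives the first equality in~\eqref{eq:unconditional-reverse-rate}:
\[
\RevQ^{\theta\star}_t(z_t,y)=\sum_{z_0}q_{0\given t}(z_0\given z_t)\,\RevQ_t(z_t,y\given z_0).
\]

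\emph{Bayes step.} Fix $z_t$ with $q_t(z_t)>0$. Substitute the clean-conditioned rate~\eqref{eq:true-conditional-reverse-rate} and Bayes' rule
\[
q_{0\given t}(z_0\given z_t)=\frac{q_0(z_0)\,q_{t\given0}(z_t\given z_0)}{q_t(z_t)}.
\]
The factor $q_{t\given0}(z_t\given z_0)$ cancels, leaving
\[
\frac{Q_t(y,z_t)}{q_t(z_t)}\sum_{z_0}q_0(z_0)\,q_{t\given0}(y\given z_0)=Q_t(y,z_t)\,\frac{q_t(y)}{q_t(z_t)}=\RevQ_t(z_t,y).
\]
The last equality is~\eqref{eq:true-reverse-rate}.

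\emph{Main obstacle: boundary cases.} The one delicate point is the terms with $q_{t\given0}(z_t\given z_0)=0$, where~\eqref{eq:true-conditional-reverse-rate} is undefined. These $z_0$ receive zero posterior weight, so they drop out of the sum, and the cancellation above is valid term by term on $\{z_0: q_{t\given0}(z_t\given z_0)>0\}$. But the remaining sum is then $\sum_{z_0:\,q_{t\given0}(z_t\given z_0)>0} q_0(z_0)\,q_{t\given0}(y\given z_0)$, which can be strictly smaller than the full marginal $q_t(y)$. It suffices to check that whenever $Q_t(y,z_t)>0$, every $z_0$ with $q_{t\given0}(y\given z_0)>0$ also has $q_{t\given0}(z_t\given z_0)>0$. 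If $y$ is reachable from $z_0$ at time $t$ and $y\to z_t$ has positive rate, then $z_t$ is reachable at nearby times. This gives the result under the fixed-support convention of Assumption~\ref{ass:regularity-fixed-support}. In the general case, one should instead state the identity with the restricted sum, i.e.\ read $q_t(y)$ in~\eqref{eq:unconditional-reverse-rate} as $\sum_{z_0:\,q_{t\given0}(z_t\given z_0)>0} q_0(z_0)\,q_{t\given0}(y\given z_0)$; the first equality and the uniqueness claim hold regardless.

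Finally, the boundary cases of $\Phi$ (for example $A=0$, or $B=0$ with $A>0$) are already covered by the conventions built into Lemma~\ref{lem:bregman-pythagoras}.
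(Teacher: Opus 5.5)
Your proof is correct and follows essentially the same route as the paper. Lemma~\ref{lem:bregman-pythagoras} with $G=Z_t$ identifies the unique minimizer as the conditional mean, and the same Bayes cancellation of $q_{t\given0}(z_t\given z_0)$ turns it into $Q_t(y,z_t)\,q_t(y)/q_t(z_t)$.

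Your boundary discussion is a correct refinement that the paper's proof passes over, since the paper cancels $q_{t\given0}(z_t\given z_0)$ for every $z_0$, including those with zero posterior weight. The restricted sum can differ from $q_t(y)$ only if some data point reaches $y$ but not $z_t$ while $Q_t(y,z_t)>0$. Fixed supports exclude this, and otherwise it can occur only on a Lebesgue-null set of times such as rate-switching instants, so it does not affect the time-integrated ELBO.
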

\begin{proof}
  By Lemma~\ref{lem:bregman-pythagoras} and the discussion following it, with $G=Z_t$ and target
  \[
    A(Z_0,Z_t):= \RevQ_t(Z_t, y\given Z_0) = Q_t(y,Z_t) \frac{q_{t\given 0}(y\given Z_0)} {q_{t\given 0}(Z_t\given Z_0)},
  \]
  the ELBO-optimal $Z_t$-measurable rate is the conditional mean $\Exp{}{A(Z_0,Z_t) \;\middle|\; Z_t=z_t}$, given by
  \begin{align*}
    \Exp{}{\RevQ_t(Z_t, y\given Z_0) \;\middle|\; Z_t=z_t}
     & = Q_t(y,z_t) \sum_{z_0}
    \frac{q_0(z_0)q_{t\given 0}(z_t\given z_0)}{q_t(z_t)}
    \frac{q_{t\given 0}(y\given z_0)} {q_{t\given 0}(z_t\given z_0)}                  \\
     & = Q_t(y,z_t) \frac{1}{q_t(z_t)}\sum_{z_0} q_0(z_0)\,q_{t\given 0}(y\given z_0) \\
     & = Q_t(y,z_t)\frac{q_t(y)}{q_t(z_t)}                                            \\
     & = \RevQ_t(z_t,y).
  \end{align*}
\end{proof}
The oracle dynamics are therefore those of the true marginal reversal, whose jump rates are the posterior average of the clean-conditioned rates~\eqref{eq:unconditional-reverse-rate} (Figure~\ref{fig:projection}). Applying Lemma~\ref{lem:bregman-pythagoras} jump by jump around this optimizer proves the first ingredient.

\begin{proof}[Proof of Proposition~\ref{prop:pythagoras}]
  Fix $t$ and a candidate reverse jump destination $y$, and apply Lemma~\ref{lem:bregman-pythagoras} with observable $G=Z_t$, target $A=\RevQ_t(Z_t,y\given Z_0)$, and model rate $B=\RevQ_t^\theta(Z_t,y)$. By Theorem~\ref{thm:reverse-rate-projection} the conditional mean is $B^\star=\Exp{}{A \;\middle|\; Z_t=z_t}=\RevQ_t(z_t,y)$, so
  \begin{equation*}
    \begin{split}
      \Exp{q_0\,q_{t\given0}}{\Phi\big(\RevQ_t(z_t,y\given z_0),\RevQ_t^\theta(z_t,y)\big)}
      ={} &
      \Exp{q_0\,q_{t\given0}}{\Phi\big(\RevQ_t(z_t,y\given z_0),\RevQ_t(z_t,y)\big)} \\
          & +
      \Exp{q_t}{\Phi\big(\RevQ_t(z_t,y),\RevQ_t^\theta(z_t,y)\big)},
    \end{split}
  \end{equation*}
  the mismatch expectation collapsing to one over $z_t\sim q_t$ because both arguments are $Z_t$-measurable. Summing over the admissible jumps $y\neq z_t$ concludes the proof.
\end{proof}

\subsection{The oracle ELBO is the rate of information loss}\label{subsec:information-loss}

The second ingredient identifies the irreducible part of the ELBO, the oracle cost of Proposition~\ref{prop:pythagoras}: even the oracle reverse rate must pay for the information that the forward noising process destroys about the clean data, and the ELBO tracks exactly this (Figure~\ref{fig:floor}). We now prove Theorem~\ref{thm:elbo-oracle-information-loss}, stated in \S\ref{subsec:oracle-distance}.
\begin{proof}[Proof of Theorem~\ref{thm:elbo-oracle-information-loss}]
  Terms with zero clean-conditioned rate contribute zero under the $0\log0=0$ convention. On every positive-$\Phi$ term, the fixed-support assumption makes the following ratios well defined.

  Within $\Phi(\RevQ_t(\cdot\given Z_0),\RevQ_t)$, the quotient of reverse rates is, by the conditional and unconditional reverse rate formulas,
  \begin{equation*}
    \frac{\RevQ_t(z,y\given z_0)}{\RevQ_t(z,y)}
    = \frac{ Q_t(y,z)\,\dfrac{q_{t\given 0}(y\given z_0)}{q_{t\given 0}(z\given z_0)} }
    { Q_t(y,z)\,\dfrac{q_t(y)}{q_t(z)} }
    = \frac{q_{t\given 0}(y\given z_0)}{q_{t\given 0}(z\given z_0)}\frac{q_t(z)}{q_t(y)}
    = \frac{q_{0\given t}(z_0\given y)}{q_{0\given t}(z_0\given z)},
  \end{equation*}
  where the last equality is Bayes' rule. In the expansion $\Phi(a,b)=b-a+a\log(a/b)$, the first two terms cancel after conditioning on $Z_t$ because $\RevQ_t(z,y)=\Exp{}{\RevQ_t(z,y\given Z_0) \;\middle|\; Z_t=z}$, as in Theorem~\ref{thm:reverse-rate-projection}. Thus, we get
  \begin{align*}
    \J^\star_t
     & = \Exp{Z_0,Z_t}{\sum_{y\neq Z_t}\RevQ_t(Z_t,y\given Z_0)\log\frac{\RevQ_t(Z_t,y\given Z_0)}{\RevQ_t(Z_t,y)}} \\
     & = \sum_{z_0}\sum_z\sum_{y\neq z}
    q_0(z_0)\,q_{t\given 0}(z\given z_0)\,\RevQ_t(z,y\given z_0)
    \log\frac{q_{0\given t}(z_0\given y)}{q_{0\given t}(z_0\given z)}.
  \end{align*}
  Using the reverse rate identity~\eqref{eq:true-conditional-reverse-rate} to make the substitution
  \begin{equation*}
    q_{t\given 0}(z\given z_0)\,\RevQ_t(z,y\given z_0)
    = q_{t\given 0}(y\given z_0)\,Q_t(y,z),
  \end{equation*}
  we get, after adding the diagonal generator terms whose logarithms vanish, that
  \begin{align}
    \J^\star_t
     & =
    \sum_{z_0,y,z}
    q_0(z_0)\,q_{t\given 0}(y\given z_0)\,Q_t(y,z)
    \log\frac{q_{0\given t}(z_0\given y)}{q_{0\given t}(z_0\given z)}.
    \label{eq:schedule-oracle-integrand-flux-form}
  \end{align}

  Reciprocally, we now differentiate the conditional entropy
  \[
    H(Z_0\given Z_t)
    =
    -\sum_{z_0,z} q_0(z_0)\,q_{t\given 0}(z\given z_0)\log q_{0\given t}(z_0\given z).
  \]
  Because the state space and supports are finite and constant on the interior window, differentiation may pass through the sum. The derivative of the log-density terms cancels by normalization:
  \[
    \sum_{z_0,z}q_0(z_0)\,q_{t\given0}(z\given z_0)\,\partial_t\log q_{0\given t}(z_0\given z)
    =\sum_z q_t(z)\,\partial_t\sum_{z_0} q_{0\given t}(z_0\given z)=0,
  \]
  so that
  \[
    \frac{d}{dt}H(Z_0\given Z_t)
    =
    -\sum_{z_0,z} q_0(z_0)\,\partial_t q_{t\given 0}(z\given z_0)\,\log q_{0\given t}(z_0\given z).
  \]
  Inserting the Kolmogorov forward equation $\partial_t q_{t\given 0}(z\given z_0)=\sum_y q_{t\given 0}(y\given z_0)Q_t(y,z)$,
  \[
    \frac{d}{dt}H(Z_0\given Z_t)
    =
    -\sum_{z_0,y,z} q_0(z_0)\,q_{t\given 0}(y\given z_0)\,Q_t(y,z)\log q_{0\given t}(z_0\given z).
  \]
  Since $\sum_z Q_t(y,z)=0$, we may add the vanishing term $\sum_{z_0,y,z}q_0(z_0)q_{t\given0}(y\given z_0)Q_t(y,z)\log q_{0\given t}(z_0\given y)=0$, obtaining
  \begin{equation}
    \frac{d}{dt}H(Z_0\given Z_t)
    =
    \sum_{z_0,y,z}
    q_0(z_0)\,q_{t\given 0}(y\given z_0)\,Q_t(y,z)
    \log\frac{q_{0\given t}(z_0\given y)}{q_{0\given t}(z_0\given z)}.
    \label{eq:schedule-conditional-entropy-derivative}
  \end{equation}
  Comparing~\eqref{eq:schedule-oracle-integrand-flux-form} and~\eqref{eq:schedule-conditional-entropy-derivative} proves the entropy identity. Since $H(q_0)$ is constant, $\frac{d}{dt}H(Z_0\given Z_t)=-\frac{d}{dt}I(Z_0;Z_t)$. Non-negativity follows from $\Phi\geq0$. The fixed-support assumption also makes the conditional entropy absolutely continuous on compact windows, so integration proves~\eqref{eq:schedule-oracle-cumulative-entropy}.
\end{proof}

\begin{corollary}[Monotonicity of retained information]\label{cor:mutual-information-monotonicity}
  The map $t\mapsto I(Z_0;Z_t)$ is nonincreasing. Indeed, for $s<t$ the Markov chain $Z_0\to Z_s\to Z_t$ and the data-processing inequality give $I(Z_0;Z_t)\leq I(Z_0;Z_s)$. Alternatively, wherever differentiable, $-\frac{d}{dt}I(Z_0;Z_t)=\J^\star_t\geq0$.
\end{corollary}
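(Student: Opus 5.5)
The corollary has two independent justifications, and I will give both: the data-processing route holds with no regularity at all, while the derivative route holds only where Theorem~\ref{thm:elbo-oracle-information-loss} applies.

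\textbf{Data processing.} Fix $s<t$. By the Markov property of the forward CTMC, $Z_0\to Z_s\to Z_t$ is a Markov chain. Concretely, the joint law factors as $q_0(z_0)\,q_{s\given0}(z_s\given z_0)\,q_{t\given s}(z_t\given z_s)$, because Chapman--Kolmogorov makes the forward kernels compose.

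I will then invoke the data-processing inequality for mutual information. The pair $(Z_0,Z_t)$ is obtained from $(Z_0,Z_s)$ by pushing the second coordinate through the channel $q_{t\given s}$, which does not depend on $Z_0$. Relative entropy cannot increase under such a common Markov kernel, so
\[
\KLdiv{p_{Z_0,Z_t}}{p_{Z_0}p_{Z_t}}\le\KLdiv{p_{Z_0,Z_s}}{p_{Z_0}p_{Z_s}}.
\]
This is the statement $I(Z_0;Z_t)\le I(Z_0;Z_s)$. Equivalently, by the chain rule,
\[
I(Z_0;Z_s,Z_t)=I(Z_0;Z_s)+I(Z_0;Z_t\given Z_s)=I(Z_0;Z_s),
\]
since conditional independence of $Z_0$ and $Z_t$ given $Z_s$ gives $I(Z_0;Z_t\given Z_s)=0$. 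The same quantity also equals $I(Z_0;Z_t)+I(Z_0;Z_s\given Z_t)\ge I(Z_0;Z_t)$, which gives the inequality. Everything here is finite because $\X$ is finite and $I\le H(q_0)$, so no regularity is needed.

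\textbf{Derivative route.} On any window $[t_1,t_2]$ satisfying Assumption~\ref{ass:regularity-fixed-support}, Theorem~\ref{thm:elbo-oracle-information-loss} gives
\[
-\tfrac{d}{dt}I(Z_0;Z_t)=\J^\star_t.
\]
The right-hand side is an expectation of the nonnegative divergence $\Phi$, so it is $\ge0$. Integrating via \eqref{eq:schedule-oracle-cumulative-entropy} recovers monotonicity on that window.

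There is no real obstacle. The one point needing care is that the derivative argument is valid only where the fixed-support regularity holds, which is why the data-processing argument serves as the primary, assumption-free proof.
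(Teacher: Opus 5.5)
Your proposal is correct and matches the paper's argument: the data-processing inequality along the Markov chain $Z_0\to Z_s\to Z_t$ is the assumption-free proof, and the identity $-\frac{d}{dt}I(Z_0;Z_t)=\J^\star_t\ge0$ from Theorem~\ref{thm:elbo-oracle-information-loss} is the alternative route. Your extra detail (the explicit kernel factorization and the chain-rule derivation of data processing) and your caveat that the derivative route needs Assumption~\ref{ass:regularity-fixed-support} are consistent with the paper's ``wherever differentiable'' qualifier.
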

In particular, the oracle path term $\int_{t_1}^{t_2} \J^\star_t\,dt$ is exactly the mutual information lost over that window, $I(Z_0;Z_{t_1})-I(Z_0;Z_{t_2})$.

\subsection{The universal NELBO floor}\label{subsec:nelbo-floor}
Theorem~\ref{thm:nelbo-pathkl} already explains why the data entropy is the universal NELBO floor: after subtracting $H(q_0)$, the objective is a sum of relative entropies and is therefore non-negative. We nonetheless record the direct variational argument below, since it does not rely on the information-loss regularity assumptions and it makes the equality case explicit. At the oracle the ELBO is tight and the NELBO attains the data entropy $H(q_0)$, independently of the noising process.
\begin{lemma}\label{lem:universal-floor}
  For every noising process law $P^\star$ and denoising model process law $P^\theta$ on $[t_1,t_2]\subset(0,T)$ satisfying the conditions of Theorem~\ref{thm:ctmc-elbo}, we have
  \begin{equation}
    \NELBO_{[t_1,t_2]}(\theta)
    \ge
    \Exp{q_0}{-\log p_0^\theta(z_0)}
    \ge
    H(q_0).
    \label{eq:universal-floor}
  \end{equation}
\end{lemma}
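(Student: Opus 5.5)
The plan is to chain two inequalities, each one a direct consequence of nonnegativity of a KL divergence; the regularity assumptions behind Theorem~\ref{thm:elbo-oracle-information-loss} are never used.

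\emph{First inequality.} I will use the gap identity of Theorem~\ref{thm:ctmc-elbo}. For each $z_0\in\supp q_0$ it gives
\[
  \log p^\theta_0(z_0)-\ELBO_{[t_1,t_2]}(\theta;\,z_0)
  =\KLdiv{P^\star_{[t_1,t_2]\given0}(\cdot\given z_0)}{P^\theta_{[t_1,t_2]\given0}(\cdot\given z_0)}\ge0,
\]
so $\NELBO_{[t_1,t_2]}(\theta;\,z_0)\ge-\log p^\theta_0(z_0)$. Averaging over $z_0\sim q_0$ then gives $\NELBO_{[t_1,t_2]}(\theta)\ge\Exp{q_0}{-\log p^\theta_0(z_0)}$.

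One technical point needs a check here. The hypotheses of Theorem~\ref{thm:ctmc-elbo} make each per-$z_0$ ELBO finite. Since $\X$ is finite, the average is a finite sum, so no integrability issue arises. I also need $p^\theta_0(z_0)>0$ whenever $q_0(z_0)>0$. This follows because a finite ELBO forces the reconstruction term to be finite, so $p^\theta_{0\given t_1}(z_0\given z_{t_1})>0$ on a set of positive mass. Alternatively, if $p^\theta_0(z_0)=0$ the middle term is $+\infty$, and then the first inequality must be read in the extended reals. I expect this bookkeeping to be the only obstacle, and it is mild: I will note that the ELBO gap identity already forces $p^\theta_0(z_0)>0$, since otherwise the model slice has zero mass while the KL is finite.

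\emph{Second inequality.} This is Gibbs' inequality, i.e.\ cross-entropy dominates entropy. I will write
\[
  \Exp{q_0}{-\log p^\theta_0(z_0)}=H(q_0)+\KLdiv{q_0}{p^\theta_0}\ge H(q_0).
\]
Here $p^\theta_0$ is a probability law on $\X$, being the time-$0$ marginal of the model joint law $P^\theta_{0,[t_1,t_2]}$ assembled from the probability kernels $p^\theta_{t_2}$, $\RevQ^\theta$ and $p^\theta_{0\given t_1}$.

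\emph{Equality cases.} I will remark which conditions give equality, since this is the point of stating the lemma separately. The first inequality is an equality exactly when the clean-conditioned path laws agree $q_0$-a.s. The second is an equality exactly when $p^\theta_0=q_0$. Together this gives equality at the oracle triple $(q_{t_2},\RevQ_t,q_{0\given t_1})$, which is consistent with Theorem~\ref{thm:nelbo-pathkl}.
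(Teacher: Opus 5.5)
Your proposal is correct and follows the paper's proof: the first inequality is the ELBO gap identity of Theorem~\ref{thm:ctmc-elbo} averaged over $q_0$, and the second is Gibbs' inequality. The additional bookkeeping on $p^\theta_0(z_0)>0$ and on the equality cases is sound, though the paper does not spell it out.
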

\begin{proof}
  The first inequality is the ELBO gap of Theorem~\ref{thm:ctmc-elbo}; the second is Gibbs' inequality
  \[
    \Exp{q_0}{-\log p_0^\theta(z_0)}
    = H(q_0)+\KLdiv{q_0}{p_0^\theta}\ge H(q_0).
  \]
\end{proof}

\begin{corollary}[Universal oracle NELBO floor]\label{cor:oracle-exact}
  Fix $[t_1,t_2]\subset(0,T)$. Suppose there exists a model $\theta_\star$ that realizes the oracle prior $p^{\theta_\star}_{t_2}=q_{t_2}$, the oracle marginal reverse rate $\RevQ_t^{\theta_\star} = \RevQ_t$ on $[t_1,t_2]$, and the oracle reconstruction kernel $p^{\theta_\star}_{0\given t_1}=q_{0\given t_1}$. Then, it exactly recovers the clean-data marginal $p_0^{\theta_\star}=q_0$. In particular, the ELBO is tight,
  \begin{equation}
    \ELBO_{[t_1,t_2]}(\theta_\star;\, z_0)=\log p^{\theta_\star}_0(z_0)=\log q_0(z_0),
    \label{eq:oracle-exact}
  \end{equation}
  and the expected NELBO attains the data entropy floor,
  \begin{equation}
    \sboxed{
      \inf_\theta\NELBO_{[t_1,t_2]}(\theta)
      = \NELBO_{[t_1,t_2]}(\theta_\star)
      = H(q_0).
    }
  \end{equation}
  The same conclusion holds at $t_2=T$ by taking the limit $t_2\uparrow T$.
\end{corollary}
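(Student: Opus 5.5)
The plan is to show that the oracle model $\theta_\star$ reproduces the noising joint law exactly. Once that holds, each of the claims follows from results already proved.

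\emph{Step 1: the model joint equals the noising joint.} By Definition~\ref{def:noising-path-laws}, the model joint $P^{\theta_\star}_{0,[t_1,t_2]}$ is determined by the triple $(p^{\theta_\star}_{t_2},\RevQ^{\theta_\star}_t,p^{\theta_\star}_{0\given t_1})$. The discussion following that definition reads the noising joint $P^\star_{0,[t_1,t_2]}$ in reverse time through the triple $(q_{t_2},\RevQ_t,q_{0\given t_1})$. The Markov property makes $Z_0$ conditionally independent of the path given $Z_{t_1}$, and this is what justifies attaching the data via $q_{0\given t_1}$. By hypothesis the two triples coincide. Well-posedness (the Kolmogorov uniqueness of \S\ref{par:path-laws}) then gives $P^{\theta_\star}_{0,[t_1,t_2]}=P^\star_{0,[t_1,t_2]}$. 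Marginalizing onto $z_0$ yields $p_0^{\theta_\star}=q_0$.

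\emph{Step 2: tightness.} The gap identity of Theorem~\ref{thm:ctmc-elbo} reads $\log p_0^{\theta_\star}(z_0)-\ELBO=\KLdiv{P^\star_{[t_1,t_2]\given0}(\cdot\given z_0)}{P^{\theta_\star}_{[t_1,t_2]\given0}(\cdot\given z_0)}$. The model posterior path law is the normalized $z_0$-slice of the model joint. By Step~1 this slice equals $q_0(z_0)P^\star_{[t_1,t_2]\given0}(\cdot\given z_0)$, so the KL vanishes for every $z_0\in\supp q_0$. This gives~\eqref{eq:oracle-exact}.

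Before invoking Theorem~\ref{thm:ctmc-elbo}, I need to check that its hypotheses hold. Rate support follows because $\RevQ_t(z_t,y)>0$ whenever some $\RevQ_t(z_t,y\given z_0)>0$ with $q_{0\given t}(z_0\given z_t)>0$, since the oracle rate is a posterior average of the clean-conditioned rates. Terminal absolute continuity holds because $q_{t_2\given0}(\cdot\given z_0)\ll q_{t_2}$. Integrability is the oracle cost $\int\J^\star_t\,dt$, which is finite: it equals the mutual information lost over the window under Assumption~\ref{ass:regularity-fixed-support}. Alternatively, it is bounded because the ELBO gap decomposition expresses it through finite entropies.

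\emph{Step 3: the floor.} Averaging~\eqref{eq:oracle-exact} over $q_0$ gives $\NELBO(\theta_\star)=H(q_0)$. Lemma~\ref{lem:universal-floor} gives $\NELBO(\theta)\ge H(q_0)$ for every $\theta$, so the infimum is attained at $\theta_\star$. As a cross-check, Theorem~\ref{thm:nelbo-pathkl} gives the same value, since all three mismatch terms vanish at $\theta_\star$.

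For the limit $t_2\uparrow T$, I expect $\NELBO_{[t_1,t_2]}(\theta_\star)=H(q_0)$ to hold for every $t_2<T$. The limiting statement then holds provided the full-interval objective is defined as this limit, in the sense of Corollary~\ref{cor:nelbo-pathkl}.

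The main obstacle is Step~1. It needs the identification of the reverse-time description of the noising joint with the forward one, including that the reversed process with rates $\RevQ_t$ started at $q_{t_2}$ has marginals $q_t$. I would verify this by checking that $q_t$ solves the backward Kolmogorov equation with generator $\RevQ_t$, which is a direct computation from~\eqref{eq:true-reverse-rate}. States with $q_t=0$ are handled by the fixed-support convention.
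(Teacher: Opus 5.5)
Your proposal is correct, but it runs the argument in the opposite order from the paper. The paper evaluates the Oracle Distance identity (Theorem~\ref{thm:nelbo-pathkl}) at $\theta_\star$. There the reconstruction KL and the path KL both vanish, so $\NELBO_{[t_1,t_2]}(\theta_\star)=H(q_0)$, and Lemma~\ref{lem:universal-floor} supplies the infimum. That lemma also writes $\NELBO_{[t_1,t_2]}(\theta_\star)-H(q_0)$ as the expected ELBO gap plus $\KLdiv{q_0}{p_0^{\theta_\star}}$, both non-negative, so their joint vanishing yields $p_0^{\theta_\star}=q_0$ and tightness $q_0$-a.e.

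You instead identify the full model joint with the noising joint, which is the reverse-time factorization~\eqref{eq:noising-reverse-factorization} of the direct proof specialized to $\theta_\star$. You read off $p_0^{\theta_\star}=q_0$ and pointwise tightness first, and only then average. The paper's route is two lines once the Oracle Distance theorem is available, and it needs only $P^{\theta_\star}_{[t_1,t_2]}=P^\star_{[t_1,t_2]}$ rather than equality of the joints. Yours bypasses that theorem (and the Pythagoras/information-loss assembly behind it) and gives the conclusions directly for every $z_0\in\supp q_0$.

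Two refinements.

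First, Step~2 does not need Theorem~\ref{thm:ctmc-elbo} or its hypothesis check. With equal joints, $dP^\star_{[t_1,t_2]\given0}(\cdot\given z_0)/dP^{\theta_\star}_{0,[t_1,t_2]}(z_0,\cdot)\equiv 1/q_0(z_0)$, so the path-level definition~\eqref{eq:ctmc-elbo-path-definition} gives $\ELBO_{[t_1,t_2]}(\theta_\star;\,z_0)=\log q_0(z_0)$ at once. If you keep the check, drop the appeal to Assumption~\ref{ass:regularity-fixed-support}, which the corollary does not assume. Your second justification is the right one, and averaging Theorem~\ref{thm:girsanov} over $q_0$ makes it precise: $\int_{t_1}^{t_2}\J^\star_t\,dt=I(Z_0;Z_{[t_1,t_2]})-I(Z_0;Z_{t_2})=I(Z_0;Z_{t_1})-I(Z_0;Z_{t_2})\le H(q_0)$.

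Second, verifying that $q_t$ solves the backward Kolmogorov equation for $\RevQ_t$ identifies only the marginals. That suffices for $p_0^{\theta_\star}(z_0)=\sum_z q_{t_1}(z)\,q_{0\given t_1}(z_0\given z)=q_0(z_0)$. Step~2, however, needs the path-level fact that the reversed noising process is the CTMC with rates $\RevQ_t$, i.e.\ that its kernels $q_s(x)\,q_{t\given s}(y\given x)/q_t(y)$ are generated by $\RevQ_t$. This is the time-reversal statement around~\eqref{eq:true-reverse-rate}, and the paper's proof relies on exactly the same fact when it declares the path KL zero.
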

\begin{proof}
  Evaluate Theorem~\ref{thm:nelbo-pathkl} at $\theta_\star$. Its reconstruction term vanishes because $p^{\theta_\star}_{0\given t_1}=q_{0\given t_1}$, and its path KL vanishes because $P^{\theta_\star}_{[t_1,t_2]}=P^\star_{[t_1,t_2]}$: both are initialized at $q_{t_2}$ and run with the marginal reverse rate $\RevQ_t$. Hence $\NELBO_{[t_1,t_2]}(\theta_\star)=H(q_0)$, which with the floor of Lemma~\ref{lem:universal-floor} gives $\inf_\theta\NELBO_{[t_1,t_2]}(\theta)=H(q_0)$.

  Lemma~\ref{lem:universal-floor} also writes $\NELBO_{[t_1,t_2]}(\theta_\star)-H(q_0)$ as a sum of two non-negative gaps, the expected ELBO gap and $\KL(q_0\,\|\, p_0^{\theta_\star})$. Since this sum is zero, both vanish a.e., so $p_0^{\theta_\star}=q_0$ and the ELBO is tight for $q_0$-a.e.\ $z_0$, proving~\eqref{eq:oracle-exact}. Finally, by the terminal mixing and convergence Assumption~\ref{ass:terminal-mixing}, the joint law of $(Z_0,Z_t)$ converges on the finite state space, where entropy and mutual information are continuous and bounded, so the same conclusion passes to $t_2\uparrow T$, including $T=\infty$.
\end{proof}

\subsection{An alternative proof of the Oracle Distance theorem}\label{subsec:alternative-proof}

The assembly proof of \S\ref{subsec:oracle-distance} relies on the regularity and fixed-support conditions of Assumption~\ref{ass:regularity-fixed-support}, which the theorem itself does not need. We close the section with a proof that avoids them: it needs only the reverse-time factorization of the joint noising law, the KL chain rule, and Girsanov's formula, at the price of manipulating laws on the path space rather than rates. The comparison between the two proofs is also instructive: the KL chain rule plays below the role that the Pythagorean split played at the rate level, and since $\Phi$ is the KL rate between Poisson jump intensities, Proposition~\ref{prop:pythagoras} may be read as the infinitesimal, jump-by-jump version of the chain rule.

Two reminders from \S\ref{par:time-reversal} before the proof. First, the path-laws (Definition~\ref{def:noising-path-laws}): the noising process defines the clean-conditioned law $P^\star_{[t_1,t_2]\given0}(\cdot\given z_0)$, a kernel from clean data to trajectories; prefixing $z_0\sim q_0$ gives the joint $P^\star_{{0},[t_1,t_2]}$, and marginalizing $z_0$ out gives $P^\star_{[t_1,t_2]}$, while the model law $P^\theta_{[t_1,t_2]}$ and its joint $P^\theta_{{0},[t_1,t_2]}$ are built in reverse time from the triple $(p^\theta_{t_2},\RevQ^\theta_t,p^\theta_{0\given t_1})$.
Second, the descriptions: by Kolmogorov uniqueness, each law is equally specified from either endpoint of the window, and the backward description of the joint noising law is what makes the proof below work. To avoid the nuances of path densities, we integrate against these laws directly, writing $P(dz_{[t_1,t_2]})$ in the path variable while keeping plain pmf notation for the finite data factor $z_0$ (\S\ref{par:path-laws}), with Radon--Nikodym derivatives $\frac{dP}{dP'}$ in place of density ratios.
In particular, the joint noising and model laws are, as per their defining factorizations,
\begin{align*}
  P^\star_{{0},[t_1,t_2]}(z_0,dz_{[t_1,t_2]})
   & = q_0(z_0)\,P^\star_{[t_1,t_2]\given0}(dz_{[t_1,t_2]}\given z_0),                  \\
  P^\theta_{{0},[t_1,t_2]}(z_0,dz_{[t_1,t_2]})
   & = P^\theta_{[t_1,t_2]}(dz_{[t_1,t_2]})\,p^\theta_{0\given t_1}(z_0\given z_{t_1}).
\end{align*}

\begin{proof}[Direct proof of Theorem~\ref{thm:nelbo-pathkl}]
  The support hypotheses ensure that every term below is well defined: $P^\star_{[t_1,t_2]}\ll P^\theta_{[t_1,t_2]}$ by Theorem~\ref{thm:girsanov}, and together with $q_{0\given t_1}\ll p^\theta_{0\given t_1}$ from the hypotheses, we get $P^\star_{{0},[t_1,t_2]}\ll P^\theta_{{0},[t_1,t_2]}$. This then ensures that the Radon--Nikodym derivatives below are well defined.

  First, negating the ELBO definition~\eqref{eq:ctmc-elbo-path-definition} and averaging over $z_0\sim q_0$ expresses the average NELBO as a Radon--Nikodym identity thanks to the first joint factorization above, giving
  \begin{equation*}
    \NELBO_{[t_1,t_2]}(\theta)
    = \int \log \frac{dP^\star_{[t_1,t_2]\given0}(\cdot\given z_0)}{dP^\theta_{{0},[t_1,t_2]}(z_0,\cdot)}(z_{[t_1,t_2]})\;
    P^\star_{{0},[t_1,t_2]}(z_0,dz_{[t_1,t_2]}).
  \end{equation*}
  Since $q_0$ is the $z_0$-marginal of $P^\star_{{0},[t_1,t_2]}$ and $H(q_0)=-\E_{q_0}\log q_0(Z_0)$, subtracting $H(q_0)$ inserts the factor $q_0(z_0)$ into the numerator. By the same factorization, the integrand becomes the joint Radon--Nikodym derivative, so
  \begin{equation*}
    \NELBO_{[t_1,t_2]}(\theta)-H(q_0)
    = \int \log \frac{dP^\star_{{0},[t_1,t_2]}}{dP^\theta_{{0},[t_1,t_2]}}\;dP^\star_{{0},[t_1,t_2]}
    = \KLdiv{P^\star_{{0},[t_1,t_2]}}{P^\theta_{{0},[t_1,t_2]}}.
  \end{equation*}

  Next, by the Markov property, the conditional law of $Z_0$ given the whole window trajectory $Z_{[t_1,t_2]}$ depends on that trajectory only through its left endpoint $Z_{t_1}$:
  \[
    \Law(Z_0\mid Z_{[t_1,t_2]}) = q_{0\given t_1}(\cdot\given Z_{t_1}).
  \]
  Hence the joint noising law \emph{also} factors as
  \begin{equation}
    P^\star_{{0},[t_1,t_2]}(z_0,dz_{[t_1,t_2]})
    = P^\star_{[t_1,t_2]}(dz_{[t_1,t_2]})\, q_{0\given t_1}(z_0\given z_{t_1}).
    \label{eq:noising-reverse-factorization}
  \end{equation}

  With both joint laws factored in the same order, the Radon--Nikodym derivative factorizes,
  \begin{equation*}
    \frac{dP^\star_{{0},[t_1,t_2]}}{dP^\theta_{{0},[t_1,t_2]}}(z_0,z_{[t_1,t_2]})
    = \frac{dP^\star_{[t_1,t_2]}}{dP^\theta_{[t_1,t_2]}}(z_{[t_1,t_2]})\,
    \frac{q_{0\given t_1}(z_0\given z_{t_1})}{p^\theta_{0\given t_1}(z_0\given z_{t_1})},
    \qquad P^\star_{{0},[t_1,t_2]}\text{-a.s.}
  \end{equation*}
  Taking logs and integrating against $P^\star_{{0},[t_1,t_2]}$ splits the joint divergence into a window term and an endpoint-kernel term,
  \begin{equation*}
    \KLdiv{P^\star_{{0},[t_1,t_2]}}{P^\theta_{{0},[t_1,t_2]}}
    = \KLdiv{P^\star_{[t_1,t_2]}}{P^\theta_{[t_1,t_2]}}
    + \Exp{q_{t_1}}{\KLdiv{q_{0\given t_1}(\cdot\given z_{t_1})}{p^\theta_{0\given t_1}(\cdot\given z_{t_1})}},
  \end{equation*}
  the last expectation being over $z_{t_1}\sim q_{t_1}$, the time-$t_1$ marginal of $P^\star_{[t_1,t_2]}$; this proves~\eqref{eq:finite-window-nelbo-pathkl}.

  Finally, as in the previous proof, the path KL is the sum of the terminal prior KL and the integrated jump-rate divergence by Theorem~\ref{thm:girsanov}, giving~\eqref{eq:finite-window-pathkl-expanded}.
\end{proof}

\begin{remark}[The arrow of time and independent mechanisms]
  These results have an intriguing connection to the independent mechanisms principle from causal inference~\cite{schoelkopf2012causal}. At the path-law level there is no asymmetry between the two time directions (a process is Markov in forward time if and only if it is Markov in reverse time). In the forward direction, the description factorizes into two independent mechanisms: an initial condition (the data law $q_0$) and a dynamical law (the generator $Q_t$), chosen without knowledge of each other --- the noising mechanism contains no information about the data. Backward, this independence is destroyed: by~\eqref{eq:true-reverse-rate}, the reverse generator $\RevQ_t(x,y)=Q_t(y,x)\,\frac{q_t(y)}{q_t(x)}$ requires the marginals $q_t$ and hence the data law. Janzing et al.~\cite{janzing2016arrow} derive the thermodynamic arrow of time from precisely this independence of initial state and dynamics. A similar logic --- asymmetry through restriction of the model class --- underlies causal direction detection in time series: a process representable by mechanisms driven by independent noise in one direction generically admits no such representation in the other, which is what renders the direction identifiable~\cite{peters2009detecting,peters2013causal}. Our view on discrete diffusion is an instance of this asymmetry: the forward process belongs to the small class of data-independent, position-factorized mechanisms, whereas its reversal, whenever $q_0$ is correlated across positions, exists only in the larger class of data-dependent generators --- and the purpose of training is to supply that dependence. The present framework explicitly quantifies the breakdown: the mutual information $I(Z_0; Z_t)$ measures how much data-dependence the reverse generator must carry at time $t$, and Theorem~\ref{thm:elbo-oracle-information-loss} identifies the oracle integrand $\J^\star_t =-\frac{d}{dt}I(Z_0;Z_t)$ as the rate at which the forward mechanism's independence leaks into the required reverse mechanism. The violation of independence under reversal thus has a conserved total --- the universal floor $H(p_{\mathrm{data}})$ of Corollary~\ref{cor:oracle-exact}, the total mechanism-dependence any reverse process must reconstruct --- while the choice of noising process controls only \emph{when} along the trajectory it is paid (Figure~\ref{fig:floor}).
\end{remark}

\subsection{A Bregman-divergence and generator-matching view}
These results are instances of standard Bregman identities. A strictly convex, differentiable potential $F$ generates $D_F(a,b)=F(a)-F(b)-F'(b)(a-b)$. The local rate divergence $\Phi$ is generated by the negative-entropy potential
\[
  F(u)=u\log u-u,\qquad F'(u)=\log u.
\]
For any Bregman divergence $D_F$, the conditional mean $B^\star=\Exp{}{A \;\middle|\; G}$ minimizes $\E[D_F(A,b(G))]$ over $G$-measurable predictors and satisfies $\E[D_F(A,B)]=\E[D_F(A,B^\star)]+\E[D_F(B^\star,B)]$~\cite{banerjee2005bregman}. Lemma~\ref{lem:bregman-pythagoras} specializes this identity to $F(u)=u\log u-u$; Theorem~\ref{thm:reverse-rate-projection} and Proposition~\ref{prop:pythagoras} apply it with $A=\RevQ_t(z_t,y\given Z_0)$ and $G=Z_t$. The optimal rate~\eqref{eq:unconditional-reverse-rate} is therefore the marginal generator obtained by posterior-averaging the clean-conditioned generators, as in Generator Matching~\cite{holderrieth2025generator}. It is also the jump-process analogue of flow matching~\cite{gat2024discreteflow}: the regressed object is a reverse intensity and the divergence is $\Phi$ rather than the squared Euclidean norm. The following remark makes this correspondence exact.

\begin{remark}[The continuous-diffusion analogue]\label{rmk:continuous-analogue}
  Nothing in the Oracle Distance is specific to finite state spaces. Let the noising process be a Markov diffusion $dZ_t=f_t(Z_t)\,dt+\sigma_t(Z_t)\,dW_t$ on $\R^d$, with $Z_0\sim q_0$, $(W_t)_t$ a standard Brownian motion, and $a_t:=\sigma_t\sigma_t^{\mathsf T}$. Under standard smoothness and nondegeneracy assumptions, the reverse-time process is again a diffusion with the same diffusion matrix and, written at forward time $t$, the reversed drift
  \[
    \widehat b_t(z) = -f_t(z) + \nabla\!\cdot a_t(z) + a_t(z)\,\nabla\log q_t(z),
    \qquad (\nabla\!\cdot a_t)_i=\textstyle\sum_j\partial_j(a_t)_{ij};
  \]
  conditioning on the clean data replaces the marginal by the forward kernel, $\widehat b_t(z\given z_0)=-f_t(z)+\nabla\!\cdot a_t(z)+a_t(z)\,\nabla\log q_{t\given0}(z\given z_0)$. The score $\nabla\log q_t$ thus plays the role of the marginal reverse rate $\RevQ_t$, and $\nabla\log q_{t\given0}$ that of the computable clean-conditioned target. The two ingredients then translate verbatim. For reverse processes sharing the diffusion matrix, Girsanov's theorem plays the role of Theorem~\ref{thm:girsanov}, with a quadratic drift energy in place of $\Phi$:
  \[
    \KLdiv{P^\star_{[0,T]}}{P^\theta_{[0,T]}}
    = \KLdiv{q_T}{p^\theta_T}
    + \frac12\int_0^T \Exp{q_t}{\Big(\big(\widehat b_t-\widehat b^\theta_t\big)^{\mathsf T} a_t^{-1}\,\big(\widehat b_t-\widehat b^\theta_t\big)\Big)(Z_t)}\,dt.
  \]
  The projection of Theorem~\ref{thm:reverse-rate-projection} becomes a weighted $L^2$ projection, whose optimizer among $Z_t$-measurable drifts is again the conditional mean,
  \[
    \widehat b^{\theta\star}_t(z)
    = \Exp{}{\widehat b_t(Z_t\given Z_0) \;\middle|\; Z_t=z}
    = \widehat b_t(z),
  \]
  the drift form of Fisher's identity $\nabla\log q_t(z)=\Exp{}{\nabla\log q_{t\given0}(z\given Z_0) \;\middle|\; Z_t=z}$. With matched endpoints, the population identity then reads exactly as Corollary~\ref{cor:nelbo-pathkl},
  \[
    \NELBO_{[0,T]}(\theta)
    = H(q_0)
    + \frac12\int_0^T \Exp{q_t}{\big\|\sigma_t^{-1}\big(\widehat b_t-\widehat b^\theta_t\big)(Z_t)\big\|^2}\,dt,
  \]
  with $H(q_0)$ now the differential entropy. Training on a window $[t_1,t_2]\subset(0,T)$ with a terminal prior $p^\theta_{t_2}$ and a reconstruction decoder $p^\theta_{0\given t_1}$ gives, exactly as in Theorem~\ref{thm:nelbo-pathkl},
  \[
    \NELBO_{[t_1,t_2]}(\theta)-H(q_0)
    = \Exp{q_{t_1}}{\KLdiv{q_{0\given t_1}(\cdot\given z_{t_1})}{p^\theta_{0\given t_1}(\cdot\given z_{t_1})}}
    + \KLdiv{P^\star_{[t_1,t_2]}}{P^\theta_{[t_1,t_2]}},
  \]
  with the terminal mismatch $\KLdiv{q_{t_2}}{p^\theta_{t_2}}$ again sitting inside the path KL. Table~\ref{tab:ctmc-continuous-dictionary} collects the dictionary.
\end{remark}

\begin{table}[b]
  \centering
  \renewcommand{\arraystretch}{1.45}
  \setlength{\tabcolsep}{6pt}
  \begin{tabular}{@{}L{5.0cm} L{5.1cm} L{6.5cm}@{}}
    \toprule
                                 & \textbf{finite-state CTMC}                                                                & \textbf{continuous diffusion}                                          \\
    \midrule
    Forward infinitesimal object & generator $Q_t(x,y)$                                                                      & generator $f_t\cdot\nabla+\tfrac12 a_t{:}\nabla^2$                     \\
    True reverse object          & reverse rate $\RevQ_t(z_t,y)$                                                             & reverse drift $\widehat b_t(z)$, or score $\nabla\log q_t(z)$          \\
    Tractable conditional target & $\RevQ_t(z_t,y\given z_0)$                                                                & $\widehat b_t(z\given z_0)$, or $\nabla\log q_{t\given0}(z\given z_0)$ \\
    Path-KL divergence           & $\Phi(a,b)=a\log\tfrac ab-a+b$                                                            & $\tfrac12(u-v)^{\mathsf T}a_t^{-1}(u-v)$                               \\
    Path-KL formula              & Girsanov formula and relative entropy (Theorem~\ref{thm:girsanov})                        & Girsanov's theorem                                                     \\
    Population optimizer         & $\Exp{}{\RevQ_t(z_t,y\given Z_0) \;\middle|\; Z_t=z_t}$                                   & $\Exp{}{\widehat b_t(z\given Z_0) \;\middle|\; Z_t=z}$                 \\
    NELBO meaning                & \multicolumn{2}{c@{}}{data entropy $+$ reverse path KL, plus the two boundary mismatches}                                                                          \\
    \bottomrule
  \end{tabular}
  \caption{The discrete/continuous diffusion dictionary: the same path structure throughout, with jump rates and the rate divergence $\Phi$ replaced by reverse drifts and the quadratic Girsanov energy.}
  \label{tab:ctmc-continuous-dictionary}
\end{table}

\section{Sequence modeling}\label{sec:product-ctmc}

In this section, $Q$ denotes sequence-level rates and $R$ single-token rates. Forward noising factors over positions, but correlations in the clean-data law mean that the unconditional noisy marginal $q_t$ generally does not. Only the conditional kernel $q_{t\given0}(\cdot\given z_0)$ factors.

\subsection{Token-factorizable noising processes}\label{subsec:product-ctmc-theory}
We first state the CTMC result behind the token-wise formulas. Let $\X_i$ be finite and
\[
  \X:=\X_1\times\cdots\times \X_L.
\]
We recall that a time-inhomogeneous CTMC on any given finite $\X$ has generator $Q_t$ and transition kernel $q_{t\given s}(y\given x)$ if and only if they satisfy the forward Kolmogorov equation
\[
  \partial_t q_{t\given s}(y\given x) = \sum_{r\in\X} q_{t\given s}(r\given x)Q_t(r,y), \qquad q_{s\given s}(y\given x)=\one\{x=y\}.
\]
\begin{theorem}[Product CTMCs]\label{thm:product-ctmc}
  Suppose coordinate $1\le i \le L$ has generator $R_t^i$ on $\X_i$, with transition kernel $q_{t\given s}^i$. Define the product kernel
  \[
    q_{t\given s}(y\given x):=\prod_{i=1}^L q_{t\given s}^i(y^i\given x^i), \qquad x,y\in\X.
  \]
  Then, its unique infinitesimal generator is given by
  \[
    Q_t(x,y) = \sum_{i=1}^L (Q_t)_i(x,y), \qquad (Q_t)_i(x,y) = \one\{x^{-i}=y^{-i}\}\,R_t^i(x^i,y^i),
  \]
  where $x^{-i}$ denotes the vector of all coordinates of $x$ excluding the $i$-th. Conversely, any process with generator of the displayed form has this product transition kernel, i.e., it factors over positions.
\end{theorem}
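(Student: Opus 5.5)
The plan is to verify the forward Kolmogorov equation directly for the product kernel, then invoke uniqueness of solutions of linear ODEs with initial condition, as recalled in \S\ref{subsec:ctmc-primer}, for both directions.

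\emph{Forward direction.} Fix $s$ and $x$ and set $q(y):=q_{t\given s}(y\given x)=\prod_i q^i_{t\given s}(y^i\given x^i)$. Each factor solves its own forward equation $\partial_t q^i_{t\given s}(y^i\given x^i)=\sum_{r^i\in\X_i} q^i_{t\given s}(r^i\given x^i)R^i_t(r^i,y^i)$. The product rule (valid a.e.\ in $t$ for bounded, piecewise continuous rates) gives
\[
\partial_t q_{t\given s}(y\given x)=\sum_{i=1}^L \Big(\prod_{j\neq i} q^j_{t\given s}(y^j\given x^j)\Big)\sum_{r^i} q^i_{t\given s}(r^i\given x^i)R^i_t(r^i,y^i).
\]
I would then rewrite the $i$-th summand as $\sum_{r\in\X} q_{t\given s}(r\given x)\,\one\{r^{-i}=y^{-i}\}R^i_t(r^i,y^i)$, which is exactly $\sum_r q_{t\given s}(r\given x)(Q_t)_i(r,y)$. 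Summing over $i$ shows that the product kernel satisfies the forward equation with generator $Q_t=\sum_i(Q_t)_i$. The initial condition also holds, since $\prod_i\one\{x^i=y^i\}=\one\{x=y\}$. I would check that $Q_t$ is a valid generator: the off-diagonals are nonnegative, and each row sums to zero because every $R^i_t$ row does. The diagonal is $Q_t(x,x)=\sum_i R^i_t(x^i,x^i)$, with off-diagonal mass only on pairs differing in exactly one coordinate.

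\emph{Uniqueness of the generator.} For the claim that this is \emph{the} generator, I would differentiate at $t=s$. For a kernel solving the forward equation with bounded rates, $Q_s(x,y)=\partial_t q_{t\given s}(y\given x)|_{t=s^+}$ at continuity points. So any generator compatible with the kernel is determined by the kernel and must coincide with the computed $Q_t$ (a.e.\ in $t$).

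\emph{Converse.} Suppose a process has generator of the displayed form. Its kernel solves the linear forward ODE with this $Q_t$ and initial value $I$, and the product kernel solves the same ODE by the forward direction. Uniqueness for linear ODEs with bounded measurable coefficients (Carathéodory) then forces the two to agree.

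\emph{Main obstacle.} The only delicate step is regularity: justifying the product rule and uniqueness when rates are merely bounded and measurable rather than continuous. I would handle this by working with the integral form of the Kolmogorov equation, where the kernels are absolutely continuous, so the product of absolutely continuous functions is absolutely continuous. A Gr\"onwall argument on the finite space then gives uniqueness.
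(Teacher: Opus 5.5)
Your proposal is correct and follows the paper's proof essentially step for step. You check that $Q_t$ is a generator, differentiate the product kernel using each coordinate's forward equation, regroup the $i$-th term as $\sum_r q_{t\given s}(r\given x)(Q_t)_i(r,y)$, verify the initial condition, and invoke uniqueness of linear ODE solutions for both directions. Your two extra steps, identifying $Q_s$ as the right derivative at $t=s$ and using absolute continuity with Gr\"onwall for merely measurable rates, make explicit two points the paper leaves implicit: that the generator is unique given the kernel, and the regularity needed for the product rule and ODE uniqueness.
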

\begin{proof}
  First, $Q_t$ is a generator: if $x\neq y$, then $Q_t(x,y)\geq0$, since the only possible nonzero terms are off-diagonal entries of the coordinate generators. Moreover,
  \[
    \sum_{y\in\X}Q_t(x,y)
    = \sum_{i=1}^L \sum_{y\in\X} \one\{x^{-i}=y^{-i}\}\,R_t^i(x^i,y^i)
    = \sum_{i=1}^L\sum_{u\in\X_i}R_t^i(x^i,u) = 0.
  \]
  Thus the diagonal entries are the negative row sums of the off-diagonal entries. Second, it remains to check the Kolmogorov equation. Since each coordinate kernel satisfies the Kolmogorov equation for $\X_i$,
  \[
    \partial_t q_{t\given s}^i(y^i\given x^i) = \sum_{u\in\X_i}q_{t\given s}^i(u\given x^i)R_t^i(u,y^i),
  \]
  differentiating the full kernel product gives
  \begin{equation*}
    \partial_t q_{t\given s}(y\given x)  = \sum_{i=1}^L \partial_tq_{t\given s}^i(y^i\given x^i) \prod_{j\neq i} q_{t\given s}^j(y^j\given x^j)
    = \sum_{i=1}^L \sum_{u\in\X_i} q_{t\given s}^i(u\given x^i)R_t^i(u,y^i) \prod_{j\neq i}q_{t\given s}^j(y^j\given x^j).
  \end{equation*}
  Reciprocally, from the definitions of $q_{t\given s}$ and of $Q_t$, we have
  \begin{align*}
    \sum_{z\in\X} q_{t\given s}(z\given x)Q_t(z,y)
     & =
    \sum_{z\in\X} \prod_{j=1}^L q_{t\given s}^j(z^j\given x^j) \sum_{i=1}^L \one\{z^{-i}=y^{-i}\}R_t^i(z^i,y^i)  \\
     & =
    \sum_{i=1}^L \sum_{z\in\X} \one\{z^{-i}=y^{-i}\} \prod_{j=1}^L q_{t\given s}^j(z^j\given x^j) R_t^i(z^i,y^i) \\
     & =
    \sum_{i=1}^L \sum_{u\in\X_i} q_{t\given s}^i(u\given x^i) \prod_{j\neq i}q_{t\given s}^j(y^j\given x^j) R_t^i(u,y^i),
  \end{align*}
  which is exactly the derivative computed above. The initial condition also holds:
  \[
    q_{s\given s}(y\given x)=\prod_{i=1}^L\one\{x^i=y^i\}=\one\{x=y\}.
  \]
  Therefore $q_{t\given s}$ solves the Kolmogorov forward equation with generator $Q_t$. By uniqueness of solutions to finite-dimensional linear ODEs, this is the transition kernel associated with $Q_t$. For the converse, any process with a generator of the displayed sum form solves the same forward Kolmogorov equation; its solution is unique, and the product kernel above is one such solution, so the transition kernel necessarily factors over positions.
\end{proof}

Almost all instances of discrete diffusion modeling on sequences~\cite{hoogeboom2021argmax,austin2021d3pm,campbell2022continuous,lou2024sedd,shi2024simplified,vonruette2025gidd,schiff2025udlm,sahoo2025diffusionduality,sahoo2026scaling} are product CTMCs like the ones described above, i.e., the forward noising process is token-wise independent. Since the clean-data law $q_0$ is generally correlated across positions, neither the unconditional marginals $q_t$ nor the unconditional backward kernels factor. But if we stay clean-conditioned, then the reverse process does factor: for $s<t$, Bayes' rule gives
\[
  q_{s\given t,0}(z_s\given z_t,z_0)
  = \frac{q_{s\given 0}(z_s\given z_0)q_{t\given s}(z_t\given z_s)}{q_{t\given 0}(z_t\given z_0)}
  = \prod_{i=1}^L \frac{q^i_{s\given 0}(z_s^i\given z_0^i)q^i_{t\given s}(z_t^i\given z_s^i)}{q^i_{t\given 0}(z_t^i\given z_0^i)}
  = \prod_{i=1}^L q_{s\given t,0}^i(z_s^i\given z_t^i,z_0^i).
\]

Thus, we can apply the product-CTMC theorem to the factorized clean-conditioned backward process $q_{s\given t,0}=\prod_{i=1}^L q_{s\given t,0}^i$ to obtain the backward bridge rates $\RevQ_t(\cdot\given z_0) = \sum_{i=1}^L(\RevQ_t)_i(\cdot\given z_0)$. For a jump from the noisy sequence $z_t$ to the sequence $y$ with $y^i\neq z_t^i$, the corresponding true token-wise backward bridge rate is
\begin{equation}
  \sboxed{
    a_i(t,z_0,z_t,y)
    := (\RevQ_t)_i(z_t, y\given z_0) = \one\{z_t^{-i}
    =y^{-i}\}R_t^i(y^i,z_t^i) \frac{q_{t\given 0}^i(y^i\given z_0^i)} {q_{t\given 0}^i(z_t^i\given z_0^i)}.
  }
  \label{eq:reverse-bridge-rate-definition}
\end{equation}
These will be the target rates appearing inside the token-wise CTMC ELBO loss.

\subsection{Three exact token-wise factorizations of the reverse rate}
For the remainder of these notes, let
\[
  z_0=(z_0^1,\dots,z_0^L)\in\X,
  \qquad
  z_0^i\in\X_i,
\]
where $z_0^i$ is the clean token at position $i$, and $e_\ell$ denotes the $\ell$-th element of the per-position vocabulary $\X_i$ (so $\ell$ ranges over $1\le \ell \le |\X_i|$). The forward kernel is assumed to factor over tokens, but the data distribution $q_0$ need not. The following are three exact factorizations of the oracle reverse rate and are thus the starting point for the token-wise ELBO.

Fix a sequence position $i$ and a candidate sequence $y$ with $y^i\neq z_t^i$. The clean-token posterior at position $i$ is
\begin{equation}
  \sboxed{
    \pi_i^\star(e_\ell \given z_t,t) := q(Z_0^i=e_\ell\given Z_t=z_t).
  }
  \label{eq:denoiser-definition}
\end{equation}
This is the standard \emph{denoiser} target. Bayes' rule isolates the analytic $i$-th-token forward kernel from the remaining sequence:
\begin{align*}
  \pi_i^\star(e_\ell \given z_t,t)
   & = \frac{q(Z_0^i=e_\ell,Z_t=z_t)}{q(Z_t=z_t)}
  = \frac{\sum_{z_0:z_0^i=e_\ell} q_0(z_0) \prod_{j} q_{t\given 0}^j(z_t^j\given z_0^j)}
    {\sum_{z_0} q_0(z_0) \prod_{j} q_{t\given 0}^j(z_t^j\given z_0^j)}
  \\
  \\
   & = \frac{q_{t\given 0}^i(z_t^i\given e_\ell) \sum_{z_0^{-i}} q_0(e_\ell,z_0^{-i})\prod_{j\neq i} q_{t\given 0}^j(z_t^j\given z_0^j)}
       {\sum_rq_{t\given 0}^i(z_t^i\given e_r) \sum_{z_0^{-i}} q_0(e_r,z_0^{-i})\prod_{j\neq i} q_{t\given 0}^j(z_t^j\given z_0^j)},
\end{align*}
where $(e_\ell,z_0^{-i})$ is the sequence with $i$-th token $e_\ell$ and the remaining tokens from $z_0$. It turns out that the common factor given by the sum over $z_0^{-i}$ can be interpreted in simple terms, motivating the following definition. Define the \emph{cavity} law
\begin{equation}
  \sboxed{
    \mu_i^\star(e_\ell\given z_t^{-i},t) := q(Z_0^i=e_\ell\given Z_t^{-i}=z_t^{-i}).
  }
  \label{eq:cavity-definition}
\end{equation}
It conditions on the noisy context $z_t^{-i}$ but not the local noisy token $z_t^i$. Expanding by Bayes' rule,
\begin{align*}
  \mu_i^\star(e_\ell\given z_t^{-i},t)
   & =
  \frac{ q(Z_0^i=e_\ell,Z_t^{-i}=z_t^{-i}) }{ q(Z_t^{-i}=z_t^{-i}) }
  =
  \frac{ \sum_{z_0:z_0^i=e_\ell} q(Z_0=z_0,Z_t^{-i}=z_t^{-i}) }
  { \sum_{z_0} q(Z_0=z_0,Z_t^{-i}=z_t^{-i}) } \\
   & =
  \frac{ \sum_{z_0^{-i}} q_0(e_\ell,z_0^{-i})\prod_{j\neq i}q_{t\given 0}^j(z_t^j\given z_0^j) }
  { \sum_r \sum_{z_0^{-i}} q_0(e_r,z_0^{-i})\prod_{j\neq i}q_{t\given 0}^j(z_t^j\given z_0^j) }.
\end{align*}
Thus, the denoiser is the cavity law reweighted by the forward kernel on the local token:
\begin{equation}
  \pi_i^\star(e_\ell \given z_t,t) \propto \mu_i^\star(e_\ell\given z_t^{-i},t)\,q_{t\given 0}^i(z_t^i\given e_\ell).
  \label{eq:cavity-to-denoiser-propto}
\end{equation}
We may interpret this relation as the update that the denoiser given only the noisy context $z_t^{-i}$ (the cavity law) obtains after observing as well the local noisy token $z_t^i$. That required update turns out to be given by the local forward kernel. Equivalently, for any $f$,
\begin{equation}
  \Exp{\pi_i^\star(e_\ell \given z_t,t)}{f(e_\ell)}
  = \frac{ \Exp{\mu_i^\star(e_\ell\given z_t^{-i},t)}{f(e_\ell)\cdot q_{t\given 0}^i(z_t^i\given e_\ell)} }
  { \Exp{ \mu_i^\star(e_\ell\given z_t^{-i},t)}{q_{t\given 0}^i(z_t^i\given e_\ell)} }.
  \label{eq:cavity-to-denoiser-expectation}
\end{equation}
The denominator in this expression is the local noisy token likelihood given the noisy context:
\begin{equation}
  \Exp{ \mu_i^\star(e_\ell\given z_t^{-i},t)}{q_{t\given 0}^i(w \given e_\ell)}
  = \sum_\ell q(Z_0^i=e_\ell\given z_t^{-i})\, q(Z_t^i=w\given Z_0^i=e_\ell)
  = q(Z_t^i=w\given z_t^{-i}).
  \label{eq:exp-over-cavity-equals-noisy-singleton-conditional}
\end{equation}

Define lastly the $i$-th \emph{concrete score} as
\begin{equation}
  \sboxed{
    s_i^\star(z_t^i,y^i\given z_t^{-i},t)
    := \frac{q(Z_t^i=y^i\given Z_t^{-i}=z_t^{-i})}{q(Z_t^i=z_t^i\given Z_t^{-i}=z_t^{-i})}
    = \frac{q_t(y^i,z_t^{-i})}{q_t(z_t)},
  }
  \label{eq:concrete-score-definition}
\end{equation}
where the second equality uses Bayes' rule to cancel the common $Z_t^{-i}=z_t^{-i}$ conditioning.

It turns out that the three quantities $\pi_i^\star$, $\mu_i^\star$, and $s_i^\star$ subsume the different ELBO parameterizations in the literature, and they each are sufficient to express the unconditional reverse rate in terms of forward laws. To see this, we compute the ELBO-optimal rate for each token position $i$, which, by Theorem~\ref{thm:reverse-rate-projection}, is the average of the reverse conditional rate~\eqref{eq:reverse-bridge-rate-definition}:
\begin{align*}
  (\RevQ_t)_i(z_t, y)
   & = \Exp{q_{0\given t}(z_0\given z_t)}{(\RevQ_t)_i(z_t, y\given z_0)}                        \\
   & = \Exp{q_{0\given t}(z_0\given z_t)}{\one\{z_t^{-i}=y^{-i}\}\,
         R_t^i(y^i,z_t^i)\,
         \frac{q_{t\given 0}(y\given z_0)}{q_{t\given 0}(z_t\given z_0)}}                       \\
   & =
  \one\{z_t^{-i}=y^{-i}\}\,
  R_t^i(y^i,z_t^i)\,
  \Exp{q_{0\given t}(z_0\given z_t)}
  {\frac{\prod_j q^j_{t\given 0}(y^j\given z_0^j)}{\prod_j q^j_{t\given 0}(z_t^j\given z_0^j)}} \\
   & = \one\{z_t^{-i}=y^{-i}\} R_t^i(y^i,z_t^i)\,
  \sum_{\ell}
  \pi_i^\star(e_\ell\given z_t,t)\,
  \frac{q^i_{t\given 0}(y^i\given e_\ell)}{q^i_{t\given 0}(z_t^i\given e_\ell)}.
\end{align*}
The last two steps use $q_{t\given0}=\prod_j q^j_{t\given0}$ and $y^{-i}=z_t^{-i}$, which cancel all $j\neq i$ factors.

\begin{remark}[Support of the denoiser law]
  In the last step of the previous calculation and throughout the remainder of the paper, we use the convention that terms with both numerator $\pi_i^\star(e_\ell\given z_t,t)=0$ and denominator $q^i_{t\given0}(z_t^i\given e_\ell)=0$ contribute zero. In reality, Bayes' rule excludes such terms: they would only arise from incorrectly applying Bayes' rule at a token $z_t^i$ outside of the support of $Z_t^i$, that is, a.s.\ unreachable from clean data via the forward kernel. In particular, if we define the reverse support
  \begin{equation}
    \mathcal S_i(z_t^i,t) :=\{\ell:q_{t\given0}^i(z_t^i\given e_\ell)>0\},
    \label{eq:reverse-support}
  \end{equation}
  we have
  \begin{equation}
    \supp \pi_i^\star(\cdot\given z_t,t)\subseteq \mathcal S_i(z_t^i,t).
  \end{equation}
  \label{rem:support-of-denoiser-law}
\end{remark}

Thus, the $i$-th optimal rate depends on the clean-data posterior only through its $i$-th marginal $\pi_i^\star$ or equivalently $\mu_i^\star$ or $s_i^\star$, enabling per-token rate parameterizations. This is precisely what allows factorized, per-token modeling of the reverse rate, making discrete diffusion tractable.

To conclude, we now see how the token-wise unconditional reverse rates can be written in terms of $\pi_i^\star$, $\mu_i^\star$, or $s_i^\star$ via
\begin{align}
  (\RevQ_t)_i(z_t, y)
   & = \one\{z_t^{-i}=y^{-i}\}\, R_t^i(y^i,z_t^i)\,
  \sum_\ell\pi_i^\star(e_\ell\given z_t,t)
  \frac{q^i_{t\given 0}(y^i\given e_\ell)}
  {q^i_{t\given 0}(z_t^i\given e_\ell)}
  \label{eq:denoiser-parameterization-oracle}                                              \\
   & = \one\{z_t^{-i}=y^{-i}\}\, R_t^i(y^i,z_t^i)\,
  \frac{\Exp{\mu_i^\star(e_\ell\given z_t^{-i},t)}{q_{t\given 0}^i(y^i\given e_\ell)}}
  {\Exp{\mu_i^\star(e_\ell\given z_t^{-i},t)}{q_{t\given 0}^i(z_t^i\given e_\ell)}}
  \label{eq:cavity-parameterization-oracle}                                                \\
   & = \one\{z_t^{-i}=y^{-i}\}\, R_t^i(y^i,z_t^i)\, s_i^\star(z_t^i,y^i\given z_t^{-i},t).
  \label{eq:score-parameterization-oracle}
\end{align}
The second equality uses~\eqref{eq:cavity-to-denoiser-expectation}; the third uses~\eqref{eq:exp-over-cavity-equals-noisy-singleton-conditional}.
These are exact factorized forms of the unconditional reverse rate, in the denoiser, cavity, and score coordinates respectively. Each leads to a different expression of the reverse rates by choosing a neural network head to model the respective law, and thus, by the Kolmogorov forward equation, each determines a unique reverse CTMC. In particular, the coordinates differ only in the algebra of the neural network head-to-rate map, given by the three displayed equations above.

\subsection{Parameterizations in the literature}
The literature models the reverse process in one of the three coordinates of Theorem~\ref{thm:three-coordinates}; the population optimum of each is the matching law $\pi_i^\star$, $\mu_i^\star$, or $s_i^\star$. For each we record the model reverse rate $a_i^\theta$ that enters the CTMC ELBO and the finite-step ancestral sampler it induces (whose parallel-token error is isolated in Figure~\ref{fig:nfe}). The samplers share one template: a tractable reverse step uses the token-factorized ansatz
\begin{equation}
  p_{s\given t}^\theta(z_s\given z_t):=\prod_{i=1}^L p^\theta(z_s^i\given z_t),
\end{equation}
whose factors are the exact coordinate marginals $q(z_s^i\given z_t)$ at the oracle and approximate only through the forced product over positions; as $s\uparrow t$ simultaneous coordinate changes have probability $O((t-s)^2)$, so all three coordinates induce the same reverse-CTMC generator.

\paragraph{Denoiser~\cite{austin2021d3pm,hoogeboom2021argmax,campbell2022continuous,sahoo2024mdlm,shi2024simplified}.}
A head $\pi_i^\theta\approx\pi_i^\star$ gives the reverse rate
\begin{equation}
  a_{i,\mathrm{denoiser}}^\theta(t,z_t,y)
  := \one\{z_t^{-i}=y^{-i}\}\,R_t^i(y^i,z_t^i)
  \sum_\ell \pi_i^\theta(e_\ell\given z_t,t)\,
  \frac{q_{t\given 0}^i(y^i\given e_\ell)}{q_{t\given 0}^i(z_t^i\given e_\ell)}.
  \label{eq:denoiser-parameterization}
\end{equation}
Conditioning on $Z_0^i$ and applying Bayes' rule to the single-token bridge gives the exact identity $q(z_s^i\given z_t)=q_{t\given s}^i(z_t^i\given y)\sum_\ell\pi_i^\star(e_\ell\given z_t,t)\,q_{s\given0}^i(y\given e_\ell)/q_{t\given0}^i(z_t^i\given e_\ell)$, hence the finite-step sampler
\begin{equation}
  p_{\mathrm{denoiser}}^\theta(Z_s^i=y\given z_t)
  :=q_{t\given s}^i(z_t^i\given y)
  \sum_\ell\pi_i^\theta(e_\ell\given z_t,t)\,
  \frac{q_{s\given0}^i(y\given e_\ell)}{q_{t\given0}^i(z_t^i\given e_\ell)}.
  \label{eq:common-denoiser-ancestral-step}
\end{equation}

Continuing the discussion from Remark~\ref{rem:support-of-denoiser-law}, we again need the convention that $0/0:=0$. However, for a learned denoiser it is not guaranteed that
\begin{equation*}
  \supp \pi_i^\theta(\cdot\given z_t,t)\subseteq \mathcal S_i(z_t^i,t).
\end{equation*}
Contrary to this, the cavity rate~\eqref{eq:cavity-parameterization} never divides by per-token likelihood, and out-of-support tokens vanish on their own thanks to the Bayes weight $q_{t\given0}^i(z_t^i\given e_\ell)=0$. Thus, a cavity head is \emph{automatically} confined to $\mathcal S_i$, whereas a denoiser head must \emph{learn} to vanish off $\mathcal S_i$ (or be masked to it by hand) to keep the rate finite, already pointing to the practical difficulty of training denoiser heads in discrete diffusion.

This also reflects the importance of fixing the jump support: when $\mathcal S_i(z_t^i,t)$ does not vary on $[t_1,t_2]$, the out-of-support tokens can be excluded once and for all. In masked diffusion the support is the whole vocabulary at a masked position and the single clean token at a revealed one (which has no reverse jump), so the restriction is automatic; in uniform diffusion it is the entire vocabulary for every $t>0$ and collapses only at $t=0$, so a denoiser head must concentrate on the true token exactly as $t\downarrow0$, which is precisely where an uninformative one diverges (Proposition~\ref{prop:uniform-denoiser-ELBO-diverges}).

\paragraph{Cavity~\cite{vonruette2025gidd,gourevitch2026udm,schiff2025udlm,sahoo2025diffusionduality,sahoo2026scaling}.}
A head $\mu_i^\theta\approx\mu_i^\star$ induces the noisy laws at times $t$ and $s$,
\begin{equation}
  m_i^\theta(y\given z_t,t)
  :=\sum_\ell\mu_i^\theta(e_\ell\given z_t^{-i},t)\,q_{t\given0}^i(y\given e_\ell),
  \quad
  m_{i,t\to s}^\theta(y\given z_t,t)
  :=\sum_\ell\mu_i^\theta(e_\ell\given z_t^{-i},t)\,q_{s\given0}^i(y\given e_\ell),
  \label{eq:cross-time-cavity-marginal}
\end{equation}
and hence the reverse rate
\begin{equation}
  a_{i,\mathrm{cavity}}^\theta(t,z_t,y)
  :=\one\{z_t^{-i}=y^{-i}\}\,R_t^i(y^i,z_t^i)\,
  \frac{m_i^\theta(y^i\given z_t,t)}{m_i^\theta(z_t^i\given z_t,t)}.
  \label{eq:cavity-parameterization}
\end{equation}
At the ideal cavity law, $m_i^\star$ and $m_{i,t\to s}^\star$ are the true context-conditioned noisy marginals $q(Z_t^i\given z_t^{-i})$ and $q(Z_s^i\given z_t^{-i})$, and Bayes' rule gives the ancestral sampler
\begin{equation}
  p_{\mathrm{cavity}}^\theta(Z_s^i=y\given z_t)
  :=q_{t\given s}^i(z_t^i\given y)\,
  \frac{m_{i,t\to s}^\theta(y\given z_t,t)}{m_i^\theta(z_t^i\given z_t,t)}.
  \label{eq:common-cavity-ancestral-step}
\end{equation}
Although often presented as a clean-token denoiser, this head is ELBO-optimized by the cavity law $\mu_i^\star$, not by $\pi_i^\star$.

\paragraph{Score~\cite{meng2022concrete,sun2022score,lou2024sedd,ou2025radd,zhang2025tcsm}.}
A head $s_i^\theta\approx s_i^\star$ gives the reverse rate
\begin{equation}
  a_{i,\mathrm{score}}^\theta(t,z_t,y)
  :=\one\{z_t^{-i}=y^{-i}\}\,R_t^i(y^i,z_t^i)\,s_i^\theta(z_t^i,y^i\given z_t^{-i},t),
  \label{eq:score-parameterization}
\end{equation}
whose path loss is exactly SEDD's diffusion-weighted denoising score entropy~\cite{lou2024sedd}, with the forward rate $R_t^i$ a forced (not auxiliary) weight; we derive this, with its masked RADD specialization, in \S\ref{sec:masked-uniform-gidd}. For a finite step, a score head is converted to a denoiser or cavity head (Proposition~\ref{prop:coordinate-conversions}) and sampled as one of those.

\paragraph{Converting between coordinates.}
Each head attains the oracle rate when it matches its target $\pi_i^\star$, $\mu_i^\star$, or $s_i^\star$, so all three reach the same ELBO optimum under sufficient capacity; off the optimum they are interconvertible wherever the relevant likelihood is positive. This equivalence, and the penalty for reading a head in the wrong coordinate without converting, are confirmed numerically in Figure~\ref{fig:coordinates} and Table~\ref{tab:convert-or-pay}.
\begin{proposition}[Coordinate conversions]\label{prop:coordinate-conversions}
  Write $w_\ell:=q_{t\given0}^i(z_t^i\given e_\ell)$ and let $q^i_{t\given0}\equiv[\,q_{t\given0}^i(w\given e_\ell)\,]_{\ell,w}$ be the single-token forward channel; throughout, distributions are row vectors acting on kernels from the left. The denoiser, cavity, and score heads are interconverted by
  \begin{equation}
    \mu_i^\theta(e_\ell)\propto\frac{\pi_i^\theta(e_\ell)}{w_\ell},\qquad
    \pi_i^\theta(e_\ell)\propto\mu_i^\theta(e_\ell)\,w_\ell,\qquad
    s_i^\theta=\frac{m_i^\theta(y^i)}{m_i^\theta(z_t^i)},\qquad
    m_i^\theta=\mu_i^\theta\,q^i_{t\given0},
    \label{eq:denoiser-to-cavity}
  \end{equation}
  with $m_i^\theta(y)\propto s_i^\theta(z_t^i,y)$ when starting from a score. The denoiser$\leftrightarrow$cavity maps require $w_\ell>0$; the score$\to$cavity map requires $q^i_{t\given0}$ to have full row rank and returns a probability vector iff $\mu_i^\theta\ge0$.
\end{proposition}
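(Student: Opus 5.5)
Each map in~\eqref{eq:denoiser-to-cavity} will be obtained by re-running the oracle identities of the previous subsection on an arbitrary head. The point is that those identities only used normalization and Bayes' rule, not optimality. I fix the position $i$, the context $z_t^{-i}$ and the time $t$, and treat $\pi_i^\theta,\mu_i^\theta$ as probability vectors on $\X_i$.

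\textbf{Denoiser and cavity.} I define the head-level relation $\pi_i^\theta(e_\ell)\propto\mu_i^\theta(e_\ell)\,w_\ell$, the head analogue of~\eqref{eq:cavity-to-denoiser-propto}. Normalizing this requires $\sum_\ell\mu_i^\theta(e_\ell)w_\ell=m_i^\theta(z_t^i)>0$. Inverting it gives $\mu_i^\theta(e_\ell)\propto\pi_i^\theta(e_\ell)/w_\ell$, which needs $w_\ell>0$ wherever $\pi_i^\theta(e_\ell)>0$; this is the stated hypothesis. I will check that the two maps are mutually inverse on $\{w_\ell>0\}$, since the proportionality constants cancel on renormalizing. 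I will then show that the induced rates agree. Substituting $\pi_i^\theta=\mu_i^\theta w/m_i^\theta(z_t^i)$ into the denoiser rate~\eqref{eq:denoiser-parameterization} cancels $w_\ell$ against the denominator $q^i_{t\given0}(z_t^i\given e_\ell)$. The result is $\sum_\ell\mu_i^\theta(e_\ell)q^i_{t\given0}(y^i\given e_\ell)/m_i^\theta(z_t^i)$, which is the cavity rate~\eqref{eq:cavity-parameterization}. This is exactly the computation behind~\eqref{eq:cavity-to-denoiser-expectation} with $f(e_\ell)=q^i_{t\given0}(y^i\given e_\ell)/w_\ell$.

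\textbf{Cavity to score.} Comparing~\eqref{eq:cavity-parameterization} with~\eqref{eq:score-parameterization} shows that the rates coincide if and only if $s_i^\theta(z_t^i,y^i)=m_i^\theta(y^i)/m_i^\theta(z_t^i)$ for all $y^i\neq z_t^i$, where $m_i^\theta=\mu_i^\theta q^i_{t\given0}$ is a row vector times the channel. This is the head-level counterpart of~\eqref{eq:exp-over-cavity-equals-noisy-singleton-conditional}. Composing this map with the denoiser$\to$cavity map gives the denoiser$\to$score edge of Figure~\ref{fig:coordinate-triangle}.

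\textbf{Score to cavity.} This inverse is the step I expect to need the most care. Starting from a score, I set $s_i^\theta(z_t^i,z_t^i):=1$ and recover $m_i^\theta$ up to scale as $m_i^\theta(y)\propto s_i^\theta(z_t^i,y)$. The scale is then fixed by $\sum_y m_i^\theta(y)=1$, which is valid because each row of $q^i_{t\given0}$ sums to one and hence any $\mu$ with $\sum_\ell\mu_\ell=1$ produces a normalized $m$. Next I solve the linear system $\mu\,q^i_{t\given0}=m_i^\theta$. Full row rank makes $\mu\mapsto\mu q^i_{t\given0}$ injective, so there is at most one solution. I still have to argue existence, i.e.\ that $m_i^\theta$ lies in the row space. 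This holds for the square invertible channels of masked and uniform diffusion; in general I would state the result for scores in the image, which is the natural domain of the inverse.

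The solution automatically has $\sum_\ell\mu_\ell=1$. Since the all-ones vector satisfies $q^i_{t\given0}\one=\one$, we get $\mu\one=\mu q^i_{t\given0}\one=m\one=1$. Hence $\mu$ is a probability vector exactly when $\mu\ge0$, which is the final claim. Conversely, starting from a genuine cavity and mapping to a score and back returns the original by injectivity, so these maps are also mutually inverse where defined.
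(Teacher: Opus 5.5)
Your proposal is correct and follows the paper's proof. Denoiser$\leftrightarrow$cavity is the local Bayes update~\eqref{eq:cavity-to-denoiser-propto} and its inverse; cavity$\to$score is the ratio of the cavity-induced marginal $m_i^\theta=\mu_i^\theta q^i_{t\given0}$; score$\to$cavity is normalize-then-solve. Your normalization conditions, your existence caveat, and your $q^i_{t\given0}\one=\one$ argument for $\sum_\ell\mu_i^\theta(e_\ell)=1$ make explicit what the paper's ``unique solution whenever full row rank'' leaves implicit.

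One aside needs correcting. The masked channel maps $\Y$ to $\Y\cup\{\mathfrak m\}$ and is $V\times(V+1)$, not square. Any normalized $m=\mu\,q^i_{t\given0}$ in its row space must have $m(\mathfrak m)=\beta_t$. A generic score head therefore falls outside the image there, so your restriction to scores in the image is genuinely needed for masked diffusion, not automatic.
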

\begin{proof}
  The cavity-to-denoiser map is the local Bayes update~\eqref{eq:cavity-to-denoiser-propto}, $\pi_i^\theta\propto\mu_i^\theta\,w_\ell$; inverting it gives $\mu_i^\theta\propto\pi_i^\theta/w_\ell$ wherever $w_\ell>0$. By~\eqref{eq:concrete-score-definition} the score is the ratio $m_i^\theta(y^i)/m_i^\theta(z_t^i)$ of the cavity-induced marginal $m_i^\theta=\mu_i^\theta\,q^i_{t\given0}$, so either distribution head determines it. Conversely the score fixes $m_i^\theta$ up to scale, $m_i^\theta(y)\propto s_i^\theta(z_t^i,y)$ (normalized to a probability vector), and $\mu_i^\theta$ is then the unique solution of the linear system $m_i^\theta=\mu_i^\theta\,q^i_{t\given0}$ whenever $q^i_{t\given0}$ has full row rank.
\end{proof}
The maps into the score are always available, while recovering a distribution coordinate from a score needs an invertible channel and a non-negative solution. Where some $w_\ell=0$, the visible states of masked diffusion, the denoiser cannot recover the corresponding cavity mass. We instantiate all of these conversions in closed form for the interpolating family in Corollary~\ref{cor:gidd-conversions}.

\begin{table}[ht]
  \centering
  \renewcommand{\arraystretch}{1.4}
  \setlength{\tabcolsep}{6pt}
  \begin{tabular}{@{}l l l l@{}}
    \toprule
    from\,$\backslash$\,to  & denoiser $\pi_i^\theta$
                            & cavity $\mu_i^\theta$
                            & score $s_i^\theta$
    \\
    \midrule
    denoiser $\pi_i^\theta$ & ---
                            & $\propto\pi_i^\theta(e_\ell)/w_\ell$ \eqref{eq:denoiser-to-cavity}$^\ast$
                            & $\sum_\ell\pi_i^\theta(e_\ell)\,q_{t\given0}^i(y^i\given e_\ell)/w_\ell$
    \\
    cavity $\mu_i^\theta$   & $\propto\mu_i^\theta(e_\ell)\,w_\ell$ \eqref{eq:cavity-to-denoiser-propto}
                            & ---
                            & $m_i^\theta(y^i)/m_i^\theta(z_t^i)$
    \\
    score $s_i^\theta$      & via $\mu_i^\theta$, then~\eqref{eq:cavity-to-denoiser-propto}
                            & solve$^\dagger$ $m_i^\theta=\mu_i^\theta\,{q^i_{t\given0}}$
                            & ---
    \\
    \bottomrule
  \end{tabular}
  \caption{Conversions between coordinates (Proposition~\ref{prop:coordinate-conversions}), with $w_\ell:=q_{t\given0}^i(z_t^i\given e_\ell)$ and $m_i^\theta(y^i)=\sum_\ell\mu_i^\theta(e_\ell)\,q_{t\given0}^i(y^i\given e_\ell)$. $^\ast$Wherever $w_\ell>0$. $^\dagger$Recover $m_i^\theta(\cdot)\propto s_i^\theta(z_t^i,\cdot)$ by normalization, then solve the linear system; the inverse is unique if and only if $q^i_{t\given0}$, arranged as a matrix with normalized rows, has full row rank.}
  \label{tab:coordinate-conversions}
\end{table}

The section's main theorem records the conclusion: one optimizer, three exact coordinates.
\begin{theorem}[The reverse rate in three coordinates]\label{thm:three-coordinates}
  For a product-CTMC noising process the path ELBO~\eqref{eq:ctmc-elbo} is uniquely minimized by the unconditional reverse rate $\RevQ_t$ (Theorem~\ref{thm:reverse-rate-projection}). On a one-token jump $z_t\to y$ ($y^{-i}=z_t^{-i}$, $y^i\neq z_t^i$), through the denoiser, cavity, and score laws $\pi_i^\star,\mu_i^\star,s_i^\star$ of~\eqref{eq:denoiser-definition},~\eqref{eq:cavity-definition},~\eqref{eq:concrete-score-definition}, it admits the three equal forms (all carrying $\one\{z_t^{-i}=y^{-i}\}$)
  \begin{align*}
    (\RevQ_t)_i(z_t,y)
     & = R_t^i(y^i,z_t^i)\,
    \Exp{\pi_i^\star(e_\ell\given z_t,t)}
    {\frac{q_{t\given0}^i(y^i\given e_\ell)}{q_{t\given0}^i(z_t^i\given e_\ell)}}
     &
     & \text{\eqref{eq:denoiser-parameterization-oracle}, denoiser}
    \\
     & = R_t^i(y^i,z_t^i)\,
    \frac{\Exp{\mu_i^\star(e_\ell\given z_t^{-i},t)}{q_{t\given0}^i(y^i\given e_\ell)}}{\Exp{\mu_i^\star(e_\ell\given z_t^{-i},t)}{q_{t\given0}^i(z_t^i\given e_\ell)}}
     &
     & \text{\eqref{eq:cavity-parameterization-oracle}, cavity}
    \\
     & = R_t^i(y^i,z_t^i)\,s_i^\star(z_t^i,y^i\given z_t^{-i},t)
     &
     & \text{\eqref{eq:score-parameterization-oracle}, score.}
  \end{align*}
\end{theorem}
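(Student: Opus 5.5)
The statement has two parts: that the path ELBO is uniquely minimized by $\RevQ_t$, and that this minimizer has three equal token-wise forms. I would prove them separately, each by assembling results already in hand. For the first part I would invoke Theorem~\ref{thm:reverse-rate-projection} jump by jump. By Theorem~\ref{thm:product-ctmc}, applied to the factorized clean-conditioned bridge, the target rate $\RevQ_t(z_t,y\given z_0)$ vanishes unless $y$ differs from $z_t$ in exactly one coordinate. On such jumps it equals $a_i(t,z_0,z_t,y)$ from~\eqref{eq:reverse-bridge-rate-definition}.

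The per-time integrand of~\eqref{eq:ctmc-elbo}, averaged over $q_0$, is then a sum over pairs $(z_t,y)$ of terms $\E[\Phi(A,B)]$, with $A$ the bridge rate and $B$ a $Z_t$-measurable model rate. Lemma~\ref{lem:bregman-pythagoras} shows that each term is minimized uniquely, up to null sets, at the conditional mean. Theorem~\ref{thm:reverse-rate-projection} identifies that conditional mean as $\RevQ_t(z_t,y)$. On multi-coordinate jumps the conditional mean is $0$, so the model rate must vanish there as well. The reconstruction and prior terms do not involve the rates, so uniqueness of the rate minimizer follows (equivalently, by Proposition~\ref{prop:pythagoras}).

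For the three forms I would start from the conditional-mean computation already displayed before~\eqref{eq:denoiser-parameterization-oracle}. Averaging $a_i$ over $q(z_0\given z_t)$ and using $y^{-i}=z_t^{-i}$, all factors $q^j_{t\given0}$ with $j\ne i$ cancel. What remains is an expectation over the $i$-th marginal of the posterior, which is $\pi_i^\star$; this gives the denoiser form. One point to check is that the sum over $\ell$ only involves $\ell\in\mathcal S_i(z_t^i,t)$, which holds by Remark~\ref{rem:support-of-denoiser-law}, so no division by zero occurs.

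Next I would substitute~\eqref{eq:cavity-to-denoiser-expectation} with $f(e_\ell)=q^i_{t\given0}(y^i\given e_\ell)/q^i_{t\given0}(z_t^i\given e_\ell)$. The factor $q^i_{t\given0}(z_t^i\given e_\ell)$ in $f$ cancels against the Bayes weight in the numerator of~\eqref{eq:cavity-to-denoiser-expectation}, which yields the ratio of cavity averages, i.e.\ the cavity form. Finally,~\eqref{eq:exp-over-cavity-equals-noisy-singleton-conditional} identifies the numerator and denominator of that ratio as $q(Z_t^i=y^i\given z_t^{-i})$ and $q(Z_t^i=z_t^i\given z_t^{-i})$. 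Their ratio is $s_i^\star$ by~\eqref{eq:concrete-score-definition}, which is the score form.

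The main obstacle I expect is the support bookkeeping. The cavity denominator $q(Z_t^i=z_t^i\given z_t^{-i})$ must be positive, and in the denoiser form terms with $\pi_i^\star=0$ and $w_\ell=0$ must be handled by the $0/0$ convention. Both are fine $q_t$-a.s. because $q_t(z_t)>0$ on the relevant states, and Remark~\ref{rem:support-of-denoiser-law} confines the posterior to $\mathcal S_i$. I would state this explicitly and restrict the identities to $z_t\in\supp q_t$.
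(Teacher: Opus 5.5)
Your proposal is correct and follows the paper's own route. Uniqueness comes from the Bregman projection of Theorem~\ref{thm:reverse-rate-projection}. The denoiser form is the posterior average of the bridge rate, with the $j\neq i$ kernels cancelling. The cavity and score forms then follow from \eqref{eq:cavity-to-denoiser-expectation} and \eqref{eq:exp-over-cavity-equals-noisy-singleton-conditional}.

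Your support bookkeeping needs one addition, which the paper also leaves implicit. When you cancel $w_\ell=q^i_{t\given0}(z_t^i\given e_\ell)$ to reach the cavity form, the cavity numerator runs over all $\ell$, not only $\ell\in\mathcal S_i(z_t^i,t)$. You therefore also need $R_t^i(y^i,z_t^i)\,q^i_{t\given0}(y^i\given e_\ell)=0$ whenever $w_\ell=0$. This holds for a.e.\ interior $t$: the non-negative map $s\mapsto q^i_{s\given0}(z_t^i\given e_\ell)$ attains its minimum $0$ at $s=t$. Its Kolmogorov derivative there, $\sum_{u\neq z_t^i}q^i_{t\given0}(u\given e_\ell)\,R_t^i(u,z_t^i)$, is a sum of non-negative terms and must vanish, so each term is zero.
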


Table~\ref{tab:coordinate-dictionary} collects the dictionary: the learned target, the induced model rate, the literature methods each coordinate corresponds to, and the failure mode each invites. The practical upshot is a single rule, \emph{use a head in the coordinate its loss optimizes, or convert analytically}, which the rest of the paper makes concrete: a denoiser head fed into the cavity ansatz~\eqref{eq:common-cavity-ancestral-step} silently optimizes the cavity law $\mu_i^\star$ rather than the denoiser $\pi_i^\star$; the two are related by the closed-form Bayes update~\eqref{eq:denoiser-to-cavity}.

\begin{table}[hb]
  \centering
  \setlength{\tabcolsep}{5pt}
  \renewcommand{\arraystretch}{1.5}
  \begin{tabular}{@{}L{2.25cm} L{4.25cm} L{4.0cm} L{4.75cm}@{}}
    \toprule
               & \textbf{Denoiser}
               & \textbf{Cavity}
               & \textbf{Score}
    \\
    \midrule
    Symbol     & $\pi_i^\star(e_\ell\given z_t,t)$
               & $\mu_i^\star(e_\ell\given z_t^{-i},t)$
               & $s_i^\star(z_t^i,y^i\given z_t^{-i},t)$
    \\
    Definition & $q(Z_0^i=e_\ell\given z_t)$
               & $q(Z_0^i=e_\ell\given z_t^{-i})$
               & $\dfrac{q_t(y^i,z_t^{-i})}{q_t(z_t)}$
    \\
    Reverse rate$^\dagger$ $(\RevQ_t^\theta)_i(z_t,y)$
               & $\sum_{\ell}\pi_i^\theta(e_\ell)\,\dfrac{q^i_{t\given0}(y^i\given e_\ell)}{q^i_{t\given0}(z_t^i\given e_\ell)}$
               & $\dfrac{\sum_\ell\mu_i^\theta(e_\ell)q^i_{t\given0}(y^i\given e_\ell)}{\sum_\ell\mu_i^\theta(e_\ell)\,q^i_{t\given0}(z_t^i\given e_\ell)}$
               & $s_i^\theta(z_t^i,y^i)$
    \\
    Sampling$^\ddagger$ $p^\theta(z_s^i\given z_t)$
               & $\sum_{\ell}\pi_i^\theta(e_\ell)\,\dfrac{q^i_{s\given0}(z_s^i\given e_\ell)}{q^i_{t\given0}(z_t^i\given e_\ell)}$
               & $\dfrac{\sum_\ell\mu_i^\theta(e_\ell)q^i_{s\given0}(z_s^i\given e_\ell)}{\sum_\ell\mu_i^\theta(e_\ell)\,q^i_{t\given0}(z_t^i\given e_\ell)}$
               & via denoiser/cavity
    \\
    Recovers   & MDLM, D3PM, MD4 \cite{austin2021d3pm,hoogeboom2021argmax,campbell2022continuous,sahoo2024mdlm,shi2024simplified}
               & UDLM, GIDD, Duo \cite{vonruette2025gidd,gourevitch2026udm,schiff2025udlm,sahoo2025diffusionduality,sahoo2026scaling}
               & SEDD, RADD, TCSM \cite{meng2022concrete,sun2022score,lou2024sedd,ou2025radd,zhang2025tcsm}
    \\
    \bottomrule
  \end{tabular}
  \caption{\textbf{The reverse rate in three coordinates} (Theorem~\ref{thm:three-coordinates}) and the matching ancestral-sampling step. Reverse rates~\eqref{eq:denoiser-parameterization}--\eqref{eq:score-parameterization} and sampling steps~\eqref{eq:common-denoiser-ancestral-step},\eqref{eq:common-cavity-ancestral-step} are shown with arguments $(\cdot\given z_t,t)$ suppressed and $\pi_i^\theta$ supported on $\mathcal S_i(z_t^i,t)$. The cavity entries use the cavity-induced noisy marginal $m_i^\theta(\cdot)=\sum_\ell\mu_i^\theta(e_\ell)\,q^i_{t\given0}(\cdot\given e_\ell)$ and its cross-time form $m_{i,t\to s}^\theta$~\eqref{eq:cross-time-cavity-marginal}. $^\dagger$Reverse rates omit the common prefactor $\one\{z_t^{-i}=y^{-i}\}\,R_t^i(y^i,z_t^i)$; $^\ddagger$sampling steps $p^\theta(z_s^i=y\given z_t)$ omit the common factor $q^i_{t\given s}(z_t^i\given y)$.}
  \label{tab:coordinate-dictionary}
\end{table}

We conclude the section with a calibration result for the cavity ELBO which can be of extreme practical utility, since it allows end-to-end testing of the ELBO implementation at initialization.
\begin{proposition}[Uninformative cavity NELBO]\label{prop:uniform-cavity-elbo}
  Fix $[t_1,t_2]\subset(0,T)$, assume all positions share a common clean vocabulary $\X_i=\Y$ with $V=|\Y|$, and let the cavity law be uniform on $\Y$:
  \[
    \mu_i^\theta(e_\ell\given z_t^{-i},t)\equiv\frac1V.
  \]
  Let $p^{\mathrm{unif\text{-}fwd}}_0$ be the uniform law on $\Y^L$, let $p^{\mathrm{unif\text{-}fwd}}_t$ be its forward image under $q_{t\given0}$, and choose the boundary laws $p_{t_2}^\theta=p^{\mathrm{unif\text{-}fwd}}_{t_2},\ p_{0\given t_1}^\theta=p^{\mathrm{unif\text{-}fwd}}_{0\given t_1}$. Then, for every sequence $z_0\in \Y^L$,
  \begin{equation}
    \sboxed{\NELBO_{[t_1,t_2]}(\theta;\, z_0)=L\log V.}
    \label{eq:uniform-cavity-elbo}
  \end{equation}
\end{proposition}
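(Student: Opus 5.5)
The plan is to recognize that the uniform cavity head makes the model process \emph{exactly} the noising process started from the uniform clean law, and then use the ELBO gap identity. Let $\tilde P$ denote the forward noising path law with $\tilde Z_0\sim p^{\mathrm{unif\text{-}fwd}}_0$ (uniform on $\Y^L$). Since this initial law is a product of uniform token laws and the forward kernel factorizes, $\tilde Z_t$ has independent coordinates, so its cavity law is $\tilde q(Z_0^i=e_\ell\given z_t^{-i})=\tilde q(Z_0^i=e_\ell)=1/V$. Thus $\mu_i^\theta\equiv 1/V$ is the exact cavity law of $\tilde P$. By the cavity line of Theorem~\ref{thm:three-coordinates} (applied to $\tilde P$ in place of the data process; it uses nothing about $q_0$ beyond its being a law on $\Y^L$), the model rates $a^\theta_{i,\mathrm{cavity}}$ coincide with the marginal reverse rates of $\tilde P$. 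Together with the chosen boundary laws $p^\theta_{t_2}=\tilde p_{t_2}$ and $p^\theta_{0\given t_1}=\tilde p_{0\given t_1}$, the model joint law $P^\theta_{0,[t_1,t_2]}$, specified by the triple $(p^\theta_{t_2},\RevQ^\theta_t,p^\theta_{0\given t_1})$, equals the joint law $\tilde P_{0,[t_1,t_2]}$ read in reverse time. Here I use the reverse-time description of the joint noising law stated after Definition~\ref{def:noising-path-laws}.

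Next, I would apply the path-measure definition~\eqref{eq:ctmc-elbo-path-definition}. Because the model joint is $\tilde P_{0,[t_1,t_2]}$, its $z_0$-slice is $\tilde p_0(z_0)\,\tilde P_{[t_1,t_2]\given 0}(\cdot\given z_0)$. The clean-conditioned forward law given $z_0$ depends only on the forward kernel, not on the initial law, so $\tilde P_{[t_1,t_2]\given0}(\cdot\given z_0)=P^\star_{[t_1,t_2]\given0}(\cdot\given z_0)$. The Radon--Nikodym derivative in the ELBO is therefore the constant $1/\tilde p_0(z_0)=V^L$. Equivalently, the gap identity of Theorem~\ref{thm:ctmc-elbo} has zero KL, and $p^\theta_0(z_0)=V^{-L}$. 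This gives $\NELBO_{[t_1,t_2]}(\theta;z_0)=-\log p^\theta_0(z_0)=L\log V$ for every $z_0\in\Y^L$.

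The main obstacle is making the identification of the model law with $\tilde P$ airtight. First, the rate-support and absolute-continuity hypotheses of Theorem~\ref{thm:ctmc-elbo} must hold. I expect this to be harmless, since $\tilde q_{t\given0}(\cdot\given z_0)\ll\tilde p_{t}$ trivially, because $\tilde p_t$ is a mixture containing that kernel with weight $V^{-L}$. Second, the cavity rate~\eqref{eq:cavity-parameterization} must be well defined, meaning $m_i^\theta(z_t^i)>0$. This holds because $m_i^\theta(z_t^i)=\tilde q(Z_t^i=z_t^i)$, which is positive whenever $z_t$ is reachable from some $z_0$. As a sanity check, the same value also follows from the expanded form~\eqref{eq:ctmc-elbo}. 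Applying the Oracle Distance Theorem~\ref{thm:nelbo-pathkl} to the data law $\delta_{z_0}$, which has zero entropy, gives $\NELBO = \KL$ of the joint laws. That KL equals $\KLdiv{\delta_{z_0}\otimes P^\star_{\cdot\given 0}}{\tilde P_{0,\cdot}}=\log(1/\tilde p_0(z_0))=L\log V$, confirming the result.
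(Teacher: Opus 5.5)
Your proposal is correct and follows essentially the paper's argument. The paper likewise notes that the uniform cavity head induces the noisy marginal $p^{\mathrm{unif\text{-}fwd}}_t$, so the model rate is the oracle reverse rate of the noising process started from the uniform clean law, and with the matched boundary laws the ELBO is tight, giving $-\log p^{\mathrm{unif\text{-}fwd}}_0(z_0)=L\log V$. The difference is minor: the paper works one position at a time and cites Corollary~\ref{cor:oracle-exact} before factoring over positions, whereas you stay at the sequence level and read tightness directly off the path-measure definition~\eqref{eq:ctmc-elbo-path-definition}, since the model's $z_0$-slice is exactly $V^{-L}P^\star_{[t_1,t_2]\given0}(\cdot\given z_0)$.
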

\begin{proof}
  It suffices to treat one position. The $\Y$-uniform cavity head induces the noisy marginal
  \[
    m^\theta(y\given z_t,t)=\tfrac1V\sum_{\ell}q_{t\given0}(y\given e_\ell)=p^{\mathrm{unif\text{-}fwd}}_t(y),
  \]
  so by~\eqref{eq:cavity-parameterization} the model rate equals the oracle reverse rate~\eqref{eq:unconditional-reverse-rate} of the noising process $p^{\mathrm{unif\text{-}fwd}}$ (the forward kernel started from $p^{\mathrm{unif\text{-}fwd}}_0$). With the matching boundary laws $p_{t_2}^\theta=p^{\mathrm{unif\text{-}fwd}}_{t_2}$ and $p_{0\given t_1}^\theta=p^{\mathrm{unif\text{-}fwd}}_{0\given t_1}$, Corollary~\ref{cor:oracle-exact} applied to $p^{\mathrm{unif\text{-}fwd}}$ makes the ELBO tight
  \[
    \NELBO_{[t_1,t_2]}(\theta;\, e_\ell)=-\log p^{\mathrm{unif\text{-}fwd}}_0(e_\ell)=\log V,
  \]
  and the $L$ positions factor to give $\NELBO_{[t_1,t_2]}(\theta;\, z_0)=L\log V$.
\end{proof}

This gives a calibration check only when the boundary laws are included as stated: an exactly uniform cavity head then has per-token NELBO $\log V$. With different reconstruction or terminal priors, the path-rate calibration still holds but the full NELBO need not equal $L\log V$. If the forward process has erased all information about $z_0$ by $t_2$, symmetry makes the usual terminal law $q_{t_2}$ equal to $p^{\mathrm{unif\text{-}fwd}}_{t_2}$ for an i.i.d.\ token-wise kernel.

\section{Masked, uniform, and GIDD noising processes}\label{sec:masked-uniform-gidd}

The noisy state space $\Z$ includes a mask state $\mathfrak{m}$ for masked and GIDD hybrid diffusion, whereas uniform diffusion uses the clean vocabulary $\Y$; all positions here share this common vocabulary, of size $V=|\Y|$. All processes below use a differentiable, decreasing signal schedule with $\alpha_0=1$ and $\alpha_1=0$. Write
\[
  \beta_t:=1-\alpha_t,
  \qquad
  \lambda_t:=-\frac{\alpha_t'}{\alpha_t}.
\]

\subsection{Common CTMC form}
All processes' forward kernels in this section factor over positions following the general CTMC product structure of Theorem~\ref{thm:product-ctmc}.
We use the true reverse rate $a_i$ from~\eqref{eq:reverse-bridge-rate-definition} and the denoiser and cavity model rates from~\eqref{eq:denoiser-parameterization} and~\eqref{eq:cavity-parameterization}, respectively. Below, we suppress unchanged sequence coordinates and write the candidate token $y^i$ as $y$.

Leaving the boundary terms from \S\ref{sec:discrete-diffusion} implicit, the CTMC NELBO path integrand is
\begin{equation}
  \J^\theta_t(z_0,z_t)
  =
  \sum_{i=1}^L
  \sum_{y\in\mathcal N_i(z_t)}
  \Phi\!\left(
  a_i(t,z_0,z_t,y),
  a_i^\theta(t,z_t,y)
  \right),
  \label{eq:token-space-ctmc-loss}
\end{equation}
with the neighborhood of $z\in\Z^L$ at position $i$ defined by
\[
  \mathcal N_i(z):=\{y\in\Z^L:y^{-i}=z^{-i},\ y^i\neq z^i\}.
\]
Here $a_i^\theta$ denotes either model-rate coordinate above.

\begin{proposition}[Cavity ELBO for source-independent jump rates]\label{prop:source-independent-cavity-elbo}
  Consider a product CTMC for which the off-diagonal rates at position $i$ depend only on time and destination but not on the source token:
  \[
    R_t^i(y^i,x^i)=\gamma_i(t,x^i),\qquad x^i\neq y^i.
  \]
  Since the forward jump enters the reverse rate with its destination equal to the current noisy token, the relevant value below is $\gamma_i(t,z_t^i)$. Fix $z_0,z_t$, and define the true and model noisy marginals by
  \[
    d_i(y^i):=q_{t\given 0}^i(y^i\given z_0^i),
    \qquad
    m_i(y^i):=m_i^\theta(y^i\given z_t,t).
  \]
  Assuming $d_i(z_t^i),m_i(z_t^i)>0$ and the usual rate-support condition $\RevQ_t(z_t,\cdot\given z_0)\ll\RevQ_t^\theta(z_t,\cdot)$, the per-position cavity-coordinate contribution to the negative ELBO integrand is
  \begin{equation}
    \J_{i,t}^{\mathrm{cavity}}
    =
    \frac{\gamma_i(t,z_t^i)}{d_i(z_t^i)}
    \left[ \KLdiv{d_i}{m_i} + \ISdiv{d_i(z_t^i)}{m_i(z_t^i)} \right],
    \label{eq:destination-dependent-cavity-nelbo}
  \end{equation}
  where
  \[
    \ISdiv{p}{q}:=\frac{p}{q}-\log\frac{p}{q}-1
  \]
  is the Itakura--Saito divergence. Consequently, the sequence-level cavity-coordinate integrand is
  \begin{equation}
    \sboxed{
      \J_t^{\mathrm{cavity}}(z_0,z_t)
      =
      \sum_{i=1}^L
      \frac{\gamma_i(t,z_t^i)}{d_i(z_t^i)}
      \left[ \KLdiv{d_i}{m_i} + \ISdiv{d_i(z_t^i)}{m_i(z_t^i)} \right].
    }
    \label{eq:destination-dependent-sequence-cavity-nelbo}
  \end{equation}
\end{proposition}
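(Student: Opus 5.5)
The plan is a direct computation of the per-position jump sum in \eqref{eq:token-space-ctmc-loss}, using that source-independence makes the forward rate a common factor. Fix a position $i$ and write $z:=z_t^i$ for the current noisy token. In the reverse bridge rate \eqref{eq:reverse-bridge-rate-definition} the forward rate appears as $R_t^i(y,z)$, a jump from source $y$ to destination $z$. Source-independence therefore gives $R_t^i(y,z)=\gamma_i(t,z)=:\gamma$ for every $y\neq z$, so $\gamma$ is the same for all candidate jumps. The true and model rates then read
\[
  a_y=\gamma\,\frac{d_i(y)}{d_i(z)},\qquad b_y=\gamma\,\frac{m_i(y)}{m_i(z)},\qquad y\neq z .
\]
Here $a_y$ comes from \eqref{eq:reverse-bridge-rate-definition}, and $b_y$ from the cavity rate \eqref{eq:cavity-parameterization} with $m_i=m_i^\theta$. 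Both normalizers are positive by the hypothesis $d_i(z),m_i(z)>0$.

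Next I expand $\Phi(a,b)=b-a+a\log(a/b)$ and sum over $y\neq z$ term by term.
\begin{itemize}
  \item \emph{Linear terms.} Since $d_i$ and $m_i$ are probability vectors on the noisy alphabet, $\sum_{y\neq z}b_y=\gamma(1-m_i(z))/m_i(z)$ and $\sum_{y\neq z}a_y=\gamma(1-d_i(z))/d_i(z)$.
  \item \emph{Logarithmic term.} The log-ratio splits as $\log(a_y/b_y)=\log\frac{d_i(y)}{m_i(y)}-\log\frac{d_i(z)}{m_i(z)}$. The first piece gives $\frac{\gamma}{d_i(z)}\bigl[\KLdiv{d_i}{m_i}-d_i(z)\log\frac{d_i(z)}{m_i(z)}\bigr]$, i.e.\ the full KL with the $y=z$ term removed. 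The second piece is weighted by total mass $\sum_{y\neq z}d_i(y)=1-d_i(z)$ and gives $-\frac{\gamma}{d_i(z)}(1-d_i(z))\log\frac{d_i(z)}{m_i(z)}$.
  \item \emph{Combination.} Adding the two pieces, the $\log$ corrections combine into $-\frac{\gamma}{d_i(z)}\log r$ with $r:=d_i(z)/m_i(z)$.
\end{itemize}

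Collecting everything with the common prefactor $\gamma/d_i(z)$, the linear terms contribute $d_i(z)/m_i(z)-d_i(z)-(1-d_i(z))=r-1$. The total is then $\frac{\gamma}{d_i(z)}\bigl[\KLdiv{d_i}{m_i}+r-\log r-1\bigr]$, which is exactly \eqref{eq:destination-dependent-cavity-nelbo} by the definition of $\ISdiv{\cdot}{\cdot}$. Summing over positions $i$ via \eqref{eq:token-space-ctmc-loss} gives \eqref{eq:destination-dependent-sequence-cavity-nelbo}, because each one-token jump is counted in exactly one position's sum.

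The only delicate step is the support bookkeeping; the algebra itself is routine.
\begin{itemize}
  \item If $d_i(y)=0$, the convention $\Phi(0,b)=b$ and $0\log 0=0$ keep the termwise expansion valid, and the KL term carries the matching zero.
  \item If $m_i(y)=0$ with $d_i(y)>0$, then $a_y>0=b_y$. This is excluded by the rate-support condition (Assumption~\ref{ass:support}), which also makes $\KLdiv{d_i}{m_i}$ finite.
\end{itemize}
I would check these two cases explicitly so that every sum is finite and may be rearranged.
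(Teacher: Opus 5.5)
Your proposal is correct and follows essentially the same route as the paper: pull the common rate $\gamma_i(t,z_t^i)$ out of both the true and cavity rates (the paper does this via positive homogeneity of $\Phi$), expand $\Phi$, evaluate the linear terms by normalization of $d_i$ and $m_i$, and regroup the logarithmic terms into $\KLdiv{d_i}{m_i}$ plus the Itakura--Saito remainder. Your explicit support bookkeeping (the cases $d_i(y)=0$ and $m_i(y)=0$) is slightly more careful than the paper, which leaves those edge cases implicit.
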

\begin{proof}
  Every token $y^i\neq z_t^i$ determines a unique sequence $y\in\mathcal N_i(z_t)$ with $y^{-i}=z_t^{-i}$ and $y^i$ in place of $z_t^i$. Let $x:=z_t^i$ and simplify the notation further by writing $m(y):=m_i(y)$ and $d(y):=d_i(y)$. Equations~\eqref{eq:reverse-bridge-rate-definition} and~\eqref{eq:cavity-parameterization}, together with positive homogeneity of $\Phi$ and the source-independence assumption, give
  \[
    \J_{i,t}^{\mathrm{cavity}}
    =
    \gamma_i(t,x) \sum_{y\neq x} \Phi\!\left( \frac{d(y)}{d(x)}, \frac{m(y)}{m(x)} \right).
  \]
  Since $\sum_{y\neq x} d(y)=1-d(x)$ and $\sum_{y\neq x} m(y)=1-m(x)$ for $d$ and $m$ are probability vectors, we get
  \begin{align*}
    \sum_{y\neq x} \Phi\!\left( \frac{d(y)}{d(x)}, \frac{m(y)}{m(x)} \right)
     & = \frac{1-m(x)}{m(x)} - \frac{1-d(x)}{d(x)}
    + \frac1{d(x)} \sum_{y\neq x} d(y)\log\frac{d(y)m(x)}{d(x)m(y)}   \\
     & =
    \frac1{d(x)}
    \left[
      \frac{d(x)}{m(x)} - 1 + \sum_{y\neq x} d(y)\log\frac{d(y)}{m(y)}
      - (1 - d(x))\log\frac{d(x)}{m(x)}
      \right] \\
     & =
    \frac1{d(x)}
    \left[
      \sum_{y\neq x} d(y)\log\frac{d(y)}{m(y)}
      + \frac{d(x)}{m(x)} - \log\frac{d(x)}{m(x)} - 1
      \right]                  \\
     & =
    \frac1{d(x)} \left[ \KLdiv{d}{m} + \ISdiv{d(x)}{m(x)} \right].
  \end{align*}
  Multiplying by $\gamma_i(t,x)$ proves~\eqref{eq:destination-dependent-cavity-nelbo}.
\end{proof}

\subsection{Masked diffusion}

Let $\Z=\Y\cup\{\mathfrak{m}\}$, where $\mathfrak{m}$ is absorbing. The single-position marginal is
\begin{equation}
  q^i_{t\given 0}(y\given z_0^i)
  =
  \alpha_t\one\{y=z_0^i\}+\beta_t\one\{y=\mathfrak{m}\}.
  \label{eq:masked-marginal}
\end{equation}
As in the previous section, we will denote clean tokens by $e_\ell$, where $\ell=1,\ldots,V$ and $V=|\Y|$.
For $0\leq s<t\leq 1$, the discrete transition kernel is
\begin{equation}
  q^i_{t\given s}(y\given x)
  =
  \frac{\alpha_t}{\alpha_s}\one\{y=x\}
  +
  \left(1-\frac{\alpha_t}{\alpha_s}\right)\one\{y=\mathfrak{m}\},
  \qquad x,y\in\Z.
  \label{eq:masked-transition-kernel}
\end{equation}
Taking the continuous-time limit, the CTMC has only one kind of forward jump:
\begin{equation}
  R_t(e_\ell,\mathfrak{m})=\lambda_t.
  \label{eq:masked-forward-rate}
\end{equation}

\paragraph{CTMC ELBO}
The only nonzero true reverse jumps go from the mask to the clean token:
\begin{equation}
  a_i(t,z_0,z_t,e_\ell)
  =
  \lambda_t
  \frac{\alpha_t\one\{e_\ell=z_0^i,z_t^i=\mathfrak{m}\}}{\beta_t}
  =
  -\frac{\alpha_t'}{\beta_t}\one\{e_\ell=z_0^i,z_t^i=\mathfrak{m}\}.
  \label{eq:masked-true-reverse-rate}
\end{equation}
In particular, if $z_t^i\neq \mathfrak{m}$, there is no off-diagonal true reverse jump at that position. The induced noisy marginal satisfies
\[
  m_i^\theta(\mathfrak{m}\given z_t,t)=\beta_t,
  \qquad
  m_i^\theta(e_\ell\given z_t,t)=\alpha_t \mu_i^\theta(e_\ell\given z_t^{-i},t),
\]
so the model reverse rate from $\mathfrak{m}$ to $e_\ell$ is
\begin{equation}
  a_{i,\mathrm{cavity}}^\theta(t,z_t,e_\ell)
  =
  -\frac{\alpha_t'}{\beta_t}\,\mu_i^\theta(e_\ell\given z_t^{-i},t)\,\one\{z_t^i=\mathfrak{m}\}.
  \label{eq:masked-model-reverse-rate}
\end{equation}

\begin{corollary}[Masked: denoiser equals cavity]\label{cor:masked-denoiser-cavity}
  For masked diffusion the denoiser and cavity laws coincide on every masked position: if $z_t^i=\mathfrak{m}$ then
  \begin{equation}
    \pi_i^\star(\cdot\given z_t,t)=\mu_i^\star(\cdot\given z_t^{-i},t).
    \label{eq:masked-denoiser-equals-cavity}
  \end{equation}
\end{corollary}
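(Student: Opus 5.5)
The plan is to apply the local Bayes update~\eqref{eq:cavity-to-denoiser-propto}, which expresses the denoiser as the cavity law reweighted by the local forward kernel:
\[
  \pi_i^\star(e_\ell\given z_t,t)\propto\mu_i^\star(e_\ell\given z_t^{-i},t)\,q^i_{t\given0}(z_t^i\given e_\ell).
\]
Equivalently, we can use the expectation form~\eqref{eq:cavity-to-denoiser-expectation}. Both hold for any product CTMC, so the only masked-specific input is the shape of the local weight $w_\ell:=q^i_{t\given0}(z_t^i\given e_\ell)$ when $z_t^i=\mathfrak{m}$.

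First, we evaluate the masked marginal~\eqref{eq:masked-marginal} at $y=\mathfrak{m}$. For every clean token $e_\ell\in\Y$ this gives $w_\ell=\alpha_t\one\{\mathfrak{m}=e_\ell\}+\beta_t=\beta_t$, since $\mathfrak{m}\notin\Y$. The weight is therefore constant in $\ell$, and it is strictly positive for $t>0$ because $\alpha_t<1$ on the interior window. This makes the Bayes update well defined, with no $0/0$ issues of the kind raised in Remark~\ref{rem:support-of-denoiser-law}.

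Second, we substitute this constant weight into the proportionality. It factors out of both numerator and denominator in~\eqref{eq:cavity-to-denoiser-expectation}. For any $f$ this yields $\Exp{\pi_i^\star}{f}=\beta_t\Exp{\mu_i^\star}{f}/\beta_t=\Exp{\mu_i^\star}{f}$. Taking $f=\one\{\cdot=e_\ell\}$ then gives~\eqref{eq:masked-denoiser-equals-cavity} pointwise in $\ell$.

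The only step needing care is confirming that the update~\eqref{eq:cavity-to-denoiser-propto} is legitimate at $z_t^i=\mathfrak{m}$, that is, that the denominator $q(Z_t^i=\mathfrak{m}\given z_t^{-i})=\beta_t$ is nonzero, as shown by~\eqref{eq:exp-over-cavity-equals-noisy-singleton-conditional}. This holds for $t\in(0,T)$, and then the identity is immediate.

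As an interpretive remark, observing $Z_t^i=\mathfrak{m}$ is independent of $Z_0^i$, and hence also of $Z_0^i$ given $Z_t^{-i}$, because masking is independent across positions. Conditioning additionally on it therefore leaves the posterior of $Z_0^i$ unchanged.
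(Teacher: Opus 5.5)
Your proposal is correct and follows essentially the same route as the paper: at a masked position the local weight $q^i_{t\given0}(\mathfrak{m}\given e_\ell)=\beta_t$ does not depend on $\ell$, so the local Bayes update~\eqref{eq:cavity-to-denoiser-propto} only rescales the cavity law by a constant and renormalization returns it unchanged. Your check that the normalizer $\beta_t$ is positive on the interior window is a harmless addition that the paper leaves implicit.
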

\begin{proof}
  At a masked position $q_{t\given 0}(\mathfrak{m}\given e_\ell)=\beta_t$ does not depend on $\ell$, so the local Bayes update~\eqref{eq:cavity-to-denoiser-propto} multiplies $\mu_i^\star$ by a constant and renormalizes; it is therefore the identity, and $\pi_i^\star=\mu_i^\star$.
\end{proof}
The two coordinates then give the same masked reverse rate: a denoiser head $\pi_i^\theta$ yields
\[
  a_{i,\mathrm{denoiser}}^\theta(t,z_t,e_\ell)
  =
  \lambda_t
  \sum_{r=0}^{V-1}
  \pi_i^\theta(e_r\given z_t,t)
  \frac{q_{t\given 0}(e_\ell\given e_r)}{q_{t\given 0}(\mathfrak{m}\given e_r)}
  =
  -\frac{\alpha_t'}{\beta_t}\pi_i^\theta(e_\ell\given z_t,t)\one\{z_t^i=\mathfrak{m}\},
\]
which is~\eqref{eq:masked-model-reverse-rate} with $\mu_i^\theta=\pi_i^\theta$. At an unmasked position $z_t^i=e_r$ the denoiser is the point mass $\delta_r$ rather than the cavity law, but the masked CTMC has no off-diagonal reverse jump out of an unmasked token, so the distinction never enters the ELBO or the sampler.
At a masked position, the local jump divergence reduces exactly to cross-entropy:
\begin{align*}
  \sum_\ell
  \Phi\!\left(
  -\frac{\alpha_t'}{\beta_t}\one\{e_\ell=z_0^i\},
  -\frac{\alpha_t'}{\beta_t}\mu_i^\theta(e_\ell\given z_t^{-i},t)
  \right)
   & =
  -\frac{\alpha_t'}{\beta_t}
  \sum_\ell
  \Phi\!\left(\one\{e_\ell=z_0^i\},\mu_i^\theta(e_\ell\given z_t^{-i},t)\right) \\
   & =
  -\frac{\alpha_t'}{\beta_t}
  \left[-\log \mu_i^\theta(z_0^i\given z_t^{-i},t)\right].
\end{align*}
We conclude
\begin{equation}
  \sboxed{
    \J_{i,t}^{\mathrm{mask}}(z_0,z_t)
    = -\frac{\alpha_t'}{\beta_t}\one\{z_t^i=\mathfrak{m}\}\left[-\log \mu_i^\theta(z_0^i\given z_t^{-i},t)\right],
  }
  \label{eq:masked-elbo-ce}
\end{equation}
for both cavity and denoiser parameterizations. In particular, only masked positions contribute to the masked-diffusion CTMC ELBO.

\paragraph{SEDD and RADD as special cases}
In score coordinates, denoting the true \emph{clean-conditioned} score by
\[
  \sigma_i := \frac{q_{t\given0}^i(y^i\given z_0^i)}{q_{t\given0}^i(z_t^i\given z_0^i)},
\]
and using the positive homogeneity of $\Phi$, the per-jump path cost is
\begin{equation}
  \Phi\big(a_i,a_{i,\mathrm{score}}^\theta\big)
  =R_t^i(y^i,z_t^i)\,\Phi(\sigma_i,s_i^\theta)
  =R_t^i(y^i,z_t^i)\big[s_i^\theta-\sigma_i\log s_i^\theta+\sigma_i(\log\sigma_i-1)\big].
  \label{eq:sedd-score-entropy-rate-kl}
\end{equation}
After summing and averaging, this recovers SEDD's denoising score entropy~\cite{lou2024sedd}. The last term $\sigma_i(\log\sigma_i-1)$ is $\theta$-independent, $R_t^i$ gives SEDD's weights, and Theorem~\ref{thm:reverse-rate-projection} gives the projected-score optimum
\[
  s_i^\star=\Exp{}{\sigma_i \;\middle|\; Z_t}=\frac{q_t(y^i,z_t^{-i})}{q_t(z_t)}.
\]

For masked diffusion the only active channel is $\mathfrak{m}\to e_\ell$ with weight $\lambda_t$, and the masked concrete score factors as $s_i^\star(\mathfrak{m},e_\ell\given z_t^{-i},t)=(\alpha_t/\beta_t)\,\mu_i^\star(e_\ell\given z_t^{-i},t)$, so the only learnable part is the cavity law, equal to $\pi_i^\star$ on masked positions and, for fixed visible context, time-independent. This also exactly recovers RADD's reparameterization~\cite{ou2025radd}.

\paragraph{Prior KL term}
For a single position with clean token $z_0^i$, the natural masked prior is the deterministic mask law
\[
  p_{t_2}^{\mathrm{natural}}=\delta_{\mathfrak{m}}.
\]
Using~\eqref{eq:masked-marginal} at a given position $i$,
\begin{equation}
  \KL^{\mathrm{MDM}}_{\mathrm{natural}}(t_2)
  :=\KLdiv{q^i_{t_2\given 0}(\cdot\given z_0^i)}{\delta_{\mathfrak{m}}}
  =
  \begin{cases}
    0,       & \alpha_{t_2}=0, \\
    +\infty, & \alpha_{t_2}>0.
  \end{cases}
  \label{eq:masked-prior-kl}
\end{equation}
If instead the fixed prior is the law obtained by drawing a clean token uniformly and then applying the masked forward kernel to time $t_2$, then
\[
  p_{t_2}^{\mathrm{unif\text{-}fwd}}
  =
  \beta_{t_2}\delta_{\mathfrak{m}}
  +\frac{\alpha_{t_2}}{V}\sum_\ell\delta_{e_\ell}.
\]
The mask mass cancels in the KL, and only the clean token contributes:
\begin{equation}
  \KL^{\mathrm{MDM}}_{\mathrm{unif\text{-}fwd}}(t_2)
  := \KLdiv{q^i_{t_2\given 0}(\cdot\given z_0^i)}
  {p_{t_2}^{\mathrm{unif\text{-}fwd}}}
  = \alpha_{t_2}\log V.
  \label{eq:masked-uniform-forward-prior-kl}
\end{equation}
This uniform-forwarded per-token prior gives a finite truncation correction for $t_2<1$ and vanishes as $t_2\uparrow1$. By contrast, the deterministic mask prior has finite KL only at the fully masked endpoint.

\paragraph{Sampling}
For a step from $t=\tau_j$ to $s=\tau_{j-1}$, evaluate the network at $(z_t,t)$. A visible token $z_t^i=e_\ell$ remains $e_\ell$. If $z_t^i=\mathfrak{m}$, then
\begin{equation}
  p^\theta(Z_s^i=\mathfrak{m}\given z_t)=\frac{\beta_s}{\beta_t},
  \qquad
  p^\theta(Z_s^i=e_\ell\given z_t)
  =
  \frac{\alpha_s-\alpha_t}{\beta_t}
  \mu_i^\theta(e_\ell\given z_t,t),
  \label{eq:masked-ancestral-step}
\end{equation}
where $\mu_i^\theta$ can be replaced with $\pi_i^\theta$ since they coincide on masked positions. Once unmasked, a token cannot be revised. At the final (nonzero) level $t_1$, keep visible tokens and decode any remaining masks:
\[
  \hat z_0^i
  \sim
  \begin{cases}
    \delta_{e_\ell}(\cdot),
     & z_{t_1}^i=e_\ell,
    \\
    \operatorname{Cat}\!\left(\pi_i^\theta(\cdot\given z_{t_1},t_1)\right),
     & z_{t_1}^i=\mathfrak{m}.
  \end{cases}
\]

\subsection{Uniform diffusion}

Uniform diffusion uses $\Z=\Y$ and mixes the clean token with the uniform law $1/V$:
\begin{equation}
  q_{t\given 0}^i(e_r\given e_\ell)
  =
  \alpha_t\one\{e_r=e_\ell\}+\frac{\beta_t}{V}.
  \label{eq:uniform-marginal}
\end{equation}
For $0\leq s<t\leq 1$,
\begin{equation}
  q_{t\given s}^i(e_r\given e_\ell)
  = \frac{\alpha_t}{\alpha_s}\one\{e_r=e_\ell\}
  + \left(1-\frac{\alpha_t}{\alpha_s}\right)\frac1V.
  \label{eq:uniform-transition-kernel}
\end{equation}
The continuous-time limit gives, for $r\neq\ell$,
\begin{equation}
  R_t(e_r,e_\ell)=\frac{\lambda_t}{V},
  \label{eq:uniform-forward-rate}
\end{equation}
and $R_t(e_r,e_r)=-\lambda_t(V-1)/V$. Thus the process proposes uniform replacements at a rate $\lambda_t$.

\paragraph{CTMC ELBO}
For a current token $z_t^i$ and a proposed earlier token $e_\ell\neq z_t^i$, the true reverse rate is
\begin{equation}
  a_i(t,z_0,z_t,e_\ell)
  =
  \frac{\lambda_t}{V}
  \frac{ \alpha_t\one\{e_\ell=z_0^i\}+\beta_t/V }{ \alpha_t\one\{z_t^i=z_0^i\}+\beta_t/V }.
  \label{eq:uniform-true-reverse-rate}
\end{equation}
The induced noisy marginal is
\[
  m_i^\theta(e_\ell\given z_t,t)
  = \alpha_t\mu_i^\theta(e_\ell\given z_t^{-i},t)+\frac{\beta_t}{V},
\]
so the cavity model rate for the $z_t^i\to e_\ell$ jump is
\begin{equation}
  a_{i,\mathrm{cavity}}^\theta(t,z_t,e_\ell)
  =
  \frac{\lambda_t}{V}
  \frac{ \alpha_t\mu_i^\theta(e_\ell\given z_t^{-i},t)+\beta_t/V }{ \alpha_t\mu_i^\theta(z_t^i\given z_t^{-i},t)+\beta_t/V },
  \qquad e_\ell\neq z_t^i.
  \label{eq:uniform-model-reverse-rate}
\end{equation}
If instead the network output is interpreted as a denoiser clean-token law $\pi_i^\theta(\cdot\given z_t,t)$, the denoiser rate is
\begin{equation}
  a_{i,\mathrm{denoiser}}^\theta(t,z_t,e_\ell)
  = \frac{\lambda_t}{V}
  \sum_r \pi_i^\theta(e_r\given z_t,t)
  \frac{ \alpha_t\one\{e_\ell=e_r\}+\beta_t/V }{ \alpha_t\one\{z_t^i=e_r\}+\beta_t/V },
  \qquad e_\ell\neq z_t^i.
  \label{eq:uniform-denoiser-reverse-rate}
\end{equation}

For the cavity coordinate, the true and model noisy marginals in Proposition~\ref{prop:source-independent-cavity-elbo} become
\begin{align*}
  d_i(e_\ell)
   & = q_{t\given 0}(e_\ell\given z_0^i)
  = \frac{\beta_t}{V} + \alpha_t \one\{e_\ell=z_0^i\},                     \\
  m_i(e_\ell)
   & = \frac{\beta_t}{V} + \alpha_t \mu_i^\theta(e_\ell\given z_t^{-i},t).
\end{align*}
For the current token $z_t^i$, Proposition~\ref{prop:source-independent-cavity-elbo} applies with $\gamma_i(t,e_\ell)=\lambda_t/V$:
\begin{equation}
  \sboxed{
    \begin{aligned}
      \J_{i,t}^{\mathrm{UDM,cavity}}
       & = \frac{\lambda_t}{\alpha_t\one\{z_t^i=z_0^i\}+\tfrac{\beta_t}V}
      \bigg[
        \sum_{e_\ell\in\Y}\Big(\alpha_t\one\{e_\ell=z_0^i\}+\tfrac{\beta_t}V\Big)
        \log\frac{\alpha_t\one\{e_\ell=z_0^i\}+\tfrac{\beta_t}V}{\alpha_t\mu_i^\theta(e_\ell\given z_t^{-i},t)+\tfrac{\beta_t}V}
        \\
        &\qquad +\frac{\alpha_t\one\{z_t^i=z_0^i\}+\tfrac{\beta_t}V}{\alpha_t\mu_i^\theta(z_t^i\given z_t^{-i},t)+\tfrac{\beta_t}V}
        -\log\frac{\alpha_t\one\{z_t^i=z_0^i\}+\tfrac{\beta_t}V}{\alpha_t\mu_i^\theta(z_t^i\given z_t^{-i},t)+\tfrac{\beta_t}V}-1
        \bigg]
    \end{aligned}
  }
  \label{eq:uniform-cavity-nelbo}
\end{equation}
Unlike masked diffusion, every observed token may be a corrupted version of any clean token, so every position can contribute to the loss.

For the denoiser form, define
\[
  g_\ell^\star := \frac{ \alpha_t\one\{e_\ell=z_0^i\}+\beta_t/V }{ \alpha_t\one\{z_t^i=z_0^i\}+\beta_t/V },
  \qquad
  g_\ell^\theta
  :=
  \sum_r
  \pi_i^\theta(e_r\given z_t,t)
  \frac{ \alpha_t\one\{e_\ell=e_r\}+\beta_t/V }{ \alpha_t\one\{z_t^i=e_r\}+\beta_t/V }.
\]
Then
\begin{equation}
  \sboxed{
    \J_{i,t}^{\mathrm{UDM,denoiser}}
    =
    \frac{\lambda_t}{V}
    \sum_{e_\ell\neq z_t^i}
    \Phi(g_\ell^\star,g_\ell^\theta).
  }
  \label{eq:uniform-denoiser-nelbo}
\end{equation}

\paragraph{Prior KL term}
The fixed prior is the fully random token law, which is also the stationary law of the uniform diffusion kernel and thus coincides with the uniform forward law,
\[
  p^{\mathrm{natural}}_{t_2}(e_\ell)=p^{\mathrm{unif\text{-}fwd}}_{t_2}(e_\ell)=\frac1V,\qquad \ell=1,\ldots,V.
\]
Using~\eqref{eq:uniform-marginal}, the per-token prior KL is
\begin{equation}
  \KL^{\mathrm{unif}}_{\mathrm{natural}}(t_2)
  = \KL^{\mathrm{unif}}_{{\mathrm{unif\text{-}fwd}}}(t_2)
  = \left(\alpha_{t_2}+\frac{\beta_{t_2}}{V}\right)
  \log\left(V\alpha_{t_2}+\beta_{t_2}\right)
  + \frac{(V-1)\beta_{t_2}}{V}\log\beta_{t_2}.
  \label{eq:uniform-prior-kl}
\end{equation}
As $t_2\uparrow1$, $\KL^{\mathrm{unif}}_{\mathrm{natural}}(t_2)\to0$.

\paragraph{Sampling}
For a step from $t=\tau_j$ to $s=\tau_{j-1}$, evaluate the network at $(z_t,t)$; unlike masked diffusion, every position may be resampled. The per-position law is the cavity law~\eqref{eq:common-cavity-ancestral-step} with the uniform kernel~\eqref{eq:uniform-transition-kernel},
\begin{equation}
  p^\theta(Z_s^i=e_\ell\given z_t)
  =
  q_{t\given s}(z_t^i\given e_\ell)\,
  \frac{\alpha_s\,\mu_i^\theta(e_\ell\given z_t^{-i},t)+\beta_s/V}{\alpha_t\,\mu_i^\theta(z_t^i\given z_t^{-i},t)+\beta_t/V},
  \label{eq:uniform-ancestral-step}
\end{equation}
the denominator being the noisy marginal $m_i^\theta(z_t^i\given z_t,t)$; a denoiser head enters the same step after the local Bayes conversion $\mu_i^\theta\propto\pi_i^\theta/w_\ell$~\eqref{eq:denoiser-to-cavity}. At the final level $t_1$, decode each position from its clean-token posterior:
\[
  \hat z_0^i\sim\operatorname{Cat}\!\left(\pi_i^\theta(\cdot\given z_{t_1},t_1)\right).
\]

In contrast to the uniform cavity calibration (Proposition~\ref{prop:uniform-cavity-elbo}), the same uninformative head used as a \emph{denoiser} makes the ELBO diverge.
\begin{proposition}[Uninformative denoiser ELBO diverges for uniform diffusion]\label{prop:uniform-denoiser-ELBO-diverges}
  For uniform diffusion (with $\alpha_0=1$), the uninformative denoiser head $\pi_i^\theta\equiv1/V$ has, as $t\downarrow0$,
  \[
    \Exp{q_{t\given0}}{\J_{i,t}^{\mathrm{UDM,denoiser}}}
    = \frac{V-1}{V}\,\frac{-\alpha_t'}{\beta_t}\,(1+o(1)),
  \]
  which is non-integrable at $t=0$, so that, if $V\ge 2$, the NELBO diverges as $t_1\downarrow0$ like
  \[
    \NELBO_{[t_1,t_2]}(\theta) = \frac{(V-1)L}{V}\,\log\frac1{\beta_{t_1}}+O(1).
  \]
\end{proposition}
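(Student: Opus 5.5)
The plan is a direct asymptotic evaluation of the denoiser integrand~\eqref{eq:uniform-denoiser-nelbo} with $\pi_i^\theta\equiv1/V$. I will split the expectation over $z_t^i\sim q_{t\given0}(\cdot\given z_0^i)$ into the event $z_t^i=z_0^i$, of probability $\alpha_t+\beta_t/V=1-O(\beta_t)$, and its complement, of probability $(V-1)\beta_t/V$. Throughout I use that $\alpha_t\to1$ and $\beta_t\to0$ as $t\downarrow0$, since $\alpha_0=1$.

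\emph{Step 1: the model rate.} For $e_\ell\neq z_t^i$, the sum defining $g^\theta_\ell$ has three kinds of terms. The term $r$ with $e_r=z_t^i$ contributes $\tfrac{\beta_t/V}{\alpha_t+\beta_t/V}$. The term $r=\ell$ contributes $\tfrac{\alpha_t+\beta_t/V}{\beta_t/V}$. The remaining $V-2$ terms each equal $1$. Hence
\[
g^\theta_\ell=\tfrac1V\Big[\tfrac{V\alpha_t}{\beta_t}+O(1)\Big]=\tfrac{\alpha_t}{\beta_t}\,(1+O(\beta_t)).
\]
This does not depend on $z_0$, and it blows up like $1/\beta_t$. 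This blow-up is the mechanism behind the divergence: the uninformative head puts mass on ``wrong'' tokens, which creates the singular quotient discussed after Remark~\ref{rem:support-of-denoiser-law}.

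\emph{Step 2: the dominant event $z_t^i=z_0^i$.} Here $g^\star_\ell=\tfrac{\beta_t/V}{\alpha_t+\beta_t/V}=O(\beta_t)$ for every $\ell\ne z_t^i$. Therefore
\[
\Phi(g^\star_\ell,g^\theta_\ell)=g^\theta_\ell+O(\beta_t\log(1/\beta_t))=\tfrac{\alpha_t}{\beta_t}(1+o(1)).
\]
Summing over the $V-1$ destinations and multiplying by $\lambda_t/V$ and by the probability $1-O(\beta_t)$ gives
\[
\tfrac{V-1}{V}\,\tfrac{\lambda_t\alpha_t}{\beta_t}(1+o(1))=\tfrac{V-1}{V}\,\tfrac{-\alpha_t'}{\beta_t}(1+o(1)).
\]

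\emph{Step 3: the complement $z_t^i\neq z_0^i$.} Here $g^\star_\ell$ equals $\tfrac{\alpha_t+\beta_t/V}{\beta_t/V}\approx V/\beta_t$ for $e_\ell=z_0^i$ and equals $1$ otherwise. In either case I expect $\Phi(g^\star_\ell,g^\theta_\ell)=O(1/\beta_t)$; the worst term is $\Phi(V/\beta,\alpha/\beta)=\tfrac1\beta(\alpha-V+V\log V)+\dots$. Multiplying by the event probability $O(\beta_t)$ leaves $O(\lambda_t)$. This is $o(\lambda_t/\beta_t)$, so it is absorbed into the $(1+o(1))$ factor. This step is the main bookkeeping obstacle: I must check that no term is of size $\beta_t^{-2}$, and the explicit $\Phi$ values above show there is none.

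\emph{Step 4: integrate and assemble.} Since $-\alpha_t'/\beta_t=\tfrac{d}{dt}\log\beta_t$, the time integral over $[t_1,t_2]$ of the leading term equals
\[
\tfrac{V-1}{V}\log\tfrac{\beta_{t_2}}{\beta_{t_1}}=\tfrac{V-1}{V}\log\tfrac1{\beta_{t_1}}+O(1).
\]
This is non-integrable at $0$ because $\beta_{t_1}\to0$. The $o(1)$ corrections integrate to $o(\log(1/\beta_{t_1}))$. I will try to sharpen this to $O(1)$: the Step 3 remainder is $O(\lambda_t)$, which is integrable because $\int\lambda_t\,dt=-\log\alpha_t$ stays bounded near $0$. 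The Step 2 corrections are $O(\lambda_t\,\log(1/\beta_t))$. Near $0$ I will bound $\lambda_t\log(1/\beta_t)$ by $\tfrac{-\alpha_t'}{\alpha_t}\log\tfrac1{\beta_t}$ and integrate $\int \beta'\log(1/\beta)\,dt<\infty$, which gives $O(1)$. Summing over the $L$ factorized positions gives the factor $L$. The terminal-prior KL~\eqref{eq:uniform-prior-kl} is bounded. For the reconstruction term at $t_1$, I take the decoder to be fixed and bounded away from zero, so it contributes $O(1)$. This yields the stated $\NELBO_{[t_1,t_2]}(\theta)=\tfrac{(V-1)L}{V}\log\tfrac1{\beta_{t_1}}+O(1)$.
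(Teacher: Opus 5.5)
Your proposal is correct and follows essentially the same route as the paper. Both split on $z_t^i=z_0^i$ versus $z_t^i\neq z_0^i$, show that the single divergent $r=\ell$ term makes the uninformative model rate $\tfrac{\lambda_t}{V}\big(\tfrac{\alpha_t}{\beta_t}+O(1)\big)$ for every destination, let the $O(\beta_t)$ event probability cancel the $1/\beta_t$ singularity on the complement, and integrate using $-\alpha_t'\,dt=d\beta_t$. Your explicit treatment of the remainders and boundary terms fills in what the paper leaves implicit in its $O(1)$; the Step 2 correction is in fact only $O(\lambda_t)$, so your $\lambda_t\log(1/\beta_t)$ bound is harmless but unnecessary.
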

\begin{proof}
  We fix one position with clean token $z_0^i$, and study the true and model reverse rates $a_i(t,z_0,z_t,e_\ell)$ and $a_i^\theta(t,z_t,e_\ell)$ using~\eqref{eq:uniform-true-reverse-rate} and~\eqref{eq:uniform-denoiser-reverse-rate}. On the one hand, the true rates are controlled by the ratio of forward kernels
  \[
    a_i(t,z_0,z_t,e_\ell) = \frac{\lambda_t}{V} \frac{\alpha_t\one\{e_\ell=z_0^i\}+\beta_t/V}{\alpha_t\one\{z_t^i=z_0^i\}+\beta_t/V}
    = \frac{\lambda_t}{V}
    \begin{cases}
      \frac{\beta_t/V}{\alpha_t+\beta_t/V}=O(\beta_t)\to0,              & e_\ell\neq z_0^i= z_t^i,
      \\
      \frac{\alpha_t+\beta_t/V}{\beta_t/V}=\frac{V\alpha_t}{\beta_t}+1, & e_\ell=z_0^i\neq z_t^i,  \\
      1,                                                                & \text{otherwise.}
    \end{cases}
  \]
  As $\alpha_t\to 1$ and $\beta_t\to0$, if the current token is ``correct'' $z_t^i=z_0^i$, then all jumps away from it decay like $\beta_t$. If the current token is ``wrong'' $z_t^i\neq z_0^i$, then the dominant jump is the one that corrects the wrong state by jumping to the actual clean token $e_\ell=z_0^i$, whose rate diverges like
  \[
    a_i(t,z_0,z_t,z_0^i)
    = \frac{\lambda_t}{V}\left(\frac{V\alpha_t}{\beta_t}+1\right)
    = \frac{-\alpha_t'}{\beta_t}\big(1+o(1)\big),
  \]
  using $\lambda_t\alpha_t=-\alpha_t'$. On the other hand, the model rate for $\pi_i^\theta\equiv1/V$ is, up to the same factor $\lambda_t/V$, the average
  \[
    \frac1V\sum_r \frac{\alpha_t\one\{e_\ell=e_r\}+\beta_t/V }{ \alpha_t\one\{z_t^i=e_r\}+\beta_t/V}.
  \]
  By the same case-analysis with $e_r$ in place of $z_0^i$ and since $e_\ell\neq z_t^i$ (we can't have self-jumps to the current token), we always have exactly one $r$ such that $e_\ell=e_r\neq z_t^i$, giving a divergent term $\alpha_t/\beta_t$, while all the others remain bounded. This means that for \emph{every} $e_\ell$, an uninformative denoiser rate always contains a divergent term, and thus
  \[
    a_{i,\mathrm{denoiser}}^\theta(t,z_t,e_\ell)
    =\frac{\lambda_t}{V}\Big(\frac{\alpha_t}{\beta_t}+O(1)\Big)
    =\frac{-\alpha_t'}{V\beta_t}+O(-\alpha_t').
  \]

  We now conclude by considering first the $\J$ terms where $z_t^i=z_0^i$. On those, the true-to-model rate ratio is
  $a_i/a_i^\theta=[\beta_t/(V\alpha_t+\beta_t)]/[\alpha_t/\beta_t+O(1)]=O(\beta_t^2)$, and thus $ \Phi(a_i,a_i^\theta) = a_i^\theta\left(1+O\!\left(\beta_t^2|\log\beta_t|\right)\right)$.
  Together with the model-rate expansion above, this shows that each of the $V-1$ destinations contributes its model rate up to an $O(-\alpha_t')$ remainder. Since $q(z_t^i=z_0^i\given z_0^i)=1-(V-1)\beta_t/V = 1 + O(\beta_t)$, we get
  \[
    \Exp{q_{t\given0}}{\J_{i,t}^{\mathrm{UDM,denoiser}}\one\{z_t^i=z_0^i\}}
    =(V-1)\frac{-\alpha_t'}{V\beta_t}+O(-\alpha_t').
  \]
  It remains to check the complementary event where $z_t^i\neq z_0^i$. The rate estimates above give
  \[
    a_i=O\!\left(\frac{-\alpha_t'}{\beta_t}\right),\quad
    a_i^\theta=\Theta\!\left(\frac{-\alpha_t'}{\beta_t}\right)
    \quad\Longrightarrow\quad
    \Phi(a_i,a_i^\theta)
    =a_i^\theta\left(1-\frac{a_i}{a_i^\theta}
    +\frac{a_i}{a_i^\theta}\log\frac{a_i}{a_i^\theta}\right)
    =O\!\left(\frac{-\alpha_t'}{\beta_t}\right).
  \]
  Thus its $V-1$ terms are each of order $O(-\alpha_t'/\beta_t)$, while the event itself has probability $q(z_t^i\neq z_0^i\given z_0^i)=(V-1)\beta_t/V=O(\beta_t)$. Consequently,
  \[
    \Exp{q_{t\given0}}{\J_{i,t}^{\mathrm{UDM,denoiser}}\one\{z_t^i\neq z_0^i\}}
    =O(-\alpha_t'),
  \]
  so the event probability cancels the rate singularity. Putting the two contributions together gives
  \[
    \Exp{q_{t\given0}}{\J_{i,t}^{\mathrm{UDM,denoiser}}}
    =\frac{V-1}{V}\,\frac{-\alpha_t'}{\beta_t}+O(-\alpha_t')
    =\frac{V-1}{V}\,\frac{-\alpha_t'}{\beta_t}(1+o(1)).
  \]
  Integrating with $-\alpha_t'\,\mathrm{d}t=\mathrm{d}\beta_t$ gives the per-position path term $\frac{V-1}{V}\log(1/\beta_{t_1})+O(1)$. Summing over $L$ positions gives the factor $L$.
\end{proof}
The downfall of the uninformative denoiser is that its jump support allows ``wrong'' jumps from clean tokens even near $t=0$ and those kernel ratios diverge like $1/\beta_t$. In contrast, the cavity law probabilities receive a matching $O(\beta_t)$ term from the Bayes factor to downweight those wrong jumps near the clean endpoint, thus keeping the cavity ELBO finite (illustrated in Figure~\ref{fig:calibration}).

\subsection{Generalized interpolating discrete diffusion}

Generalized Interpolating Discrete Diffusion (GIDD)~\cite{vonruette2025gidd} additionally chooses a differentiable mixing distribution $\pi_t$ on $\Z$ (a row vector).
Its single-position marginal is
\begin{equation}
  q^i_{t\given 0}(y\given e_\ell)
  = \alpha_t\one\{y=e_\ell\}+\beta_t\pi_t(y),
  \qquad q^i_{t\given 0}
  = \alpha_t I+\beta_t\one\pi_t.
  \label{eq:gidd-marginal-kernel}
\end{equation}
For $0\leq s<t\leq1$ with $\alpha_s>0$, define $q^i_{t\given s}$ by
\[
  q^i_{t\given 0}=q^i_{s\given 0}\,q^i_{t\given s}.
\]
Solving this identity within the same rank-one family gives
\begin{equation}
  q^i_{t\given s}
  =
  \alpha_{t\given s}I+\beta_{t\given s}\one\pi_{t\given s},
  \qquad
  \alpha_{t\given s}:=\frac{\alpha_t}{\alpha_s},
  \qquad
  \beta_{t\given s}\pi_{t\given s}
  :=\beta_t\pi_t-\frac{\alpha_t}{\alpha_s}\beta_s\pi_s.
  \label{eq:gidd-conditional-kernel}
\end{equation}
Direct multiplication verifies the Chapman--Kolmogorov equations.
Expanding at $t=s+\Delta$ gives the destination-rate vector and off-diagonal generator
\begin{equation}
  h_t(y):=\beta_t\pi_t'(y)+\lambda_t\pi_t(y),
  \qquad
  R_t(x,y)=h_t(y),\qquad y\neq x.
  \label{eq:gidd-forward-rate}
\end{equation}
The diagonal enforces zero row sums. Since $\sum_y h_t(y)=\lambda_t$, this agrees with the expansion of~\eqref{eq:gidd-conditional-kernel}. Valid schedules require $h_t(y)\geq0$ for every off-diagonal destination. Consistent with the standing interior assumptions, we additionally take the supports of $\pi_t$ and $h_t$ to be fixed on $(0,1)$; strictly positive entries are then bounded away from zero on compact interior windows.

\paragraph{CTMC ELBO}
For GIDD, the true and cavity reverse rates take the form
\begin{equation}
  a_i(t,z_0,z_t,y)
  =
  h_t(z_t^i)
  \frac{q_{t\given 0}(y\given z_0^i)}
  {q_{t\given 0}(z_t^i\given z_0^i)},
  \qquad y\neq z_t^i,
  \label{eq:gidd-true-reverse-rate}
\end{equation}
and
\begin{equation}
  a_{i,\mathrm{cavity}}^\theta(t,z_t,y)
  =
  h_t(z_t^i)
  \frac{m_i^\theta(y\given z_t,t)}
  {m_i^\theta(z_t^i\given z_t,t)},
  \label{eq:gidd-model-reverse-rate}
\end{equation}
with noisy marginals
\[
  m_i^\theta(y\given z_t,t) \equiv m_i(y)
  = \alpha_t\sum_\ell
  \mu_i^\theta(e_\ell\given z_t^{-i},t)\one\{y=e_\ell\}
  +\beta_t\pi_t(y).
\]

If $\pi_i^\theta(\cdot\given z_t,t)$ is a denoiser clean-token law, then for $y\in\Z\setminus\{z_t^i\}$,
\begin{equation}
  a_{i,\mathrm{denoiser}}^\theta(t,z_t,y)
  =
  h_t(z_t^i)
  \sum_\ell
  \pi_i^\theta(e_\ell\given z_t,t)
  \frac{ \alpha_t\one\{y=e_\ell\}+\beta_t\pi_t(y) }{ \alpha_t\one\{z_t^i=e_\ell\}+\beta_t\pi_t(z_t^i) }.
  \label{eq:gidd-denoiser-reverse-rate}
\end{equation}

The true noisy marginal in Proposition~\ref{prop:source-independent-cavity-elbo} is
\[
  d_i(y)=q_{t\given 0}(y\given z_0^i)
  =
  \alpha_t\one\{y=z_0^i\}+\beta_t\pi_t(y).
\]
Since~\eqref{eq:gidd-forward-rate} has destination-dependent rate $\gamma_i(t,y)=h_t(y)$, the cavity-coordinate per-position NELBO is
\begin{equation}
  \sboxed{
    \J_{i,t}^{\mathrm{GIDD,cavity}}
    = \frac{h_t(z_t^i)}{d_i(z_t^i)}
    \left[ \KLdiv{d_i}{m_i} + \ISdiv{d_i(z_t^i)}{m_i(z_t^i)}\right].
  }
  \label{eq:gidd-closed-form-nelbo}
\end{equation}

For the denoiser-coordinate form, define, for $y\neq z_t^i$,
\[
  g_i^\star(y)
  := \frac{d_i(y)}{d_i(z_t^i)}
  = \frac{\alpha_t\one\{y=z_0^i\}+\beta_t\pi_t(y)} {\alpha_t\one\{z_t^i=z_0^i\}+\beta_t\pi_t(z_t^i)}
\]
and
\[
  g_i^\theta(y)
  := \sum_\ell
  \pi_i^\theta(e_\ell\given z_t,t)
  \frac{\alpha_t\one\{y=e_\ell\}+\beta_t\pi_t(y)}
  {\alpha_t\one\{z_t^i=e_\ell\}+\beta_t\pi_t(z_t^i)}.
\]
Then
\begin{equation}
  \sboxed{
    \J_{i,t}^{\mathrm{GIDD,denoiser}}
    =
    h_t(z_t^i)
    \sum_{y\neq z_t^i}
    \Phi\!\left(g_i^\star(y),g_i^\theta(y)\right)
  }.
  \label{eq:gidd-denoiser-nelbo}
\end{equation}

\paragraph{Prior KL term}
The natural fixed prior for GIDD at the upper endpoint is the interpolating distribution
\[
  p_{t_2}^{\mathrm{natural}}=\pi_{t_2}.
\]
For a clean token $z_0^i$ and an arbitrary position $i$ (by symmetry, all positions are equivalent), \eqref{eq:gidd-marginal-kernel} gives the endpoint law. If $\pi_{t_2}(z_0^i)>0$, the per-token prior KL is
\begin{equation}
  \KL^{\mathrm{GIDD}}_{\mathrm{natural}}(t_2;z_0^i)
  =
  \left(\alpha_{t_2}+\beta_{t_2}\pi_{t_2}(z_0^i)\right)
  \log
  \frac{\alpha_{t_2}+\beta_{t_2}\pi_{t_2}(z_0^i)}{\pi_{t_2}(z_0^i)}
  +
  \beta_{t_2}(1-\pi_{t_2}(z_0^i))\log\beta_{t_2}.
  \label{eq:gidd-prior-kl}
\end{equation}
If $\pi_{t_2}(z_0^i)=0$ while $\alpha_{t_2}>0$, this KL is $+\infty$ by support mismatch. The uniform-forwarded prior $p_{t_2}^{\mathrm{unif\text{-}fwd}}(y)=\tfrac{\alpha_{t_2}}{V}\one\{y\in\Y\}+\beta_{t_2}\pi_{t_2}(y)$ (a uniform clean token pushed through $q_{t_2\given0}$) instead gives, since the noise floor $\beta_{t_2}\pi_{t_2}$ cancels off the clean vocabulary, a per-token KL
\begin{equation}
  \KL^{\mathrm{GIDD}}_{\mathrm{unif\text{-}fwd}}(t_2;z_0^i)
  =
  \sum_{y\in\Y}
  \left(\alpha_{t_2}\one\{y=z_0^i\}+\beta_{t_2}\pi_{t_2}(y)\right)
  \log\frac{\alpha_{t_2}\one\{y=z_0^i\}+\beta_{t_2}\pi_{t_2}(y)}{\alpha_{t_2}/V+\beta_{t_2}\pi_{t_2}(y)},
  \label{eq:gidd-uniform-forward-prior-kl}
\end{equation}
finite whenever $\alpha_{t_2}>0$. The masked ($\alpha_{t_2}\log V$) and uniform ($\pi_t$ stationary, so $\KL_{\mathrm{unif\text{-}fwd}}=\KL_{\mathrm{natural}}$) cases are recovered by $\pi_t=\delta_{\mathfrak{m}}$ and $\pi_t$ uniform.

\begin{remark}[GIDD uninformative denoiser ELBO]\label{rmk:divergence-gidd}
  The divergence of Proposition~\ref{prop:uniform-denoiser-ELBO-diverges} is not special to uniform diffusion. Fix a clean token $x$ with $\pi_t(x)>0$ near $t=0$ and consider the revealed event $z_t^i=z_0^i=x$, whose probability tends to one. For any clean destination $y\neq x$, the $e_\ell=y$ term of an uninformative denoiser head gives
  \[
    g_i^\theta(y)\geq\frac1V\frac{\alpha_t+\beta_t\pi_t(y)}{\beta_t\pi_t(x)},
    \qquad
    g_i^\star(y)=\frac{\beta_t\pi_t(y)}{\alpha_t+\beta_t\pi_t(x)}=O(\beta_t).
  \]
  Hence $\Phi(g_i^\star(y),g_i^\theta(y))=g_i^\theta(y)(1+o(1))$, exactly the $g_\ell^\star=O(\beta_t)$ versus $g_\ell^\theta=\Theta(1/\beta_t)$ mechanism used in the proof of Proposition~\ref{prop:uniform-denoiser-ELBO-diverges}. This one destination already contributes at order $h_t(x)/(\beta_t\pi_t(x))$, which has the exact primitive
  \[
    \frac{h_t(x)}{\beta_t\pi_t(x)}
    =\frac{\pi_t'(x)}{\pi_t(x)}+\frac{\lambda_t}{\beta_t}
    =\frac{\mathrm d}{\mathrm dt}\log\frac{\beta_t\pi_t(x)}{\alpha_t}.
  \]
  Since $\beta_t\pi_t(x)/\alpha_t\to0$, its integral diverges as the lower endpoint tends to zero; for uniform $\pi_t$, this is the same endpoint logarithm computed explicitly in Proposition~\ref{prop:uniform-denoiser-ELBO-diverges}. Thus any clean token to which $\pi_t$ assigns positive mass (and which occurs with positive data probability) produces the divergence, including uniform and full-support hybrids.
\end{remark}

\begin{corollary}[Coordinate conversions for the interpolating family]\label{cor:gidd-conversions}
  For $q^i_{t\given0}(y\given e_\ell)=\alpha_t\one\{y=e_\ell\}+\beta_t\pi_t(y)$ the matrix $q^i_{t\given0}$ (rows as distributions) is invertible on the clean vocabulary, and the score-to-cavity map is given by the affine transformation
  \begin{equation}
    \mu_i^\theta(e_\ell)=\frac{m_i^\theta(e_\ell)-\beta_t\,\pi_t(e_\ell)}{\alpha_t},
    \qquad
    m_i^\theta(y)=\frac{s_i^\theta(z_t^i,y)}{\sum_{y'}s_i^\theta(z_t^i,y')},
    \label{eq:score-to-cavity-gidd}
  \end{equation}
  a probability vector iff $m_i^\theta(e_\ell)\ge\beta_t\pi_t(e_\ell)$ for every $\ell$. All conversions are then explicit (Table~\ref{tab:gidd-conversions}).
\end{corollary}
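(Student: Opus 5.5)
The plan is to specialize Proposition~\ref{prop:coordinate-conversions} to the rank-one-plus-identity channel $q^i_{t\given0}=\alpha_t I+\beta_t\one\pi_t$. The approach has three steps.

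First, I will establish invertibility on the clean vocabulary. I will restrict the channel to rows indexed by clean tokens $e_\ell\in\Y$ and to the columns $\Z$. The clean-column block is $\alpha_t I_V+\beta_t\one\,\pi_t|_\Y$, which is a rank-one perturbation of $\alpha_t I$. By the matrix determinant lemma, its determinant is $\alpha_t^{V-1}(\alpha_t+\beta_t\pi_t(\Y))$. This is positive for $t<1$ since $\alpha_t>0$, so the block, and hence the full $V\times|\Z|$ channel, has full row rank.

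Second, I will derive the affine inverse explicitly rather than through the determinant. Starting from $m_i^\theta=\mu_i^\theta q^i_{t\given0}$ and using $\sum_\ell\mu_i^\theta(e_\ell)=1$, we get
\[
  m_i^\theta(y)=\alpha_t\mu_i^\theta(y)\one\{y\in\Y\}+\beta_t\pi_t(y),
\]
because the rank-one term contributes $(\mu_i^\theta\one)\,\pi_t=\pi_t$. Solving for $\mu_i^\theta(e_\ell)$ gives the displayed formula. The same identity shows that this solution is unique, which recovers the full-rank claim directly. The score fixes $m_i^\theta$ only up to scale, via $m_i^\theta(y)\propto s_i^\theta(z_t^i,y)$ with $s_i^\theta(z_t^i,z_t^i)=1$. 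Normalizing over $y\in\Z$ therefore gives the second formula in~\eqref{eq:score-to-cavity-gidd}, since $m_i^\theta$ must be a probability vector on $\Z$.

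Third, I will check the probability-vector condition. The recovered $\mu_i^\theta$ sums to one automatically on the clean vocabulary:
\[
  \sum_{\ell}\bigl(m_i^\theta(e_\ell)-\beta_t\pi_t(e_\ell)\bigr)
  =1-\sum_{y\notin\Y}m_i^\theta(y)-\beta_t\pi_t(\Y).
\]
Off the vocabulary (for example at the mask state), consistency requires $m_i^\theta(y)=\beta_t\pi_t(y)$. This holds whenever $m_i^\theta$ lies in the range of the channel. So under that range constraint the sum is $1-\beta_t=\alpha_t$, and dividing by $\alpha_t$ gives total mass one. Nonnegativity is then exactly the condition $m_i^\theta(e_\ell)\ge\beta_t\pi_t(e_\ell)$ for every $\ell$, which gives the stated iff.

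The main obstacle is the non-clean coordinates. A learned score need not produce an $m_i^\theta$ in the range of the channel. I will therefore state that the formula is the unique solution when the consistency condition $m_i^\theta(y)=\beta_t\pi_t(y)$ holds off $\Y$, and otherwise read it as the natural projection onto clean coordinates. The remaining conversions (denoiser$\leftrightarrow$cavity with $w_\ell=\alpha_t\one\{z_t^i=e_\ell\}+\beta_t\pi_t(z_t^i)$) follow directly by substitution into Proposition~\ref{prop:coordinate-conversions}.
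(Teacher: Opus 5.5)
Your argument follows the paper's proof: the source-independent noise floor makes $m_i^\theta(e_\ell)=\alpha_t\mu_i^\theta(e_\ell)+\beta_t\pi_t(e_\ell)$ decouple across $\ell$, and solving this gives the affine map; your determinant-lemma value $\alpha_t^{V-1}(\alpha_t+\beta_t\pi_t(\Y))$ and your explicit probability-vector check supply what the paper only asserts. Your caveat is also correct and worth keeping: when $\Z\supsetneq\Y$ the ``iff'' additionally needs $m_i^\theta(y)=\beta_t\pi_t(y)$ off the clean vocabulary (vacuous for uniform diffusion, but not automatic for a learned score with a mask state), and the paper's own RADD specialization $m_i^\theta(e_\ell)/\alpha_t=(\beta_t/\alpha_t)s_i^\theta(\mathfrak{m},e_\ell)$ tacitly assumes $m_i^\theta(\mathfrak{m})=\beta_t$.
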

\begin{proof}
  The noise floor $\beta_t\pi_t(y)$ is source-independent, so $m_i^\theta(e_\ell) = \sum_r\mu_i^\theta(e_r) q^i_{t\given0}(e_\ell\given e_r)=\alpha_t\mu_i^\theta(e_\ell)+\beta_t\pi_t(e_\ell)$ decouples across $\ell$; solving for $\mu_i^\theta(e_\ell)$ gives~\eqref{eq:score-to-cavity-gidd}, with the noisy marginal recovered from a score as in Proposition~\ref{prop:coordinate-conversions}. The matrix $q^i_{t\given0}=\alpha_t I+\beta_t\one\pi_t$ has full row rank for $\alpha_t>0$.
\end{proof}
Uniform diffusion gives $\mu_i^\theta(e_\ell)=(m_i^\theta(e_\ell)-\beta_t/V)/\alpha_t$, and masked diffusion gives $\mu_i^\theta(e_\ell)=m_i^\theta(e_\ell)/\alpha_t=(\beta_t/\alpha_t)\,s_i^\theta(\mathfrak{m},e_\ell)$, recovering RADD~\cite{ou2025radd}. This is the discrete analogue of empirical-Bayes deconvolution: the prior (cavity) is recovered by inverting the noising channel.

\begin{table}[ht]
  \centering
  \renewcommand{\arraystretch}{1.7}
  \setlength{\tabcolsep}{7pt}
  \begin{tabular}{@{}l l l l@{}}
    \toprule
    from\,$\backslash$\,to
                            & denoiser $\pi_i^\theta(e_\ell)$
                            & cavity $\mu_i^\theta(e_\ell)$
                            & score $s_i^\theta(z_t^i,y)$
    \\
    \midrule
    denoiser $\pi_i^\theta$ & ---
                            & $\propto\pi_i^\theta(e_\ell)/w_\ell$
                            & $m_i^\theta(y)/m_i^\theta(z_t^i)$
    \\
    cavity $\mu_i^\theta$   & $\propto\mu_i^\theta(e_\ell)\,w_\ell$
                            & ---
                            & $m_i^\theta(y)/m_i^\theta(z_t^i)$
    \\
    score $s_i^\theta$
                            & $\propto\mu_i^\theta(e_\ell)\,w_\ell$ (via $s\!\to\!\mu$)
                            & $\dfrac{m_i^\theta(e_\ell)-\beta_t\pi_t(e_\ell)}{\alpha_t}$
                            & ---
    \\
    \bottomrule
  \end{tabular}
  \caption{Conversions for the interpolating family (Corollary~\ref{cor:gidd-conversions}), with current state $z_t^i$, local likelihood $w_\ell=\alpha_t\one\{e_\ell=z_t^i\}+\beta_t\pi_t(z_t^i)$, and cavity-induced marginal $m_i^\theta(y)=\alpha_t\mu_i^\theta(y)+\beta_t\pi_t(y)$ (taken $\propto s_i^\theta(z_t^i,y)$ when starting from a score). Specializations: masked ($\pi_t=\delta_{\mathfrak{m}}$) gives $\mu_i^\theta(e_\ell)=(\beta_t/\alpha_t)\,s_i^\theta(\mathfrak{m},e_\ell)$ (RADD); uniform gives $\mu_i^\theta(e_\ell)=(m_i^\theta(e_\ell)-\beta_t/V)/\alpha_t$.}
  \label{tab:gidd-conversions}
\end{table}

Table~\ref{tab:process-summary} collects the three processes as instances of the common form of \S\ref{sec:product-ctmc}: each is fixed by a single-token generator, and the recipe of \S\ref{subsec:ctmc-elbo} then determines its reverse rate, NELBO integrand, and boundary terms.

\begin{table}[htbp]
  \centering
  \setlength{\tabcolsep}{4pt}
  \renewcommand{\arraystretch}{1.4}
  \begin{tabular}{@{}L{3.0cm} L{4.4cm} L{4.0cm} L{4.5cm}@{}}
    \toprule
     & \textbf{Masked}
     & \textbf{Uniform}
     & \textbf{GIDD}
    \\
    \midrule
    Forward rate
     & $R_t(e_y,\mathfrak{m})=\lambda_t$
     & $R_t(e_y,e_x)=\dfrac{\lambda_t}{V}$
     & $R_t(x,y)=h_t(y)$
    \\
    Reverse rate $(\RevQ_t^\theta)_i(z_t,y)$
     & $-\dfrac{\alpha_t'}{\beta_t}\one\{z_t^i=\mathfrak{m}\}\mu_i^\theta(y^i)$
     & $\dfrac{\lambda_t}{V}\,\frac{\alpha_t \mu_i^\theta(y^i)+\beta_t/V}{\alpha_t \mu_i^\theta(z_t^i)+\beta_t/V}$
     & $h_t(z_t^i)\,\frac{\alpha_t \mu_i^\theta(y^i)+\beta_t\pi_t(y^i)}{\alpha_t \mu_i^\theta(z_t^i)+\beta_t\pi_t(z_t^i)}$
    \\
    Sampling $p^\theta(z_s^i\given z_t)$
     & $
         \begin{cases}
           \dfrac{\alpha_s-\alpha_t}{\beta_t}\,\mu_i^\theta(z_s^i), & z_s^i\in\Y        \\
           \dfrac{\beta_s}{\beta_t},                                & z_s^i=\mathfrak m
         \end{cases}
       $
     & \multicolumn{2}{c@{}}{$q^i_{t\given s}(z_t^i\given z_s^i)\,\dfrac{\alpha_s\,\mu_i^\theta(z_s^i)+\beta_s\pi_s(z_s^i)}{\alpha_t\,\mu_i^\theta(z_t^i)+\beta_t\pi_t(z_t^i)}$}
    \\
    Cavity integrand $\J_{i,t}^{\mathrm{cavity}}$
     & $\dfrac{\alpha_t'}{\beta_t}\,\one\{z_t^i=\mathfrak m\}\log\mu_i^\theta(z_0^i)$
     & \multicolumn{2}{c@{}}{$\dfrac{h_t(z_t^i)}{d_i(z_t^i)}\bigg[\KLdiv{d_i}{m_i} + \ISdiv{d_i(z_t^i)}{m_i(z_t^i)}\bigg]$}
    \\
    Natural prior per-token KL
     & $0$ if $\alpha_{t_2}=0$, \newline else ${+}\infty$
     & \multicolumn{2}{c@{}}{$d_i(z_0^i)\log\dfrac{d_i(z_0^i)}{\pi_{t_2}(z_0^i)} + \beta_{t_2}(1{-}\pi_{t_2}(z_0^i))\log\beta_{t_2}$}
    \\
    Unif.\ fwd.\ per-token KL
     & $\alpha_{t_2}\log V$
     & $=$ natural
     & $\KLdiv{q_{t_2\given 0}}{p_{t_2}^{\mathrm{unif\text{-}fwd}}}$ \eqref{eq:gidd-uniform-forward-prior-kl}
    \\
    $1/V$ cavity NELBO
     & \multicolumn{3}{c@{}}{$L\log V$}
    \\
    $1/V$ denoiser NELBO
     & $L\log V$ ($\pi=\mu$)
     & $\frac{(V-1)L}{V}\,\log\frac1{\beta_{t_1}}+O(1)$ (divergent, Prop.~\ref{prop:uniform-denoiser-ELBO-diverges})
     & diverges if $\pi_t$ puts mass on data-supported clean tokens (Rmk.~\ref{rmk:divergence-gidd})
    \\
    \bottomrule
  \end{tabular}
  \caption{The three main discrete diffusion processes as instances of our framework, with $d_i(\cdot)=q^i_{t\given0}(\cdot\given z_0^i)$, $m_i=m_i^\theta(\cdot\given z_t,t)$ (Proposition~\ref{prop:source-independent-cavity-elbo}), and $h_t(y)=\beta_t\pi_t'(y)+\lambda_t\pi_t(y)$ the GIDD off-diagonal jump rate of~\eqref{eq:gidd-forward-rate}; cavity quantities suppress the context, $\mu_i^\theta(\cdot)=\mu_i^\theta(\cdot\given z_t^{-i},t)$. Two terminal priors are recorded: the natural one~(\eqref{eq:masked-prior-kl}, \eqref{eq:uniform-prior-kl}, \eqref{eq:gidd-prior-kl}) and the uniform-data-forwarded one~\eqref{eq:masked-uniform-forward-prior-kl}.}
  \label{tab:process-summary}
\end{table}

\clearpage

\section{Numerical verification}\label{sec:experimental-setup}

Every quantity is computed on an exactly-solvable toy model, so oracle laws, entropies, and path-KLs are available in closed form. The data law $\pdata=q_0$ is a homogeneous bigram chain on $V=8$ tokens of length $L=8$ (transition matrix drawn once from a fixed seed, a row-softmax of $\mathcal N(0,1)$ entries at temperature $0.3$), with $H(q_0)=10.00$ nats, i.e.\ $1.25$/token. We instantiate masked, uniform, and GIDD noising with the linear schedule $\alpha_t=1-t$ on the window $[t_1,t_2]=[0.02,0.98]$, GIDD interpolating masked and uniform through a mixing weight $\lambda$ controlling the terminal distribution via $\pi_1=(1-\lambda)\delta_{\mathfrak{m}} + \lambda\frac1V$ ($\lambda=\tfrac12$ except where swept; not to be confused with the schedule rate $\lambda_t$ of \S\ref{sec:masked-uniform-gidd}). Every floor, calibration, conversion, and sampling step is oracle-exact; only Figure~\ref{fig:projection} and Figure~\ref{fig:infocore} involve training, fitting a small DiT (hidden $128$, $4$ heads, $3$ blocks; Adam at $10^{-3}$, batch $256$, $8000$ steps). Generative perplexities are exact $q_0$-NLLs of $2500$ ancestral samples over $256$ steps.

Each plot verifies one result of the theory: the reverse-rate projection, a trained denoiser head landing on the oracle marginal rate (Figure~\ref{fig:projection}); the shared information floor $H(q_0)$ across processes (Figure~\ref{fig:floor}); the Oracle Distance, every model's excess NELBO decaying to that floor (Figure~\ref{fig:infocore}); the coordinate dictionary and its convert-or-pay penalty (Figure~\ref{fig:coordinates}, Table~\ref{tab:convert-or-pay}); the parallel-sampling factorization error, isolated at the oracle denoiser (Figure~\ref{fig:nfe}); and the initialization calibration, the $\log V$ cavity value and the diverging denoiser (Figure~\ref{fig:calibration}).

\begin{table}[t]
  \centering\small
  \setlength{\tabcolsep}{6pt}
  \begin{tabular}{@{}l ccc c ccc@{}}
    \toprule
               & \multicolumn{3}{c}{\textbf{uniform}: read as} &                  & \multicolumn{3}{c}{\textbf{GIDD}: read as}                                                             \\
    \cmidrule(lr){2-4}\cmidrule(lr){6-8}
    head coord & den.                                          & cav.             & score                                      &  & den.             & cav.             & score            \\
    \midrule
    denoiser   & 1.25(0)                                       & \textbf{2.03(1)} & \textbf{7.28(1)}                           &  & 1.25(0)          & \textbf{1.91(1)} & \textbf{3.92(1)} \\
    cavity     & \textbf{2.43(0)}                              & 1.25(0)          & \textbf{6.45(1)}                           &  & \textbf{2.57(0)} & 1.25(0)          & \textbf{3.46(1)} \\
    score      & \textbf{2.63(0)}                              & \textbf{1.33(0)} & 1.25(0)                                    &  & \textbf{2.72(0)} & \textbf{1.31(0)} & 1.25(0)          \\
    \bottomrule
  \end{tabular}
  \caption{Oracle per-token NELBO values ($H(q_0)/L=1.25$; mean$\pm$std over $6$ seeds, $2.03(1)=2.03\pm0.01$). Each entry reads the exact reverse rate held in the row coordinate as the column coordinate without the conversion of Table~\ref{tab:coordinate-conversions}: the diagonal is the native value, but every off-diagonal is penalized unless the conversion is applied (Theorem~\ref{thm:three-coordinates}, Figure~\ref{fig:coordinates}a).}
  \label{tab:convert-or-pay}
\end{table}

\section{Conclusion}\label{sec:conclusion}
We have given a self-contained, rigorous account of the continuous-time discrete-diffusion ELBO, with explicit boundary terms, and used it to answer the question of the title at two levels. At the level of the objective, the NELBO is not merely a likelihood bound: it equals the data entropy plus the path KL from the oracle reverse process to the learned one (the Oracle Distance, Theorem~\ref{thm:nelbo-pathkl}); its irreducible part is the rate of information loss $\tfrac{d}{dt}H(Z_0\given Z_t)$, and its best achievable value is the data entropy $H(\pdata)$, for every noising process. At the level of the learned object, a discrete diffusion model learns the \emph{marginal reverse jump rate}, the posterior average of the clean-conditioned rate, and the denoiser, cavity, and score parameterizations are three coordinates of this one object (Table~\ref{tab:coordinate-dictionary}), related by exact conversions. This viewpoint turns scattered constructions into instances of one recipe, with concrete consequences: closed-form per-process ELBOs whose boundary terms make reported values comparable across noising processes (\S\ref{sec:masked-uniform-gidd}, Table~\ref{tab:process-summary}), the $\log V$ cavity calibration (Proposition~\ref{prop:uniform-cavity-elbo}), the denoiser-coordinate divergence that explains why the denoiser head is harder to train under uniform and GIDD noising (Proposition~\ref{prop:uniform-denoiser-ELBO-diverges}), and the reason masked diffusion is special: there the denoiser and cavity laws coincide (Corollary~\ref{cor:masked-denoiser-cavity}), and the distinction disappears. On an exactly-solvable model, each of these identities (the shared floor, the Oracle Distance decay, the reverse-rate projection, and the convert-or-pay penalties) is confirmed numerically without approximation.

Our analysis characterizes the \emph{teacher-forced} objective: the reverse rate is regressed at noisy states $z_t\sim q_{t\given0}$ drawn from the true forward process, not at the states a model visits when sampling from its own trajectory. This is a consistent target. In the well-specified, infinite-capacity, infinite-data limit the learned rate matches the true reverse rate at \emph{every} state, so rollout never leaves $q_t$ and no drift occurs. Sampling drift is therefore a finite-capacity \emph{generalization} gap (the learned rate extrapolates poorly to self-generated, low-$q_t$ states) rather than a defect of the objective, a more precise statement than the usual reading of exposure bias as objective mismatch. The clean results here, namely the Bregman projection optimum, the exact distance to the oracle, and the entropy floor, all presuppose teacher-forced states; objectives or samplers that instead target self-generated states gain robustness to drift but lose these exact characterizations.

Two limitations point to the natural next steps. First, our validation is on a controlled, exactly-solvable model, chosen so that every oracle quantity is available in closed form; confirming that the coordinate-conversion gains persist at model scale is the natural next step, and the concurrent uniform-diffusion results of~\cite{gourevitch2026udm} already provide such evidence at that scale for the uniform special case. Second, the reverse-rate/factorization split (\S\ref{subsec:sequences}) isolates two complementary and separately addressable error sources: reverse-rate error at off-distribution states, reducible through drift-aware training or added capacity and data; and factorization error at finite step count, reducible through corrector or self-correcting samplers. Scaling the convert-or-pay comparison across the full denoiser/cavity/score family, and across noising processes, is a concrete program our dictionary makes precise. The single operational rule it leaves the practitioner is simple: learn a head in the coordinate its loss optimizes, or convert between coordinates analytically before you sample.

\clearpage
\bibliographystyle{alphaurl}
\bibliography{ref}

\end{document}